\newcommand{\cmark}{\ding{51}}%
\newcommand{\xmark}{\ding{55}}%
\newcommand{\compactcaption}[1]{\vspace{-0.5em}\caption{#1}\vspace{-0.5em}}
\titlespacing*{\section}{1pt}{3.5pt}{2pt}
\titlespacing*{\subsection}{1pt}{3pt}{1.5pt}
\titlespacing*{\subsubsection}{1pt}{3pt}{1.5pt}
\patchcmd{\ttlh@hang}{\parindent\z@}{\parindent\z@\leavevmode}{}{}
\patchcmd{\ttlh@hang}{\noindent}{}{}{}
\title{Exploring Connections Between Active Learning and Model Extraction}
\author[1]{Varun Chandrasekaran\thanks{Corresponding author: chandrasekaran@cs.wisc.edu}}
\author[3]{Kamalika Chaudhuri}
\author[2]{Irene Giacomelli}
\author[1]{Somesh Jha}
\author[3]{Songbai Yan}
\affil[1]{University of Wisconsin-Madison}
\affil[2]{Protocol Labs}
\affil[3]{University of California San Diego}
\date{\today}
\begin{document}
\maketitle
\newcommand{\myparagraph}[1]{\vspace{0.1in}\noindent\textbf{#1}}
\newcommand{\myparnovspace}[1]{\vspace{0.0in}\noindent\textbf{#1}}
\newcommand{\myvspacenopar}{\vspace{0.1in}\noindent }

\newtheorem{prop}{Proposition}
\newtheorem{fact}{Fact}
\newtheorem{lem}{Lemma}
\newtheorem*{notation*}{Notation}
\newtheorem*{rem*}{Remark}
\newtheorem{rem}{Remark}
\newcommand{\One}{\mathds{1}}

\newtheorem{theorem}{Theorem}
\newtheorem{proposition}{Proposition}
\newtheorem{lemma}{Lemma}

\theoremstyle{definition}
\newtheorem{definition}{Definition}
\newtheorem{experiment}{Experiment}

\theoremstyle{remark}
\newtheorem{example}{Example}
\newtheorem{remark}{Remark}

\newcommand{\ie}{\textit{i.e.}\@\xspace}
\newcommand{\eg}{\textit{e.g.}\@\xspace}
\newcommand{\etal}{\textit{et al.}\@\xspace}
\newcommand{\R}{\ensuremath{\mathbb{R}}}
\newcommand{\Z}{\ensuremath{\mathbb{Z}}}
\newcommand{\E}{\ensuremath{\mathbb{E}}}
\newcommand{\D}{\ensuremath{\mathcal{D}}}
\newcommand{\Q}{\ensuremath{\mathbb{Q}}}
\newcommand{\Y}{\ensuremath{\mathbf{Y}}}
\newcommand{\X}{\ensuremath{\mathbf{X}}}
\newcommand{\F}{\ensuremath{\mathcal{F}}}
\renewcommand{\L}{\ensuremath{\mathcal{L}}}
\renewcommand*{\O}{\ensuremath{\mathcal{O}}}
\newcommand{\A}{\ensuremath{\mathcal{A}}}
\newcommand{\Ser}{\ensuremath{S}}
\newcommand{\Exp}{\ensuremath{\textup{\texttt{Exp}}}}
\newcommand{\uErr}{\ensuremath{\textup{Err}_u}}
\newcommand{\gErr}{\ensuremath{\textup{Err}_2}}
\newcommand{\Err}{\ensuremath{\textup{Err}}}
\newcommand{\sign}{\ensuremath{\textup{sign}}}
\newcommand{\negl}{\ensuremath{\textup{negl}}}
\newcommand{\tr}{\top}
\newcommand{\at}{\makeatletter @\makeatother}
\newcommand{\qquote}[1]{``#1''}

\newcommand{\Irene}[1]{\textcolor{blue}{[\textbf{Irene}: #1]}}
\newcommand{\Varun}[1]{\textcolor{green}{[\textbf{Varun}: #1]}}
\newcommand{\Somesh}[1]{\textcolor{purple}{[\textbf{Somesh}: #1]}}
\newcommand{\kc}[1]{\textcolor{orange}{[\textbf{Kamalika}: #1]}}
\newcommand{\Songbai}[1]{\textcolor{brown}{[\textbf{Songbai}: #1]}}
\newcommand{\todo}[1]{\textcolor{red}{[\textbf{ToDo}: #1]}}


\begin{abstract}	

Machine learning is being increasingly used by individuals, research
institutions, and corporations. This has resulted in the surge of
Machine Learning-as-a-Service (MLaaS) - cloud services that provide
(a) tools and resources to learn the model, and (b) a user-friendly
query interface to access the model. However, such MLaaS systems raise
privacy concerns such as \emph{model extraction}. In model
extraction attacks, adversaries maliciously exploit the query
interface to {\em steal} the model. More precisely, in a model
extraction attack, a good approximation of a sensitive or proprietary
model held by the server is extracted (\ie learned) by a dishonest
user who interacts with the server only via the query interface. This
attack was introduced by Tram\`{e}r \etal at the 2016 USENIX Security
Symposium, where practical attacks for various models were shown. We
believe that better understanding the efficacy of model extraction
attacks is paramount to designing secure MLaaS systems. To that end,
we take the first step by (a) formalizing model extraction and
discussing possible defense strategies, and (b) drawing parallels
between model extraction and established area of \emph{active
  learning}. In particular, we show that recent advancements in the
active learning domain can be used to implement powerful model extraction
attacks and investigate possible defense strategies.

\end{abstract}

\section{Introduction}

Advancements in various facets of machine learning has made it an
integral part of our daily life. However, most real-world machine
learning tasks are resource intensive. To that end, several cloud
providers, such as Amazon, Google, Microsoft, and BigML offset the
storage and computational requirements by providing {\em Machine
  Learning-as-a-Service (MLaaS)}. A MLaaS server offers support for
both the training phase, and a query interface for accessing the
trained model. The trained model is then queried by other users on
chosen instances (refer Fig.~\ref{fig:1}). Often, this is implemented
in a pay-per-query regime \ie the server, or the model owner via the
server, charges the the users for the queries to the model. Pricing
for popular MLaaS APIs is given in Table~\ref{aux}. 

Current research is focused at improving the performance of training
algorithms and of the query interface, while little emphasis is placed
on the related security aspects. For example, in many real-world
applications, the trained models are privacy-sensitive - a model can
{\sf (a)} leak sensitive information about training data
\cite{AMSVVF15} during/after training, and {\sf (b)} can itself have
commercial value or can be used in security applications that assume
its secrecy (\eg, spam filters, fraud detection
etc. \cite{LM05,HJNRT11,SL14}). To keep the models private, there has
been a surge in the practice of \emph{oracle access}, or black-box
access. Here, the trained model is made available for prediction but
is kept secret. MLaaS systems use oracle access to balance the
trade-off between privacy and usability.

\begin{table}[H]
\centering
\small\addtolength{\tabcolsep}{-1pt}
\begin{tabular}{p{2.5cm}p{1.5cm}p{1.5cm}p{1.5cm}}
\toprule
{\bf Models}	  &{\bf Google} & {\bf Amazon} & {\bf Microsoft} \\
\midrule
		  {\bf \textbullet \  DNNs}  & {\footnotesize Confidence Score} & \xmark & {\footnotesize Confidence Score}\\
		  {\bf \textbullet \  Regression} & {\footnotesize Confidence Score} & {\footnotesize Confidence Score} & {\footnotesize Confidence Score}\\
		  {\bf \textbullet \  Decision trees} & {\footnotesize Leaf Node} & \xmark & {\footnotesize Leaf Node} \\
		  {\bf \textbullet \  Random forests} & {\footnotesize Leaf Node} & \xmark & {\footnotesize Leaf Node} \\
		  {\bf \textbullet \ Binary $\&$ n-ary classification} & {\footnotesize Confidence Score} & {\footnotesize Confidence Score} & {\footnotesize Confidence Score}\\ 
		  {\bf } & & & \\
\midrule 
		  {\bf \textbullet \  Batch}    & $\$0.093^*$        & $\$0.1$        &  $\$0.5$  \\
		  {\bf \textbullet \  Online}   & $\$0.056^*$        & $\$0.0001$        &  $\$0.0005$   \\ 
\bottomrule
\end{tabular}
\compactcaption{\footnotesize Pricing, and auxiliary information shared. $*$ Google’s pricing model is per node per hour. Leaf node denotes the exact leaf (and not an internal node) where the computation halts, and \xmark indicates the absence of support for the associated model.}
\label{aux}
\end{table}


Despite providing oracle access, a broad suite of attacks continue to
target existing MLaaS systems \cite{brendel2017decision}. For
example, membership inference attacks attempt to determine if a given
data-point is included in the model's training dataset only by
interacting with the MLaaS interface (\eg
\cite{shokri2017membership}). In this work, we focus on {\em model
  extraction attacks}, where an adversary makes use of the MLaaS query
interface in order to {\em steal} the proprietary model (\ie learn the
model or a good approximation of it). In an interesting paper,
Tram\`{e}r \etal \cite{TZJRR16}, show that many commonly used MLaaS
interfaces can be exploited using only few queries to recover a
model's secret parameters. Even though model extraction attacks are
empirically proven to be feasible, their work consider interfaces that
reveal auxiliary information, such as confidence values together with
the prediction output. Additionally, their work does not formalize
model extraction. We believe that such formalization is paramount for
designing secure MLaaS that are resilient to aforementioned
threats. In this paper, we take the first step in this direction. 
The main contributions of the paper appear in boldfaced captions.

\noindent
{\bf Model Extraction $\approx$ Active Learning.}  The key observation
guiding our formalization is that the process of model extraction is
very similar to \emph{active learning} \cite{settlesactive}, a special
case of semi-supervised machine learning. An active learner learns an
approximation of a labeling function $f^*$ through repetitive
interaction with an oracle, who is assumed to know $f^*$. These
interactions typically involve the learner sending an instance $x$ to
the oracle, and the oracle returning the label $y=f^*(x)$ to the
learner. Since the learner can choose the instances to be labeled, the
number of data-points needed to learn the labeling function is often
much lower than in the normal supervised case.  Similarly, in model
extraction, the adversary uses a strategy to query a MLaaS server with
the following goals: {\sf (a)} to successfully steal (\ie learn) the
model (\ie labeling function) known by the server (\ie oracle), in
such a way as to {\sf (b)} minimize the number of queries made to the
MLaaS server, as each query costs the adversary a fixed dollar
value.

While the overall process of active learning mirrors the
general description of model extraction, the entire spectrum of active
learning can not be used to study model extraction. Indeed, some
scenarios (eg, PAC active learning) assume that the query instances
are sampled from the actual input distribution. However, an attacker
is not restricted to such a condition and can query any instance. For
this reason, we believe that the query synthesis framework of
active learning, where the learner has the power to generate arbitrary
query instances best replicates the capabilities of the adversary in
the model extraction framework. Additionally, the query synthesis
scenario ensures that we make no assumptions about the adversary's
prior knowledge.



\noindent
{\bf Powerful attacks with no auxiliary information.}  By casting
model extraction as query synthesis active learning, we are able to
draw concrete similarities between the two. Consequently, we are able
to use algorithms and techniques from the active learning community to
perform powerful model extraction attacks, and investigate possible
defense strategies. In particular, we show that: query synthesis
active learning algorithms can be used to perform model extraction on both
linear and non-linear classifiers with {\em no auxiliary information}.  Moreover, our
evaluation shows that our attacks are better than the classic attacks,
such as by Lowd and Meek~\cite{LM05}, which have been widely used in
the security community. Our approach based on active selection also improves upon existing approaches to extract kernel SVMs (see Section~\ref{sec:implementation}).

\noindent
{\bf No ``free lunch'' for defense.}  Simple defense strategies
such as changing the prediction output with constant and small
probability are not effective. However, defense strategies that
change the prediction output depending on the instances that are being
queried, such as the work of Alabdulmohsin \etal \cite{CIKM-14}, are
more robust to extraction attacks implemented using existing query
synthesis active learning algorithms. However, in
Algorithm~\ref{alg:passive} of Section~\ref{sec:implementation}, we
show that this defense is not secure against traditional passive
learning algorithms. This suggests that there is ``no free lunch'' --
accuracy might have to be sacrificed to prevent model extraction. An
in-depth investigation of such a result will be interesting avenue for
future work.


\begin{figure} 
	\begin{framed}
	\vspace*{-.3cm}
		\begin{center}
			\begin{tikzpicture}
			\node (cloud) {\includegraphics[height=1cm]{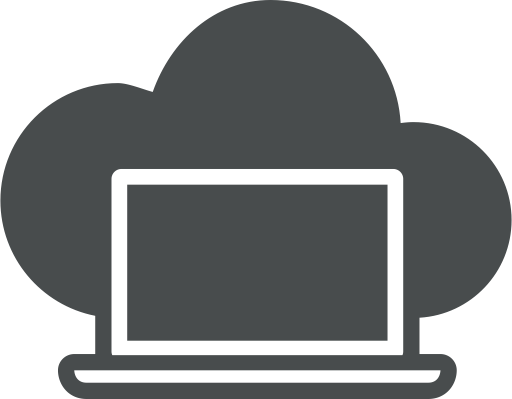}};
			\node[below of=cloud, align=center, node distance=1.1cm]{MLaaS Server\\(oracle)};
			\node[left of=cloud, node distance=3.3cm](owner){\includegraphics[height=1cm]{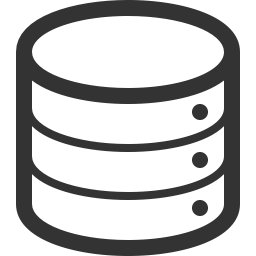}};
			\node[above of=owner, align=right]{Data owner};
			\node[right of=cloud, node distance=3.3cm](user){\includegraphics[height=1.5cm]{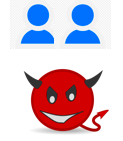}};
			\node[above of=user, align=center]{};
			\draw[-latex]([yshift=.2cm]user.west) -- node[above] {$x_1,\dots,x_q$} ([yshift=.2cm]cloud.east);		
			\draw[-latex]([yshift=-.4cm]cloud.east) -- node[above] {$y_1,\dots,y_q$} ([yshift=-.4cm]user.west);
			
			\draw[-latex](cloud.west) -- node[above] {training} (owner.east);		
			\draw[-latex](owner.east) -- node[above] {} (cloud.west);
			\end{tikzpicture}
		\end{center}
	\vspace*{-.6cm}
	\end{framed}
	\vspace*{-.2cm}
	\compactcaption{\footnotesize Model extraction can be envisioned as active
          learning. A data owner, with the help of a MLaaS server,
          trains a model $f^*$ on its data. The proprietary model is
          stored by the server, which also answers to queries from
          users (\ie, $y_i=f^*(x_i)$). In a model extraction attack, a dishonest user tries to
          exploit this interface to \qquote{steal} $f^*$ in the same
          way as a learner uses answer from a machine-learning oracle
          in order to learn $f^*$. }
	\label{fig:1}
\end{figure}

\noindent
\emph{Paper structure.}  We begin with a brief comparison between
passive machine learning and active learning in
Section~\ref{def:AL}. This allows us to introduce the notation used in
this paper, and review the state-of-the-art for active
learning. Section~\ref{sec:ME} focuses on the formalization of model
extraction attacks, casting it into the query synthesis active
learning framework. Section~\ref{sec:non-linear} discusses our algorithms used to extract non-linear classifiers (\ie kernel SVMs, decision trees, and random forests). Section~\ref{sec:defense} discusses possible defenses
strategies. Section~\ref{subsec:linear} reports our experimental
findings and demonstrates that query synthesis active learning can be
used to successfully perform model extraction, and evaluates different
defense strategies. Specifically, we observe that \$0.09 worth Amazon
queries are needed to extract most halfspaces when the MLaaS server
does not deploy any defense, and \$3.65 worth of queries are required
to learn a halfspace when it uses data-independent
randomization. Furthermore, our experiments in Section~\ref{subsec:nonlinear} show modifying adaptive retraining proposed by Tram\`{e}r \etal results in efficienct extraction attacks for non-linear models; we obtain 5$\times$-224$\times$ improvement for kernel SVMs, and comparable extraction efficiency for discrete models such as decision trees with {\em no auxiliary information}. Finally, we discuss some open issues in Section~\ref{sec:discussion}, which provides avenue for future work. Related work is discussed in Section~\ref{sec:related}, and we end the paper with some concluding remarks.



\section{Machine Learning}
\label{sec:ML}

In this section, we give a brief overview of machine learning, and
terminology we use throughout the paper. In particular, we summarize
the passive learning framework in subsection~\ref{sec:PL}, and focus
on active learning algorithms in subsection~\ref{sec:AL}.  A review of
the state-of-the-art of active learning algorithms is needed to
explicitly link model extraction to active learning and is presented
in Section~\ref{sec:ME}.

\subsection{Passive learning}
\label{sec:PL}
In the standard, passive machine learning setting, the learner has
access to a large labeled dataset and uses it in its entirety to learn
a predictive model from a given class. Let $\X$ be an instance space,
and $\Y$ be a set of labels. For example, in object recognition, $\X$
can be the space of all images, and $\Y$ can be a set of objects that
we wish to detect in these images. We refer to a pair
$(x,y)\in\X\times\Y$ as a {\em data-point or labeled instance} ($x$ is
the instance, $y$ is the label). Finally, there is a class of
functions $\F$ from $\X$ to $\Y$ called the \emph{hypothesis space}
that is known in advance. The learner's goal is to find a function
$\hat{f}\in\F$ that is a good predictor for the label $y$ given the
instance $x$, with $(x,y)\in\X\times\Y$. To measure how well $\hat{f}$
predicts the labels, a loss function $\ell$ is used. Given a
data-point $z=(x,y)\in\X\times\Y$, $\ell(\hat{f},z)$ measures the
``difference'' between $\hat{f}(x)$ and the true label $y$. When the
label domain $\Y$ is finite (classification problem), the $0$-$1$ loss
function is frequently used:
\begin{equation*}
\ell(\hat{f},z)=\begin{cases}
0, &\text{ if } \hat{f}(x)=y\\
1, &\text{ otherwise}
\end{cases}
\end{equation*}
If the label domain $\Y$ is continuous, one can use the square loss:
$\ell(\hat{f},z)=(\hat{f}(x)-y)^2$.

In the passive setting, the PAC (\emph{probably approximately
  correct}) learning \cite{valiant1984theory} framework is
predominantly used. Here, we assume that there is an underlying distribution
$\D$ on $\X \times \Y$ that describes the data; the learner has no
direct knowledge of $\D$ but has access to a set of training data $D$
drawn from it. The main goal in passive PAC learning is to use the
labeled instances from $D$ to produce a hypothesis
$\hat{f}$ such that its expected loss with respect to the probability
distribution $\D$ is low.  This is often measured through the
\emph{generalization error} of the hypothesis $\hat{f}$, defined by

\begin{equation} 
\Err_\D(\hat{f})=\E_{z\sim \D} [\ell(\hat{f},z)] \label{eq:gen_error}
\end{equation} 

More precisely, we have the following definition.
\begin{definition}[\underline{PAC passive learning \cite{valiant1984theory}}]
\label{def:passive}
An algorithm $A$ is a PAC passive learning algorithm for the
hypothesis class $\F$ if the following holds for any $\D$ on
$\X\times\Y$ and any $\varepsilon,\delta\in (0,1)$: If $A$ is given
$s_A(\varepsilon,\delta)$ i.i.d. data-points generated by $\D$, then
$A$ outputs $\hat{f}\in\F$ such that $\Err_\D(\hat{f})\leq
\min_{f\in\F}\Err_\D(f)+ \varepsilon$ with probability at least
$1-\delta$. We refer to $s_A(\varepsilon,\delta)$ as the \emph{sample
  complexity} of algorithm $A$.
\end{definition}

\begin{remark}[\underline{Realizability assumption}]
\label{rem:realizability}
In the general case, the labels are given together with the instances, and the factor $\min_{f\in\F}\Err_\D(f)$ depends on the hypothesis class. Machine learning literature refers to this as \emph{agnostic learning} or the non-separable case of PAC learning. However, in some applications, the labels themselves can be described using a labeling function $f^*\in \F$. In this case (known as \emph{realizable learning}), $\min_{f\in\F}\Err_\D(f)=0$ and the distribution $\D$ can be described by its marginal over $\X$. 
A  PAC passive learning algorithm $A$ in the realizable case takes $s_A(\varepsilon,\delta)$ i.i.d. instances generated by $\D$ and the corresponding labels generated using $f^*$, and outputs $\hat{f}\in\F$ such that $\Err_\D(\hat{f})\leq \varepsilon$ with probability at least $1-\delta$.
\end{remark}

\subsection{Active learning}
\label{sec:AL}
In the passive setting, learning an accurate model (\ie learning
$\hat{f}$ with low generalization error) requires a large number of
data-points. Thus, the labeling effort required to produce an accurate
predictive model may be prohibitive. In other words, the sample
complexity of many learning algorithms grows rapidly as $\varepsilon
\rightarrow 0$ (refer Example~\ref{ex:hs}). This has spurred interest
in learning algorithms that can operate on a smaller set of labeled
instances, leading to the emergence of \emph{active learning}. In active
learning, the learning algorithm is allowed to select a subset of
unlabeled instances, query their corresponding labels from an
annotator (a.k.a oracle) and then use it to construct or update a
model. How the algorithm chooses the instances varies widely. However,
the common underlying idea is that by actively choosing the
data-points used for training, the learning algorithm can
drastically reduce sample complexity.

Formally, an active learning algorithm is an interactive process
between two parties - the oracle $\O$ and the learner $\L$. The only
interaction allowed is through \emph{queries} - $\L$ chooses $x\in \X$
and sends it to $\O$, who responds with $y\in\Y$ (\ie, the oracle
returns the label for the chosen unlabeled instance). This value of
$(x,y)$ is then used by $\L$ to infer some information about the
labeling procedure, and to choose the next instance to query. Over
many such interactions, $\L$ outputs $\hat{f}$ as a predictor for
labels. We can use the generalization error~\eqref{eq:gen_error} to
evaluate the accuracy of the output $\hat{f}$. However, depending on
the query strategy chosen by $\L$, other types of error can be used.

There are two distinct scenarios for active learning: \emph{PAC active
  learning} and \emph{Query Synthesis (QS) active learning}.  In literature, QS active learning is also known as \emph{Membership Query
  Learning}, and we will use the two terms synonymously. 

\subsubsection{PAC active learning}
\label{sec:activePAC}
This scenario was introduced by Dasgupta in 2005
\cite{dasgupta2005coarse} in the realizable context and then
subsequently developed in following works (\eg,
\cite{angluin1987learning,DHM07,H07}).  In this scenario, the instances are sampled
according to the marginal of $\D$ over $\X$, and the learner, after
seeing them, decides whether to query for their labels or not. Since
the data-points seen by $\L$ come from the actual underlying
distribution $\D$, the accuracy of the output hypothesis $\hat{f}$ is
measured using the generalization error~\eqref{eq:gen_error}, as in
the classic (\ie, passive) PAC learning.

There are two options to consider for sampling data-points. In {\em
  stream-based sampling} (also called \emph{selective sampling}) , the
instances are sampled one at a time, and the learner decides whether
to query for the label or not on a per-instance
basis. \emph{Pool-based sampling} assumes that all of the instances
are collected in a static pool $S\subseteq \X$ and then the learner
chooses specific instances in $S$ and queries for their
labels. Typically, instances are chosen by $\L$ in a greedy fashion
using a metric to evaluate all instances in the pool. This is not
possible in stream-based sampling, where $\L$ goes through the data
sequentially, and has to therefore make decisions to query
individually. Pool-based sampling is extensively studied since it has
applications in many real-world problems, such as text classification,
information extraction, image classification and retrieval,
etc. \cite{McCallumN98}. Stream-based sampling represents scenarios
where obtaining unlabeled data-points is easy and cheap, but obtaining
their labels is expensive (\eg, stream of data is collected by a
sensor, but the labeling needs to be performed by an expert).

Before describing query synthesis active learning, we wish to
highlight the advantage of PAC active learning over passive PAC
learning (\ie the reduced sample complexity) for some hypothesis class
through Example~\ref{ex:hs}. Recall that this advantage comes from the
fact that an active learner is allowed to adaptively choose the data
from which it learns, while a passive learning algorithm learns from a
static set of data-points.

\begin{example}[\underline{PAC learning for halfspaces}]
\label{ex:hs}
Let $\F_{d,HS}$ be the hypothesis class of \emph{$d$-dimensional halfspaces}, used for binary classification. A function in $f_w\in\F_{d,HS}$ is described by a normal vector $w\in\R^d$ (\ie, $||w||_2=1$) and is defined by 
\begin{equation*}
f_w(x)=\sign(\langle w, x\rangle) \text{ for any }x\in\R^d
\end{equation*} 
where given two vectors $a,b\in\R^d$, then their product is defined as
$\langle a, b\rangle=\sum_{i=1}^{d}a_ib_i$. Moreover, if $x\in\R$,
then $\sign(x)=1$ if $x\geq 0$ and $\sign(x)=-1$ otherwise. A classic
result in passive PAC learning states that
$O(\frac{d}{\varepsilon}\log(\frac{1}{\varepsilon})+\frac{1}{\varepsilon}\log(\frac{1}{\delta}))$
data-points are needed to learn $f_w$
\cite{valiant1984theory}. On the other hand, several works propose
active learning algorithms for $\F_{d,HS}$ with sample
complexity\footnote{The $\tilde{O}$ notation ignores logarithmic factors
  and terms dependent on $\delta$.}
$\tilde{O}(d\log(\frac{1}{\varepsilon}))$ (under certain
distributional assumptions). For example, if the underlying
distribution is log-concave, there exists an active learning algorithm
with sample complexity $\tilde{O}(d \log (\frac{1}{\varepsilon}))$
\cite{BalcanBZ07,BalcanL13,ZC14}.  This general reduction in the
sample complexity for $\F_{d,HS}$ is easy to infer when $d=1$. In this
case, the data-points lie on the real line and their labels are a
sequence of $-1$'s followed by a sequence of $+1$'s. The goal is to
discover a point $w$ where the change from $-1$ to $+1$ happens. PAC
learning theory states that this can be achieved with
$\tilde{O}(\frac{1}{\varepsilon})$\footnote{More generally,
  $\tilde{O}(\frac{d}{\varepsilon})$ points.} points i.i.d.\ sampled
from $\D$. On the other hand, an active learning algorithm that uses a
simple binary search can achieve the same task with
$O(\log(\frac{1}{\varepsilon}))$ queries \cite{dasgupta2005coarse}
(refer Figure~\ref{fig:bs}).
\end{example} 

\begin{figure} 
	\begin{framed}
	\begin{equation*}
	f_w(x)=\begin{cases}
	-1 \text{ if } \langle w, x\rangle < -1\\
	+1 \text{ otherwise}
	\end{cases}
	\end{equation*}
	\begin{center}
		\begin{tikzpicture}
		\draw[-latex] (-3,0) -- (3,0) node[below right] {$\R$};
		\filldraw
		(-2.5,0) circle (1.5pt) node[font=\footnotesize,above] {$-1$} --
		(-1.3,0) circle (1.5pt) node[font=\footnotesize,above] {$-1$} --
		(-0.8,0) circle (1.5pt) node[font=\footnotesize,above] {$-1$}--
		(0,0) circle (1.5pt) node[font=\footnotesize,above] {$+1$} --
		(.7,0) circle (1.5pt) node[font=\footnotesize,above] {$+1$} --
		(1.2,0) circle (1.5pt) node[font=\footnotesize,above] {$+1$}--
		(2,0) circle (1.5pt) node[font=\footnotesize,above] {$+1$}--
		(2.5,0) circle (1.5pt) node[font=\footnotesize,above] {$+1$};
		\draw (-.5,.2) -- (-.5,-.2) node[anchor=north,fill=white] {$w^*$};
		\end{tikzpicture}
	\end{center}
	\end{framed}
	\compactcaption{\footnotesize Halfspace classification in dimension $1$.}
	\label{fig:bs}
	\vspace{-4mm}
\end{figure}

\subsubsection{Query Synthesis (QS) active learning}
\label{sec:QS}

In this scenario, the learner can request labels for any instance in
the input space $\X$, including points that the learner generates {\it de
novo}, independent of the distribution $\D$ (\eg, $\L$ can ask for
labels for those $x$ that have zero-probability of being sampled
according to $\D$). Query synthesis is reasonable for many problems,
but labeling such arbitrary instances can be awkward if the oracle is
a human annotator. Thus, this scenario better represents real-world
applications where the oracle is automated (\eg, results from
synthetic experiments \cite{king2009automation}). Since the
data-points are independent of the distribution, generalization error
is not an appropriate measure of accuracy of the hypothesis $\hat{f}$,
and other types of error are typically used. These new error
formulations depend on the concrete hypothesis class $\F$
considered. For example, if $\F$ is the class of boolean functions
from $\{0,1\}^n$ to $\{0,1\}$, then the \emph{uniform error} is
used. Assume that the oracle $\O$ knows $f^*\in\F$ and uses it as
labeling function (realizable case), then the uniform error of the
hypothesis $\hat{f}$ is defined as
\begin{equation*}
	\uErr(\hat{f})=\Pr_{x\sim \{0,1\}^n}[\hat{f}(x)\neq f^*(x)]
\end{equation*}
where $x$ is sampled uniformly at random from the instance space $\{0,1\}^n$.
Recent work~\cite{alabdulmohsin2015efficient,chen2017near}, for the
class of halfspaces $\F_{d,HS}$ (refer to Example~\ref{ex:hs}) use 
\emph{geometric error}. Assume that the true labeling function used by
the oracle is $f_{w^*}$, then the geometric error of the hypothesis
$f_w\in\F_{d,HS} $ is defined as
\begin{equation*}
\gErr(f_w)=||w^*-w||_2
\end{equation*}
where $||\cdot||_2$ is the 2-norm.

\vspace{2mm} 
In both active learning scenarios (PAC and QS), the learner needs to evaluate the ``usefulness'' of an
unlabeled instance $x$, which can either be generated de novo or
sampled from the given distribution, in order to decide whether to
query the oracle for the corresponding label. In the state of the art,
we can find many ways of formulating such \emph{query
  strategies}. Most of existing literature presents strategies where
efficient search through the hypothesis space is the goal (refer the
survey by Settles \cite{settlesactive}).
Another point of consideration for an active learner $\L$ is to decide
when to stop. This is essential as active learning is geared at
improving accuracy while being sensitive to new data acquisition cost
(\ie, reducing the query complexity). While one school of thought
relies on the {\em stopping criteria} based on the intrinsic measure
of stability or self-confidence within the learner, another believes
that it is based on economic or other external factors (refer
\cite[Section 6.7]{settlesactive}).

Given this large variety within active learning, we enhance the
standard definition of a learning algorithm and propose the definition
of an active learning system, which is geared towards model
extraction. Our definition is informed by the MLaaS APIs that we
investigated (more details are present in Table~\ref{aux}). 

\begin{definition}[\underline{Active learning system}]
	\label{def:AL}
	Let $\F$ be a hypothesis class with instance space $\X$ and
        label space $\Y$. An active learning system for $\F$ is given
        by two entities, the learner $\L$ and the oracle $\O$,
        interacting via membership queries: $\L$ sends to $\O$ an instance
        $x\in\X$; $\O$ answers with a label $y\in\Y$. We indicate via
        the notation $\O_{f^*}$  the realizable case
        where $\O$ uses a specific labeling function $f^*\in \F$, \ie
        $y=f^*(x)$. The behavior of $\L$ is described by the following
        parameters:
		
\begin{enumerate}
\itemsep0em
\item \emph{Scenario}: this is the rule that describes the generation
  of the input for the querying process (\ie which instances $x\in\X$
  can be queried). In the PAC scenario, the instances are sampled from
  the underlying distribution $\D$. In the query synthesis (QS)
  scenario, the instances are generated by the learner $\L$;

\item \emph{Query strategy}: given a specific scenario, the query strategy is the algorithm that adaptively decides if the label for a given instance $x_i$ is queried for, given that the queries $x_1,\dots, x_{i-1}$ have been answered already. In the query synthesis scenario, the query strategy also describes the procedure for instance generation.
\item \emph{Stopping criteria}: this is a set of considerations used by $\L$ to decide when it must stop asking queries. 
\end{enumerate}
Any system $(\L,\O)$ described as above is an active learning system for $\F$ if one of the following holds:
\begin{itemize}[label=-]
\item  {\it (PAC scenario)} For any $\D$ on $\X\times\Y$ and any $\varepsilon,\delta\in (0,1)$, if $\L$ is allowed to interact with $\O$ using  $q_\L(\varepsilon,\delta)$  queries, then $\L$ outputs $\hat{f}\in\F$ such that $\Err_\D(\hat{f})\leq \min_{f\in\F}\Err_\D(f)+\varepsilon$ with probability at least $1-\delta$. 
\item {\it (QS scenario)} Fix an error measure $\Err$ for the functions in $\F$. For any $f^*\in \F$, if $\L$ is allowed to interact with $\O_{f^*}$ using  $q_\L(\varepsilon,\delta)$  queries, then $\L$ outputs $\hat{f}\in\F$ such that $\Err(\hat{f})\leq \varepsilon$ with probability at least $1-\delta$. 
\end{itemize}
We refer to $q_\L(\varepsilon,\delta)$ as the \emph{query complexity} of $\L$.
\end{definition}

As we will show in the following section (in particular, refer subsection~\ref{sec:ALandME}), the query synthesis scenario is more appropriate in casting model extraction attack as active learning
\textcolor{black}{when we make no assumptions about the adversary's prior knowledge.}

 Note that, other types queries have been studied in literature. This includes the \emph{equivalence query}~\cite{angluin1987learning}. Here the learner can verify if a hypothesis is correct or not. We do not consider equivalence queries in our definition because we did not see any of the MLaaS APIs support them.

\section{Model Extraction}
\label{sec:ME}

In subsection~\ref{sec:MED}, we begin by formalizing the process of model
extraction. We then draw parallels between model
extraction and active learning in subsection~\ref{sec:ALandME}. We proceed to provide insight about extracting non-linear models in Section~\ref{}. 

\subsection{Model Extraction Definition}
\label{sec:MED}

We begin by describing the operational ecosystem of model extraction
attacks in the context of MLaaS systems. An entity learns a private
model $f^*$ from a public class $\F$, and provides it to the MLaaS
server. The server provides a client-facing query interface for
accessing the model for prediction. 
For example, in the case of logistic regression, the MLaaS server knows a
model represented by parameters $a_0,a_1,\cdots,a_d$. The client
issues queries of the form $x=(x[1],\cdots,x[d])\in\R^d$, and the
MLaaS server responds with $0$ if $(1+e^{-a(x)})^{-1} \leq 0.5$ and
$1$ otherwise, with $a(x)=a_0+\sum_{i=1}^{d} a_i x[i]$.

Model extraction is the process where an adversary exploits this
interface to learn more about
the proprietary model $f^*$. The adversary can be interested in
defrauding the description of the model $f^*$ itself (\ie, stealing
the parameters $a_i$ as in a reverse engineering attack), or in
obtaining an approximation of the model, say $\hat{f}\in\F$, that he
can then use for free for the same task as the original $f^*$ was
intended for. To capture the different goals of an adversary, we say
that the attack is successful if the extracted model is ``close enough''
to $f^*$ according to an \emph{error function} on $\F$ that is context
dependent. Since many existing MLaaS providers operate in a
pay-per-query regime, we use query complexity as a measure of
efficiency of such model extraction attacks.

Formally, consider the following experiment: an adversary $\A$, who
knows the hypothesis class $\F$, has oracle access to a proprietary
model $f^*$ from $\F$. This can be thought of as $\A$ interacting with
a server $\Ser$ that safely stores $f^*$. The interaction has several
rounds. In each round, $\A$ chooses an instance $x$ and sends it to
$\Ser$. The latter responds with $f^*(x)$. After a few rounds, $\A$
outputs a function $\hat{f}$ that is the adversary's candidate
approximation of $f^*$; the experiment considers $\hat{f}$ a good
approximation if its error with respect to the true function $f^*$
held by the server is less then a fixed threshold $\varepsilon$. The
error function $\Err$ is defined a priori and fixed for the extraction
experiment on the hypothesis class~$\F$.

\begin{experiment}[\underline{Extraction experiment}] 
	\label{exp:extractionC} Given a hypothesis class $\F=\{f:\X \rightarrow\Y\}$, fix an error function $\Err:\F \rightarrow \R$. Let $\Ser$ be a MLaaS server with the knowledge of a specific $f^*\in\F$, denoted by $\Ser(f^*)$. Let $\A$ be an adversary interacting with $\Ser$ with a maximum budget of $q$ queries. The extraction experiment $\Exp^\varepsilon_{\F}(\Ser(f^*),\A,q)$ proceeds as follows
\begin{enumerate}
	\item $\A$ is given a description of $\F$ and oracle access to $f^*$ through the query interface of $\Ser$. That is, if  $\A$ sends $x\in \X$ to $\Ser$, it gets back $y=f^*(x)$. After at most $q$ queries, $\A$ eventually outputs $\hat{f}$;
	\item  The output of the experiment is $1$ if 
	 $\Err(\hat{f})\leq \varepsilon$.	 Otherwise the output is $0$.
\end{enumerate}
\end{experiment}

\noindent
Informally, an adversary $\A$ is successful if with high probability the output of the extraction experiment is $1$ for a small value of $\varepsilon$ and a fixed query budget $q$. This means that $\A$ likely learns a good approximation of $f^*$ by only asking $q$ queries to the server. More precisely, we have the following definition.

\begin{definition}[\underline{Extraction attack}]
	\label{def:ext}
	Let $\F$ be a public hypothesis class and $\Ser$ an MLaaS server as explained before. We say that an adversary $\A$,  which interacts with $\Ser$, implements an \emph{$\varepsilon$-extraction attack} of complexity $q$ and confidence $\gamma$ against the class $\F$ if 
	\begin{equation*}
		\Pr[\Exp^\varepsilon_{\F}(\Ser(f^*),\A,q)=1]\geq \gamma
	\end{equation*}
	for any $f^*\in\F$. The probability is over the randomness of $\A$.
\end{definition}


In other words, in Definition~\ref{def:ext} the success probability of an adversary constrained by a fixed budget for queries  is explicitly lower bounded by the quantity $\gamma$. 


\vspace{1mm}
Before discussing the connection between model extraction and active learning, we provide an example of a hypothesis class that is easy to extract.

\begin{example}[\underline{Equation-solving attack for linear regression}]
\label{ex:eq}
Let $\F_{d,R}$ be the hypothesis class of regression models from $\R^d$ to $\R$. A function $f_a$ in this  class is described by $d+1$ parameters $a_0,a_1,\dots,a_d$ from $\R$ and defined by: for any $x\in\R^d$,
\begin{equation*}
	f_a(x)=a_0+\sum_{i=1}^d a_ix_i\,.
\end{equation*} 
Consider the adversary $\A_{ES}$ that queries  $x^1,\dots,x^{d+1}$ ($d+1$ instances from $\R^d$) chosen in such a way that the set of vectors  $\{(1,x^i)\}_{i=1,\dots,d+1}$ is linearly independent in $\R^{d+1}$. $\A_{ES}$ receives the corresponding $d+1$ labels, $y_1,\dots,y_{d+1}$, and can therefore solve the linear system given by the equations $f_a(x^i)=y_i$. Assume that $f_{a^*}$ is the function known by the MLaaS server (\ie, $y_i=f_{a^*}(x^i)$). It is easy to see that if we fix $\Err(f_a)=||a^*-a||_1$, then  $\Pr[\Exp^0_{\F_{d,R}}(\Ser(f_{a^*}),\A_{ES},d+1)=1]=1$. That is, $\A_{ES}$ implements $0$-extraction of complexity $d+1$ and confidence~$1$.

While our model operates in the black-box setting, we discuss other attack models in more detail in Remark~\ref{rem:non-blackbox}
\end{example}

\subsection{Active Learning and Extraction}
\label{sec:ALandME}

From the description presented in the Section~\ref{sec:ML}, it is
clear that model extraction in the MLaaS system context closely
resembles active learning. The survey of active learning in
subsection~\ref{sec:AL} contains a variety of algorithms and scenarios
which can be used to implement model extraction attacks (or to study
its impossibility).



\textcolor{black}{However, different scenarios of active learning impose different assumptions on the adversary's prior knowledge. Here, we focus on the general case of model extraction with an adversary $\A$ that has no knowledge of the	data distribution $\D$. In particular, such an adversary is not restricted to only considering instances $x\sim\D$ to query.} For this reason, we
believe that query synthesis (QS) is the right active learning
scenario to investigate in order to draw a meaningful parallelism with
model extraction. Recall that the query synthesis is the only
framework where the query inputs can be generated de novo (\ie, they
do not conform to a distribution).

 
\noindent
{\bf \underline{Observation 1}:} Given a hypothesis class $\F$ and an error function
$\Err$, let $(\L,\O$) be an active learning system for $\F$ in the QS
scenario (Definition~\ref{def:AL}). If the query complexity of $\L$ is
$q_\L(\varepsilon,\delta)$, then there exists and adversary $\A$ that
implements $\varepsilon$-extraction with complexity
$q_\L(\varepsilon,\delta)$ and confidence $1-\delta$ against the
class~$\F$.

The reasoning for this observation is as follows:
Consider the adversary $\A$ that is the learner $\L$ (\ie, $\A$ deploys the query strategy procedure and the stopping criteria that describe $\L$). This is possible because $(\L,\O)$ is in the QS scenario and $\L$ is independent of any underlying (unknown) distribution. Let  $q=q_\L(\varepsilon,\delta)$ and observe that
\begin{align*}
	&\Pr[\Exp^\varepsilon_{\F}(\Ser(f^*),\A,q)=1]=\\
	&\Pr[\A \text{ outputs } \hat{f}\text{ and } \Err(\hat{f})\leq \varepsilon]=\\
	&\Pr[\L \text{ outputs } \hat{f}\text{ and } \Err(\hat{f})\leq \varepsilon]\geq 1-\delta
\end{align*} 

Our observation states that any active learning algorithm in the QS
scenario can be used to implement a model extraction
attack. Therefore, in order to study the security of a given
hypothesis class in the MLaaS framework, we can use known techniques
and results from the active learning literature. Two examples of this
follow. 

\begin{example}[\underline{Decision tree extraction via QS active learning}]
Let $\F_{n,BF}$ denote the set of boolean functions with domain $\{
0,1 \}^n$ and range $\{-1,1\}$. The reader can think of $-1$ as $0$ and $+1$ as
$1$. Using the range of $\{ -1, +1 \}$ is very common in the
literature on learning boolean functions. An
interesting subset of $\F_{n,BF}$ is given by the functions that can
be represented as a boolean decision tree.  A {\it boolean decision
  tree} $T$ is a labeled binary tree, where each node $v$ of the tree
is labeled by $L_v \subseteq \{ 1, \cdots, n\}$ and has two outgoing
edges. Every leaf in this tree is labeled either $+1$ or $-1$. Given
an $n$-bit string $x = (b_1,\cdots,b_n), b_i \in \{0,1\}$ as input,
the decision tree defines the following computation: the computation
starts at the root of the tree $T$. When the computation arrives at an
internal node $v$ we calculate the parity of $\sum_{i \in L_v}b_i$ and
go left if the parity is $0$ and go right otherwise. The value of the
leaf that the computation ends up in is the value of the function.  We
denote by $\F^m_{n,BT}$ the class of boolean decision trees with
$n$-bit input and $m$ nodes.  Kushilevitz and
Mansour~\cite{Kushilevitz-Mansour} present an active learning
algorithm for the class $\F_{n,BF}$ that works in the QS
scenario. This algorithm utilizes the uniform error to determine the
stopping condition (refer subsection~\ref{sec:AL}). The authors claim
that this algorithm has practical efficiency when restricted to the
classes $\F^m_{n,BT}\subset \F_{n,BF}$ for any $m$.  In particular, if
the active learner $\L$ of \cite{Kushilevitz-Mansour} interacts with
the oracle $\O_{T^*}$ where $T^*\in \F^m_{n,BT}$, then $\L$ learns
$g\in\F_{n,BF}$ such that $\Pr_{x\sim\{0,1\}^n} [g(x) \not=T^*(x)]
\leq \varepsilon$ with probability at least $1-\delta$ using a number
of queries polynomial in $n$, $m$, $\frac{1}{\varepsilon}$ and
$\log(\frac{1}{\delta})$. Based on Observation 1, this directly
translates to the existence of an adversary that implements
$\varepsilon$-extraction with complexity polynomial in $n$, $m$,
$\frac{1}{\varepsilon}$ and confidence $1-\delta$ against the class
$\F^m_{n,BT}$.

Moreover, the algorithm of \cite{Kushilevitz-Mansour} can be extended
to (a) boolean functions of the form $f:\{0,1,\dots,k-1\}^n\rightarrow
\{-1,+1\}$ that can be computed by a polynomial-size $k$-ary decision
tree\footnote{A $k$-ary decision tree is a tree in which each inner
  node $v$ has $k$ outgoing edges.}, and (b) regression trees (\ie,
the output is a real value from $[0,M]$). In the second case, the
running time of the learning algorithm is polynomial in $M$ (refer
Section~6 of~\cite{Kushilevitz-Mansour}).  Note that the attack model
considered here is a stronger model than that considered
by~\cite{TZJRR16} because the attacker/learner does not get any
information about the internal path of the decision tree (refer
Remark~\ref{rem:non-blackbox}).
\end{example}

\begin{example}[\underline{Halfspace extraction via QS active learning}]
\label{ex:hs2}
Let $\F_{d,HS}$ be the hypotheses class of $d$-dimensional halfspaces
defined in Example~\ref{ex:hs}. Alabdulmohsin
\etal~\cite{alabdulmohsin2015efficient} present a spectral algorithm
to learn a halfspace in the QS scenario that, in practice,
outperformed earlier active learning strategies in the PAC
scenario. They demonstrate, through several experiments that their
algorithm learns $f_w\in \F_{d,HS}$ such that $\|w-w^*\|_2\leq
\varepsilon$ with approximately $2d\log(\frac{1}{\varepsilon})$
queries, where $f_{w^*}\in \F_{d,HS}$ is the labeling function used by
$\O$.  It follows from Observation 1 that an adversary
utilizing this algorithm implements $\varepsilon$-extraction against
the class $\F_{d,HS}$ with complexity
$\O(d\log(\frac{1}{\varepsilon}))$ and confidence $1$. We validate the
practical efficacy of this attack in Section~\ref{sec:implementation}.
\end{example}

\begin{remark}[\underline{Extraction with auxiliary information}]
\label{rem:non-blackbox}

Observe that we define model extraction for only those MLaaS servers
that return {\em only the label value} $y$ for a {\em well-formed
  query} $x$ (\ie in the oracle access setting). A weaker model (\ie,
one where attacks are easier) considers the case of MLaaS servers
responding to a user's query $x$ even when $x$ is incomplete (\ie with
missing features), and returning the label $y$ along with some
auxiliary information. The work of Tram\`{e}r \etal~\cite{TZJRR16}
proves that model extraction attacks in the presence of such
\qquote{leaky servers} are feasible and efficient (\ie low query
complexity) for many hypothesis classes (\eg, logistic regression,
multilayer perceptron, and decision trees).  In particular, they
propose an \emph{equation solving attack} \cite[Section 4.1]{TZJRR16}
that uses the confidence values returned by the MLaaS server together
with the labels to steal the model parameters. For example, in the case of logistic
regression, the MLaaS server knows the parameters $a_0,a_1,\dots, a_d$
and responds to a query $x$ with the label $y$ ($y=0$ if
$(1+e^{-a(x)})\leq0.5$ and $y=1$ otherwise) \emph{and the value
  $a(x)$} as confidence value for $y$. Clearly, the knowledge of the
confidence values allows an adversary to implement the same attack we
describe in Example~\ref{ex:eq} for linear regression models.
In \cite[Section 4.2]{TZJRR16}, the authors describes a \emph{path-finding attack}
that use the leaf/node identifier returned by the server, even for
incomplete queries, to steal a decision tree. These attacks are very
efficient (\ie, $d+1$ queries are needed to steal a $d$-dimensional
logistic regression model); however, their efficiency heavily relies
on the presence of the various forms of auxiliary information provided
by the MLaaS server. 
While the work in \cite{TZJRR16} performs preliminary exploration of attacks in the black-box setting \cite{cohn1994improving, LM05}, 
it does not consider more recent, and efficient algorithms in the QS scenario. Our work explores
this direction through a formalization of the model extraction
framework that enables understanding the possibility of
extending/improving the active learning attacks presented in
\cite{TZJRR16}. Furthermore, having a better understanding of model
extraction attack and its unavoidable connection with active learning
is paramount for designing MLaaS systems that are resilient to model
extraction.


\end{remark}

\section{Non-linear Classifiers}
\label{sec:non-linear}

This section focuses on model extraction for two important non-linear
classifiers: kernel SVMs and discrete models (\ie decision trees and
random forests). For kernel SVMs our method is a combination of the
adaptive-retraining algorithm introduced by Tram\`{e}r \etal and the
active selection strategy from classic literature on active
learning of kernel SVMs~\cite{LASVM}. For discrete models our
algorithm is based on the importance weighted active learning (IWAL)
as described in~\cite{iwal:2010}. Note that decision trees
for general labels (\ie non-binary case) and random forests was not
discussed in~\cite{iwal:2010}.

\subsection{Kernel SVMs}
\label{subsec:kSVM}

In kernel SVMs (kSVMs), there is a kernel $K:\X\times\X\rightarrow\R$ associated with the
SVM. Some of the common kernels are polynomials and radial-basis
functions (RBFs). If the kernel function $K(.,.)$ has some special
properties (required by classic theorem of Mercer~\cite{minh2006mercer}), then $K(.,.)$ can
be replaced with $\Phi (.)^T \Phi(.)$ for a projection/feature function $\Phi$. In
the feature space (the domain of $\Phi$) the optimization problem is
as follows\footnote{we are using the formulation for soft-margin kSVMs}:
\[
\begin{array}{c}
  \mbox{min}_{w,b} \| w \|^2 + C \sum_{i=1}^n \eta_i \\
  \mbox{such that for $1 \leq i \leq n$} \\
  y_i \hat{y}(x_i) \; \geq \; 1 - \eta_i \\
  \eta_i \; \geq 0
\end{array}
\]
In the formulation given above, $\hat{y}(x)$ is equal to $w^T \Phi(x) +
b$. Recall that prediction of the kSVM is the sign of $\hat{y}(x)$, so
$\hat{y}(x)$ is the ``pre sign'' value of the prediction. Note that
for some kernels (\eg RBF) $\Phi$ is infinite dimensional, so one
generally uses the ``kernel trick''\ie one solves the dual of
the above problem and obtains a {\it kernel expansion}, so that
\begin{eqnarray*}
  \hat{y}(x) & = & \sum_{i=1}^n \alpha_i K(x,x_i) \; + \; b
\end{eqnarray*}
The vectors $x_1,\cdots,x_n$ are called support vectors.
We assume that hyper-parameters of
the kernel ($C, \eta$) are known; one can extract the hyper-parameters for
the RBF kernel using the extract-and-test approach as Tram\`{e}r \etal
Note that if $\Phi$ is finite dimensional, we can use an algorithm (including active learning strategies)
for linear classifier by simply working in the feature space (\ie
extracting the domain of $\Phi(\cdot)$). However, there is a subtle issue here, which
was not addressed by Tram\`{e}r \etal We need to make sure that if a query
$y$ is made in the feature space, it is ``realizable'' (\ie there exists
a $x$ such that $\Phi (x) = y$). Otherwise the learning algorithm
is not sound. 

Next we describe our model-extraction algorithm for kSVMs with kernels
whose feature space is infinite dimension (\eg RBF or Laplace
kernels). Our algorithm is a modification of the adaptive training
approach from Tram\`{e}r \etal Our discussion is specialized to kSVMs with
RBFs, but our ideas are general and are applicable in other
contexts.

\noindent
{\bf Extended Adaptive Training (EAT):} EAT proceeds in multiple rounds. In each round we
construct $h$ labeled instances. In the initial stage ($t=0$) we draw $r$
instances $x_1,\cdots,x_r$ from the uniform distribution, query their
labels, and create an initial model $M_0$. Assume that we are at round
$t$, where $t > 0$, and let $M_{t-1}$ be model at time $t-1$. Round
$t$ works as follows: create $h$ labeled instances using a strategy
$St^{\mathcal{T}} (M_{t-1},h)$ (note that the strategy $St$ is oracle
access to the teacher, and takes as parameters model from the previous
round and number of labeled instances to be generated). Now we train
$M_{t-1}$ on the instances generated by $St^{\mathcal{T}} (M_{t-1},h)$ and obtain
the updated model $M_t$. We keep iterating using the strategy
$St^{\mathcal{T}} (\cdot,\cdot)$ until the query budget is
satisfied. Ideally, $St^{\mathcal{T}} (M_{t-1},h)$ should be instances
that the model $M_{t-1}$ is {\em least confident} about or closest to the decision
boundary. 

Tram\`{e}r \etal use line search as their strategy $St^{\mathcal{T}}
(M_{t-1},h)$, which can lead to several queries (each step in the binary
search leads to a query). We generate the initial model $M_0$ as in
Tram\`{e}r \etal and then our strategy differs.  Our strategy
$St^{\mathcal{T}}(M_{t-1},1)$ (note that we only add one labeled
sample at each iteration) works as follows: we generate $k$ random
points $x_1,\cdots,x_k$ and then compute $\hat{y_i}(x_i)$ for each
$x_i$ (recall that $\hat{y_i}(x_i)$ is the ``pre sign'' prediction of
$x_i$ on the SVM $M_{t-1}$. We then pick $x_i$ with minimum $\mid
\hat{y_i}(x_i) \mid$ and query for its label and retrain the model
$M_{t-1}$ and obtain $M_t$. This strategy is called {\it active
  selection} and has been used for active learning of
SVMs~\cite{LASVM}. The argument for why this strategy finds the point
closest to the boundary is given in~\cite[\S 4]{LASVM}. There are
other strategies described in~\cite{LASVM}, but we found active
selection to perform the best.

\subsection{Decision Trees and Random Forests}
\label{subsec:iwal}

Next we will describe the idea of {\it importance weighted active
  learning (IWAL)}~\cite{iwal:2010}. Our discussion will be specialized to decision
trees and random forests, but the ideas that are described are
general.

Let $\mathcal{H}$ be the hypothesis class (\ie space of decision
trees or random forests), $\X$ is the space of data, and
$\Y$ is the space of labels.  The active learner has a pool
of unlabeled data $x_1,x_2,\cdots$. For $i > 1$, we denote by
$X_{1:i-1}$ the sequence $x_1,\cdots,x_{i-1}$. After having processed the sequence
$X_{1:i-1}$, a coin is flipped with probability $p_i \in [0,1]$ and
if it comes up heads, the label of $x_i$ is queried. We also define a set
$S_i$ ($S_0 = \emptyset$) recursively as follows: If the label for
$x_i$ is not queried, then $S_i \; = \; S_{i-1}$; otherwise $S_i \; =
\; S_{i-1} \cup {(x_i,y_i,p_i)}$. Essentially the set $S_i$ keeps the
information (\ie data, label, and probability of querying) for all
the datapoints whose label was queried. Given a hypothesis $h \in \mathcal{H}$,
we define $err(h,S_n)$ as follows:
\begin{eqnarray}
  err(h,S_n) & = & \frac{1}{n} \sum_{(x,y,p) \in S_n} \frac{1}{p} 1_{h(x) \not= y}
\end{eqnarray}
Next we define the following quantities (we assume $n \geq 1$):
\begin{eqnarray*}
  h_n & = & \mbox{argmin} \{ err(h,S_{n-1}) \; : \; h \in \mathcal{H} \} \\
  h'_n & = & \mbox{argmin} \{ err(h,S_{n-1}) \; : \; h \in \mathcal{H} \wedge h(X_n) \not= h_n (X_n) \} \\
  G_n & = & err(h'_n,S_{n-1}) - err(h_n,S_{n-1})
\end{eqnarray*}
Recall that $p_n$ is the probability of querying for the label for
$X_n$, which is defined as follows:
\[
p_n = \left\{ \begin{array}{ll}
  1 & \mbox{if $G_n \leq \mu(n)$} \\
  s(n) & \mbox{otherwise}
  \end{array}
  \right.
  \]
  Where $\mu (n) = \sqrt{ \frac{c_0 \log n}{n-1} } + \frac{c_0 \log n}{n-1}$, and $s(n) \in (0,1)$
  is the positive solution to the following equation:
  \begin{eqnarray*}
    G_n & = & \left( \frac{c_1}{\sqrt{s} - c_1 + 1} \right) \cdot   \sqrt{ \frac{c_0 \log n}{n-1} }  +
    \left( \frac{c_2}{\sqrt{s} - c_2 + 1} \right) \cdot  \frac{c_0 \log n}{n-1}
  \end{eqnarray*}
  Note the dependence on constants/hyperparameters $c_0$, $c_1$ and
  $c_2$, which are tuned for a specific problem (\eg in their
  experiments for decision trees~\cite[\S 6]{iwal:2010} the
  authors set $c_0=8$ and $c_1 = c_2 = 1$).

  \noindent
      {\bf Decision Trees:} Let $DT$ be any algorithm to create a
      decision tree. We start with an initial tree $h_0$ (this can
      constructed using a small, uniformly sampled dataset whose labels are queried). Let
      $h_n$ be the tree at step $n-1$. The question is: how to
      construct $h'_n$? Let $x_n$ be the $n^{th}$ datapoint and $\Y
      = \{ l_1,\cdots,l_r \}$ be the set of labels. Let $h_n (x_n) =
      l_j$. Let $h_n (l)$ be the modification of tree $h_n$ such that
      $h_n (l)$ produces label $l \not= h_n (x_n)$ on datapoint $x_n$. Let
      $h'_n$ be the tree in the set $\{ h_n (l) \; | \; l \in \Y-\{
      l_j \} \}$ that has minimum $err(\cdot,S_{n-1})$. Now we can
      compute $G_n$ and the algorithm can proceed as described before.

      \noindent
          {\bf Random Forests:} In this case we will restrict
          ourselves to binary classification, but the algorithm can
          readily extended to the case of multiple labels. As before
          $RF_0$ is the random forest trained on a small initial
          dataset. Since we are in the binary classification domain, the
          label set $\Y = \{ 1,-1 \}$. Assume that we have a
          random forest $RF = \{ RF[1],\cdots,RF[o] \}$ of trees
          $RF[i]$ and on a datapoint $x$ the label of the random forest $RF(x)$ is the
          majority of the label of the trees $RF[1] (x),\cdots,
          RF[o](x)$. Let $RF_n$ be the random forest at time step
          $n-1$. The question again is: how to construct $RF'_n$?
          Without loss of generality, let us say on $x_n$ $RF_n (x_n) = +1$ (the case when
          the label is $-1$ is symmetric) and there are $r$ trees in
          $RF_n$ (denoted by $RF_n^{+1} (x_n)$) such that their labels
          on $x_n$ are $+1$. Note that $r > \lfloor \frac{o}{2}
          \rfloor $ because the majority label was $+1$. Define $j =
          r-\lfloor \frac{o}{2} \rfloor+1$.  Note that if $j$ trees in
          $RF^{+1}_n (x_n)$ will ``flip'' their decision to $-1$ on $x_n$, then
          the decision on $x_n$ will be flipped to $-1$. This is the
          intuition we use to compute $RF'_n$. 
          There are ${r \choose j}$ choices of trees and we pick the one
          with minimum error on $S_{n-1}$, and that gives us $RF'_n$.
          Recall that $r \choose j$ is approximately $r^j$, but we can
          be approximate by randomly picking $j$ trees out of $RF_n^{+1}
          (x_n)$, and choosing the random draw with the minimum
          error to approximate $RF'_n$.

\section{Defense Strategies}
\label{sec:defense}

Our main observation is  that model extraction
in the context of MLaaS systems described at the beginning of
Section~\ref{sec:ME} (\ie, oracle access) is equivalent to QS active
learning.  Therefore, any advancement in the area of QS active
learning directly translates to a new threat for MLaaS systems. In
this section, we discuss strategies that could be used to make the
process of extraction more difficult.
We investigate the link between machine-learning
in the noisy setting and model extraction. The design of a
good defense strategy is an open problem; we believe this is an
interesting direction for future work where the machine learning and
the security communities can fruitfully collaborate.

\vspace{1mm} 
In this section, we assume that the MLaaS server $\Ser$ with the
knowledge of $f^*$, $\Ser(f^*)$, has the freedom to modify the
prediction before forwarding it to the client. More precisely, we
assume that there exists a (possibly) randomized procedure $D$ that
the server uses to compute the answer $\tilde{y}$ to a query $x$, and
returns that instead of $f^*(x)$.  We use the notation $\Ser_D(f^*)$
to indicate that the server $\Ser$ implements $D$ to protect $f^*$.
Clearly, the learner that interacts with $\Ser_D(f^*)$ can still try
to learn a function $f$ from the noisy answers from the server.
However, the added noise requires the process to make more queries,
or could produce a less accurate model than $f$. 

\subsection{Classification case}
We focus on the binary classification problem where $\F$ is an
hypothesis class of functions of the form $f:\X\rightarrow \Y$ and
$\Y$ is binary, but our argument can be easily generalized to the
multi-class setting.



First, in the following two remarks we recall two known results from
the literature~\cite{key-1} that establish information theoretic
bounds (i.e., the computational cost is ignored) for the number of
queries required to extract the model when any defense is
implemented. Let $\nu$ be the generalization error of the model $f^*$
known by the server $\Ser_D$ and $\mu$ be the generalization error of
the model $f$ learned by an adversary interacting with
$\Ser_D(f^*)$. Assume that the hypothesis class $\F$ has VC dimension
equal to $d$.  Recall that the VC dimension of a hypothesis class $\F$
is the largest number $d$ such that there exists a subset $X\subset\X$
of size $d$ which can be shattered by $\F$. A set $X=\{x_1,
\dots, x_d\}\subset\X$ is said to be shattered by $\F$ if $|\{(f(x_1),
f(x_2),\dots,f(x_d)):f\in\F\}|=2^d$.

\begin{remark}[\underline{Passive learning}]

Assume that the adversary uses a passive learning algorithm to compute
$f$, such as the Empirical Risk Minimization (ERM) algorithm, where given
  a labeled training set $\{(X_1,Y_1),\dots(X_n,Y_n)\}$, the ERM algorithm outputs
  $\hat{f}=\arg\min_{f\in\F}\frac1n \sum_{i=1}^n \One[f(X_i)\neq
    Y_i]$.  Then, the adversary can learn $\hat{f}$ with excess
error $\varepsilon$ (\ie, $\mu\leq\nu+\varepsilon$) with
$\tilde{O}(\frac{\nu+\varepsilon}{\varepsilon^{2}}d)$ examples. For
any algorithm, there is a distribution such that the algorithm needs
at least $\tilde{\Omega}(\frac{\nu+\varepsilon}{\varepsilon^{2}}d)$
samples to achieve an excess error of $\varepsilon$.
\end{remark}

\begin{remark}[\underline{Active learning}]
	Assume that the adversary uses an active learning algorithm to compute $f$, such as the disagreement-based active learning algorithm \cite{key-1}. Then, the adversary  achieves excess error $\varepsilon$ with $\tilde{O}(\frac{\nu^{2}}{\varepsilon^{2}}d\theta)$ queries (where $\theta$ is the disagreement coefficient~\cite{key-1}).
	For any active learning algorithm, there is a distribution such that it takes at least $\tilde{\Omega}(\frac{\nu^{2}}{\varepsilon^{2}}d)$ queries to achieve an excess error of $\varepsilon$.	
\end{remark}

Observe that any defense strategy $D$ used by a server $\Ser$ to prevent the extraction of a model $f^*$ can be seen as a randomized procedure that outputs $\tilde{y}$ instead of $f^*(x)$ with a given probability over the random coins of $D$.  In the discrete case, we represent this with the notation
\begin{equation}
\label{eq:rho}
	\rho_D(f^*,x)=\Pr[Y_x\neq f^*(x)],
\end{equation} 
where $Y_x$ is the random variable that represents the answer of the server $\Ser_D(f^*)$ to the query $x$ (\eg, $\tilde{y}\leftarrow Y_x$).
When the function $f^*$ is fixed, we can consider the supremum of the function $\rho_D(f^*,x)$, which represents the upper bound for the probability that an answer from $S_D(f^*)$ is wrong:
\begin{equation*}
	\rho_D(f^*)=\sup_{x\in\X}\rho_D(f^*,x).
\end{equation*}	
Before discussing potential defense approaches, we first present a general negative result. The following proposition states that that any candidate defense $D$ that correctly responds to a query with probability greater than or equal to $\frac{1}{2}+c$ for some constant $c>0$ 
for all instances can be easily broken. Indeed, an adversary that repetitively queries the same instance $x$ can figure out the correct label $f^*(x)$ by simply looking at the most frequent label that is returned from $\Ser_D(f^*)$. We prove that with this extraction strategy, the number of queries required increases by only a logarithmic multiplicative factor. 

\begin{proposition}
\label{prop}
	Let $\F$ be an hypothesis class used for classification and  $(\L,\O)$ be an active learning system for $\F$ in the QS scenario with query complexity $q(\varepsilon,\delta)$. For any $D$, randomized procedure for returning labels, such that there exists $f^*\in\F$ with $\rho_D(f^*)<\frac12$, there exists an adversary that, interacting with $\Ser_D(f^*)$, can implement an $\varepsilon$-extraction attack with confidence $1-2\delta$ and complexity $q=\frac{8}{(1-2\rho_D(f^*))^2}q(\varepsilon,\delta)\ln\frac{q(\varepsilon,\delta)}{\delta}$.
	
\end{proposition}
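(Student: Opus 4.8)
The plan is to have the adversary $\A$ \emph{simulate} the QS active learner $\L$ guaranteed by the hypothesis, but \emph{boost} each of $\L$'s queries by repetition. Concretely: run $\L$'s query strategy and stopping criterion verbatim; each time $\L$ wants the label of an instance $x$, the adversary sends $x$ to the noisy server $\Ser_D(f^*)$ exactly $m$ times and returns to $\L$ the label that appears most often among the $m$ answers (ties broken arbitrarily). When $\L$ halts with output $\hat f$, $\A$ outputs the same $\hat f$. Choosing $m=\frac{8\ln(q(\varepsilon,\delta)/\delta)}{(1-2\rho_D(f^*))^2}$ makes the total number of queries at most $q(\varepsilon,\delta)\cdot m$, which is exactly the complexity in the statement, so it only remains to bound the confidence.

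First I would argue that, with probability at least $1-\delta$ over the server's randomness, every majority vote handed back to $\L$ is \emph{correct}, i.e., equals $f^*$ evaluated at the queried instance. Fix the $i$-th query and condition on the entire history so far (earlier instances and earlier noisy answers, which together determine the instance $x_i$ that $\L$ now requests). Assuming $D$ answers repeated queries using fresh, independent coin tosses, the $m$ answers to $x_i$ are i.i.d., each disagreeing with $f^*(x_i)$ with probability $\rho_D(f^*,x_i)\le\rho_D(f^*)<\tfrac12$; a Hoeffding/Chernoff bound then gives that the majority is wrong with conditional probability at most $\exp(-m(1-2\rho_D(f^*))^2/2)$, which is at most $\delta/q(\varepsilon,\delta)$ for our choice of $m$. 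Since $\L$ asks at most $q(\varepsilon,\delta)$ queries, summing this conditional bound over the steps (a union bound that remains valid despite $\L$'s adaptivity, precisely because the per-step estimate is conditional on the whole past) shows that the probability that \emph{some} majority vote is wrong is at most $\delta$.

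The last step is a coupling argument tying the simulation to the genuine noiseless interaction of Observation~1. On the event $E$ that all majority votes are correct, the sequence of labels seen by the simulated $\L$ is identical to the sequence it would see while interacting with the clean oracle $\O_{f^*}$ under the same internal randomness; hence on $E$ the adversary's output $\hat f$ equals the hypothesis $\L$ would produce in a true QS active-learning run. By Definition~\ref{def:AL} (QS scenario), that hypothesis satisfies $\Err(\hat f)\le\varepsilon$ with probability at least $1-\delta$. Therefore $\Pr[\Exp^\varepsilon_\F(\Ser_D(f^*),\A,q)=1]=\Pr[\Err(\hat f)\le\varepsilon]\ge 1-\Pr[\overline E]-\Pr[\L\text{ fails in the clean run}]\ge 1-\delta-\delta=1-2\delta$, establishing an $\varepsilon$-extraction attack of confidence $1-2\delta$ with the claimed complexity — essentially amplifying the clean attack by a logarithmic factor.

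I expect the only delicate points to be (i) justifying the union bound over $\L$'s \emph{adaptively} chosen queries, handled by keeping each Chernoff estimate conditional on the full transcript so far, and (ii) the modelling assumption implicit in the statement that $D$ uses independent randomness on repeated queries — a stateful defense that detects a repeated instance could otherwise keep $\rho_D(f^*)<\tfrac12$ yet defeat majority voting, so this should be flagged. The concentration inequality and the closing union bound over $\{\overline E\}$ and $\{\L\text{ fails}\}$ are routine.
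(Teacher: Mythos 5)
Your proof is correct and follows essentially the same route as the paper's: simulate the QS learner $\L$, amplify each of its $q(\varepsilon,\delta)$ queries by $r=\frac{8}{(1-2\rho_D(f^*))^2}\ln\frac{q(\varepsilon,\delta)}{\delta}$ repetitions with a majority vote, apply a Chernoff/Hoeffding bound per query and a union bound over all queries, and combine the $\delta$ failure probability of the clean learner with the $\delta$ probability that some vote is wrong. If anything, your write-up is more careful than the paper's, which applies the union bound without explicitly conditioning on the adaptive transcript and silently assumes the defense $D$ answers repeated queries with independent randomness — both points you rightly flag.
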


The proof of Proposition~\ref{prop} can be found in  Appendix~\ref{app:prop}.

\noindent
Proposition~\ref{prop} can be used to discuss the following two different defense strategies:

\emph{1. Data-independent randomization.} Let $\F$ denote a hypothesis class that is subject to an
        extraction attack using QS active learning. An intuitive
        defense for $\F$ involves adding noise to the query output
        $f^*(x)$ independent of the labeling function $f^*$ and the
        input query $x$. In other words, $\rho_D(f,x)=\rho$ for any
        $x\in\X$, $f\in\F$, and $\rho$ is a constant value in the
        interval $(0,1)$.  It is easy to see that this simple strategy
        cannot work. It follows from Proposition~\ref{prop} that if
        $\rho<\frac{1}{2}$, then $D$ is not secure. On the other hand,
        if $\rho \geq \frac{1}{2}$, then the server is useless since
        it outputs an incorrect label with probability at least $\frac{1}{2}$.
	
\begin{example}[\underline{Halfspace extraction under noise}]
	\label{ex:hs3}
	For example, we know that $\varepsilon$-extraction with any level of confidence can be implemented with complexity $q=O(d\log(\frac{1}{\varepsilon}))$ using QS active learning for the class $\F_{d,HS}$ \ie for binary classification via halfspaces (refer Example~\ref{ex:hs2}). 
	It follows from the earlier discussion that any defense that flips labels with a constant flipping probability $\rho$ does not work.
	This defense approach is similar to the case of \qquote{noisy oracles} studied extensively in the active learning literature \cite{Kaariainen06,KarpK07,Nowak11}.
	For example, from the machine-learning literature we know that if the flipping probability is exactly $\rho$ ($\rho\leq\frac12$),
	the AVERAGE algorithm (similar to our Algorithm \ref{alg:passive}, defined in Section \ref{sec:implementation}) $\varepsilon$-extracts $f^*$ with  $\tilde{O}(\frac{d^{2}}{(1-2\rho)^{2}}\log\frac{1}{\varepsilon})$
	labels \cite{key-3}. Under bounded noise where each label is flipped
	with probability at most $\rho$ ($\rho<\frac12$), the AVERAGE algorithm does not work
	anymore, but a modified Perceptron algorithm can learn with $\tilde{O}(\frac{d}{(1-2\rho)^{2}}\log\frac{1}{\varepsilon})$
	labels \cite{key-5} in a stream-based active learning setting, and a QS active learning algorithm proposed by Chen \etal~\cite{chen2017near} can also learn with the same number of labels. 
	An adversary implementing the Chen \etal algorithm \cite{chen2017near} is even more efficient than the adversary $\tilde{A}$ defined in the proof of Proposition~\ref{prop} (\ie, the total number of queries only increases by a constant multiplicative factor instead of $\ln q(\epsilon,\delta)$). We validate the practical efficiency of this attack in Section~\ref{sec:implementation}. 		
	\end{example}

	\emph{2. Data-dependent randomization.}  Based on the
          outcome of the earlier discussion, we believe that a defense
          that aims to protect a hypothesis class against model
          extraction via QS active learning should implement
          data-dependent perturbation of the returned labels. That is,
          we are interested in a defense $D$ such that the probability
          $\rho_D(f^*,x)$ depends on the query input $x$ and
          the labeling function $f^*$.  For example, given
          a class $\F$ that can be extracted using an active learner
          $\L$ (in the QS scenario), if we consider a defense $D$ such
          that $\rho_D(f^*,x)\geq \frac{1}{2}$ for some instances,
          then the proof of Proposition~\ref{prop} does not work (the
          argument only works if there is a constant $c>0$ such that
          $\rho_D(f^*,x)\leq \frac{1}{2}-c$ for all $x$) and the
          effectiveness of the adversary $\tilde{A}$ is not guaranteed
          anymore\footnote{Intuitively, in the binary case if
            $\rho_D(f^*,x_i)\geq \frac{1}{2}$ then the definition of
            $y_i$ performed by $\tilde{A}$ in step 2 (majority vote) is
            likely to be wrong. However, notice that this is not
            always the case in the multiclass setting: For example,
            consider the case when the answer to query ${x_i}$ is
            defined to be wrong with probability $\geq\frac{1}{2}$
            and, when wrong, is sampled uniformly at random among the
            $k-1$ classes that are different to the true class
            $f^*(x)$, then if $k$ is large enough, $y_i$ defined via
            the majority vote is likely to be still correct.}.

\begin{example}[\underline{Halfspace extraction under noise}]
\label{ex:hs4}
For the case of binary classification via halfspaces, Alabdulmohsin
\etal~\cite{CIKM-14} design a system that follows this strategy. They
consider the class $\F_{d,HS}$ and design a learning rule that uses
training data to infer a distribution of models, as opposed to
learning a single model. To elaborate, the algorithm learns the mean
$\mu$ and the covariance $\Sigma$ for a multivariate Gaussian
distribution $\mathcal{N}(\mu,\Sigma)$ on $\F_{d,HS}$ such that any
model drawn from $\mathcal{N}(\mu,\Sigma)$ provides an accurate
prediction. The problem of learning such a distribution of classifiers
is formulated as a convex-optimization problem, which can be solved
quite efficiently using existing solvers. During prediction, when the
label for a instance $x$ is queried, a {\em new} $w$ is drawn at
random from the learned distribution $\mathcal{N}(\mu,\Sigma)$ and the
label is computed as $y=\sign(\langle w,x\rangle)$.  The authors show
that this randomization method can mitigate the risk of reverse
engineering without incurring any notable loss in predictive
accuracy. In particular, they use PAC active learning algorithms
\cite{cohn1994improving,BalcanBZ07} (assuming that the underlying
distribution $\D$ is Gaussian) to learn an approximation $\hat{w}$
from queries answered in three different ways: (a) with their
strategy, \ie using a new model for each query, (b) using a fixed
model to compute all labels, and (c) using a fixed model and adding
independent noise to each label, \ie $y=\sign(\langle
w,x\rangle+\eta)$ and $\eta\leftarrow[-1,+1]$. They show that the
geometric error of $\hat{w}$ with respect to the true model is higher
in the former setting (\ie in (a)) than in the others. 
On 15 different datasets from the UC
Irvine repository \cite{uci}, their strategy gives typically an order
of magnitude larger error.  We empirically evaluate this defense in
the context of model extraction using QS active learning algorithms in
Section~\ref{sec:implementation}.
		
		
	\end{example}

\noindent
{\bf Continuous case:} Generalizing Proposition~\ref{prop} to the
continuous case does not seem straightforward, \ie when the target
model held by the MLaaS server is a real-valued function $f^*:\X
\rightarrow \R$; A detailed discussion about the continuous case 
appears in Appendix~\ref{subsec:continuous}.

\section{Implementation and Evaluation}
\label{sec:implementation}

For all experiments described below, we use an Ubuntu 16.04 server with 32 GB RAM, and an Intel i5-6600 CPU clocking 3.30GHz. We use a combination of datasets obtained from the \texttt{scikit-learn} library and the UCI machine learning repository \cite{uci}, as used by Tram\`{e}r \etal.

\subsection{Linear Models}
\label{subsec:linear}

We carried out experiments to validate our claims that query synthesis active learning can be used to successfully perform model extraction for linear models. Our experiments are designed to answer the following three questions: (1) Is active learning practically useful in settings without any auxiliary information, such as confidence values \ie in an oracle access setting?, (2) Is active learning useful in scenarios where the oracle is able to perturb the output \ie in a data-independent randomization setting?, and (3) Is active learning useful in scenarios where the oracle is able to perform more subtle perturbations \ie in a data-dependent randomization setting?


To answer these questions, we focused on learning the hypothesis class of $d$-dimensional half spaces. To perform model extraction, we implemented two QS algorithms \cite{alabdulmohsin2015efficient, chen2017near} to learn an approximation $w$, and terminate execution when $\lvert \lvert w^* - w \rvert \rvert_2 \leq \varepsilon$. The metric we use to capture efficiency is query complexity. To provide a monetary estimate of an attack, we borrow pricing information from the online pricing scheme of Amazon \ie $\$$0.0001 per query (more details are present in Table~\ref{aux}). We considered alternative stopping criteria, such as measuring the learned model's stability over the $N$ last iterations. Such a method resulted in comparable error and query complexity (refer Appendix \ref{app:alternate} for detailed results). For our experiments, the halfspace held by the server/oracle (\ie, the optimal hypothesis $w^*$) was learned using Python's \texttt{scikit-learn} library. Our experiments suggest that:

\begin{enumerate}
\itemsep0em
\item QS active learning algorithms are efficient for model extraction, with low query complexity and run-time. For the digits dataset ($d=64$), the dataset with the largest value of $d$ which we evaluated on, the active learning algorithm implemented required 900 queries to extract the halfspace with geometric error $\varepsilon \leq 10^{-4}$. This amounts to $\$0.09$ worth of queries. 
\item QS active learning algorithms are also efficient when the oracle flips the labels independently with constant probability $\rho$. This only moderately increases the query complexity (for low values of $\rho$). For the digits dataset of input dimensionality $d=64$, and a noise threshold $\rho=0.4$, our algorithm required 36546 queries (or $\$3.65$) to extract the halfspace with geometric error $\varepsilon \leq 10^{-4}$. 
\item State-of-the-art QS algorithms fail to recover the model when the oracle responds to queries using tailored model randomization techniques (refer subsection \ref{sec:defense}, specifically the algorithm by Alabdulmohsin \etal \cite{CIKM-14}). However, passive learning algorithms (refer Algorithm \ref{alg:passive}) are effective in this setting.
\end{enumerate}

In each figure, we plot the price (\ie $\$0.0001$ per query) for the most expensive attack we launch to serve as a baseline. We conclude by comparing our approach with the algorithm proposed by Lowd and Meek~\cite{LM05}.



\begin{figure}
        \centering
        \includegraphics[width=0.7\linewidth]{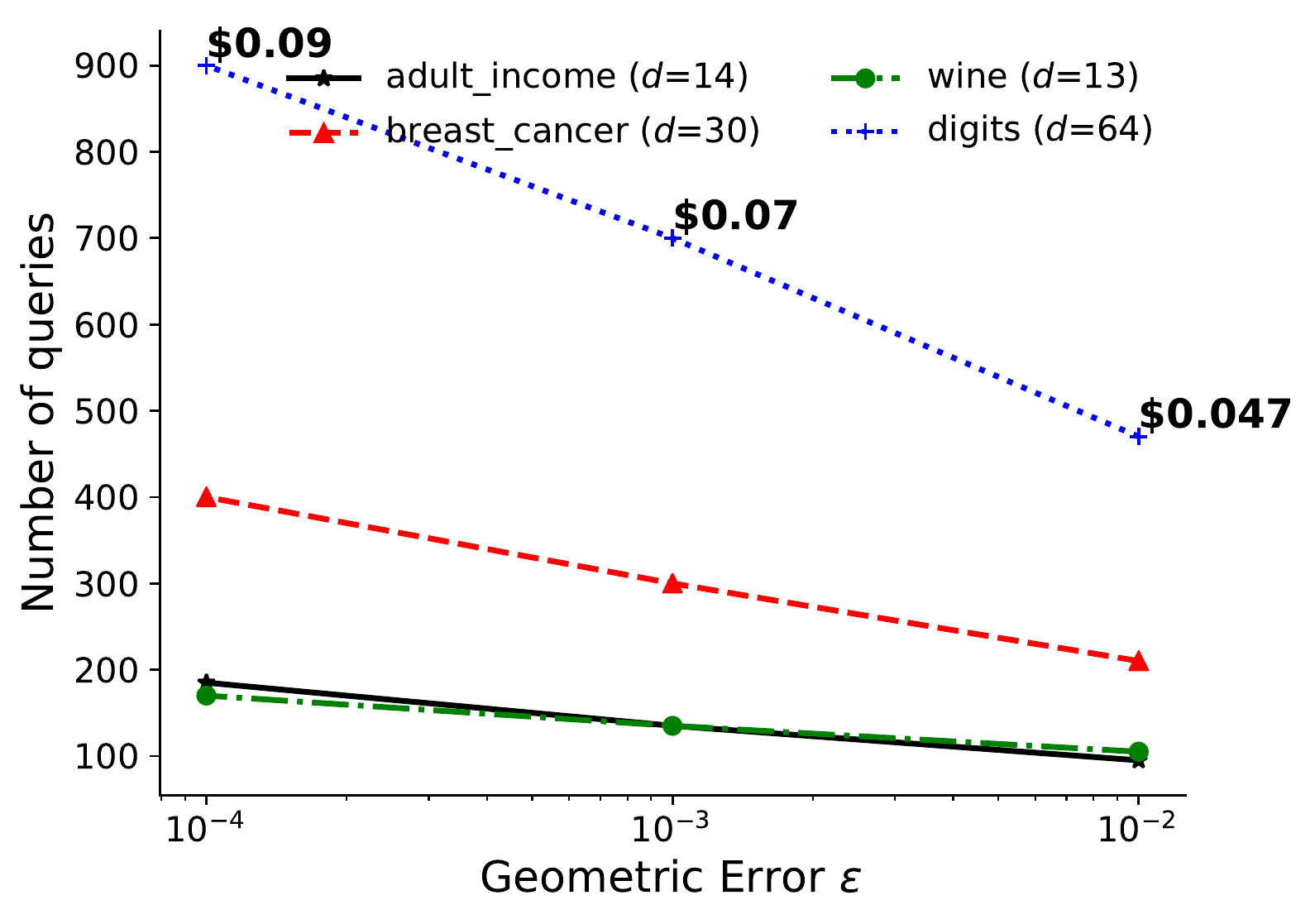}
        \compactcaption{\footnotesize \footnotesize Number of queries needed for halfspace extraction using the version space approximation algorithm. Note that the asymptotic query complexity for this algorithm is $O(d\log\frac{1}{\varepsilon})$. This explains the increase in query complexity as a function of $d$.}
        \label{fig:aaai15}
\vspace{-4mm}
\end{figure}

\vspace{2mm}
\noindent{\bf Q1. Usefulness in an oracle access setting:} We implemented Version Space Approximation proposed by Alabdulmohsin \etal \cite{alabdulmohsin2015efficient} in approximately 50 lines of MATLAB. This algorithm operates iteratively, based on the principles of version space learning. A version space \cite{mitchell1978version} is a hierarchical representation of knowledge. It can also be thought of as the subset of hypotheses consistent with the training examples. In each iteration, the algorithm first approximates a version space, and then synthesizes an instance that reduces this approximated version space quickly. The final query complexity for this algorithm is $O(d\log \frac{1}{\varepsilon})$. 

Figure \ref{fig:aaai15} plots the number of queries needed to extract a halfspace as a function of termination criterion \ie geometric error $\varepsilon$. As discussed earlier, the query complexity is dependent on the dimensionality of the halfspace to be extracted. Across all values of dimensionality $d$, observe that with the exponential decrease in error $\varepsilon$, the increase in query complexity is linear - often by a small factor ($1.3\times - 1.5\times$). The implemented query synthesis algorithm involves solving a convex optimization problem to approximate the version space, an operation that is potentially time consuming. However, based on several runs of our experiment, we noticed that the algorithm always converges in $< 2$ minutes. 

While the equation solving attack proposed by Tram\`{e}r \etal \cite{TZJRR16} requires fewer queries, it also requires the actual value of the prediction output \ie $\langle w^*, x\rangle$ as auxiliary information. On the other hand, extraction using query synthesis does not rely on any auxiliary information returned by the MLaaS server to increase its efficiency \ie the only input needed for query synthesis-based extraction attacks is {\sign}($\langle w^*, x\rangle $). 

\vspace{2mm}
\noindent{\bf Q2. Resilience to data-independent noise:} An intuitive defense against model extraction might be to flip the sign of the prediction output with independent probability $\rho$ \ie if the output $y \in \{ 1, -1\}$, then $Pr[ y \neq \sign \langle w^*, x \rangle] = \rho < \frac{1}{2}$ (refer subsection \ref{sec:defense}). This setting (\ie, noisy oracles) is extensively studied in the machine learning community. Trivial solutions including repeated sampling to obtain a batch where majority voting (determines the right label) can be employed; if the probability that the outcome of the vote is correct is represented as $1-\alpha$, then the batch size needed for the voting procedure is $k=O(\frac{\log\frac{1}{\alpha}}{|\rho - 0.5|^2}$) \ie there is an increase in query complexity by a (multiplicative) factor $k$, an expensive proposition. While other solutions exist \cite{yan2016active, nowak2009noisy}, we implemented the dimension coupling ($DC^2$) framework proposed by Chen \etal \cite{chen2017near} in approximately 150 lines of MATLAB. The dimension coupling framework reduces a $d-$dimensional learning problem to $d-1$ lower-dimensional sub-problems. It then appropriately aggregates the results to produce a halfspace. This approach is resilient to noise \ie the oracle can flip the label with constant probability (known a priori) $\rho < \frac{1}{2}$, and the algorithm will converge with probability $1-\delta$. The query complexity for this algorithm is $\tilde{O}(d\ (\log \frac{1}{\varepsilon}+\log \frac{1}{\delta}))$.

\begin{figure}
\centering
\subfigure[Adult Income]{\label{fig:evaluation:aaai17_adult_income}\includegraphics[width=0.45\linewidth]{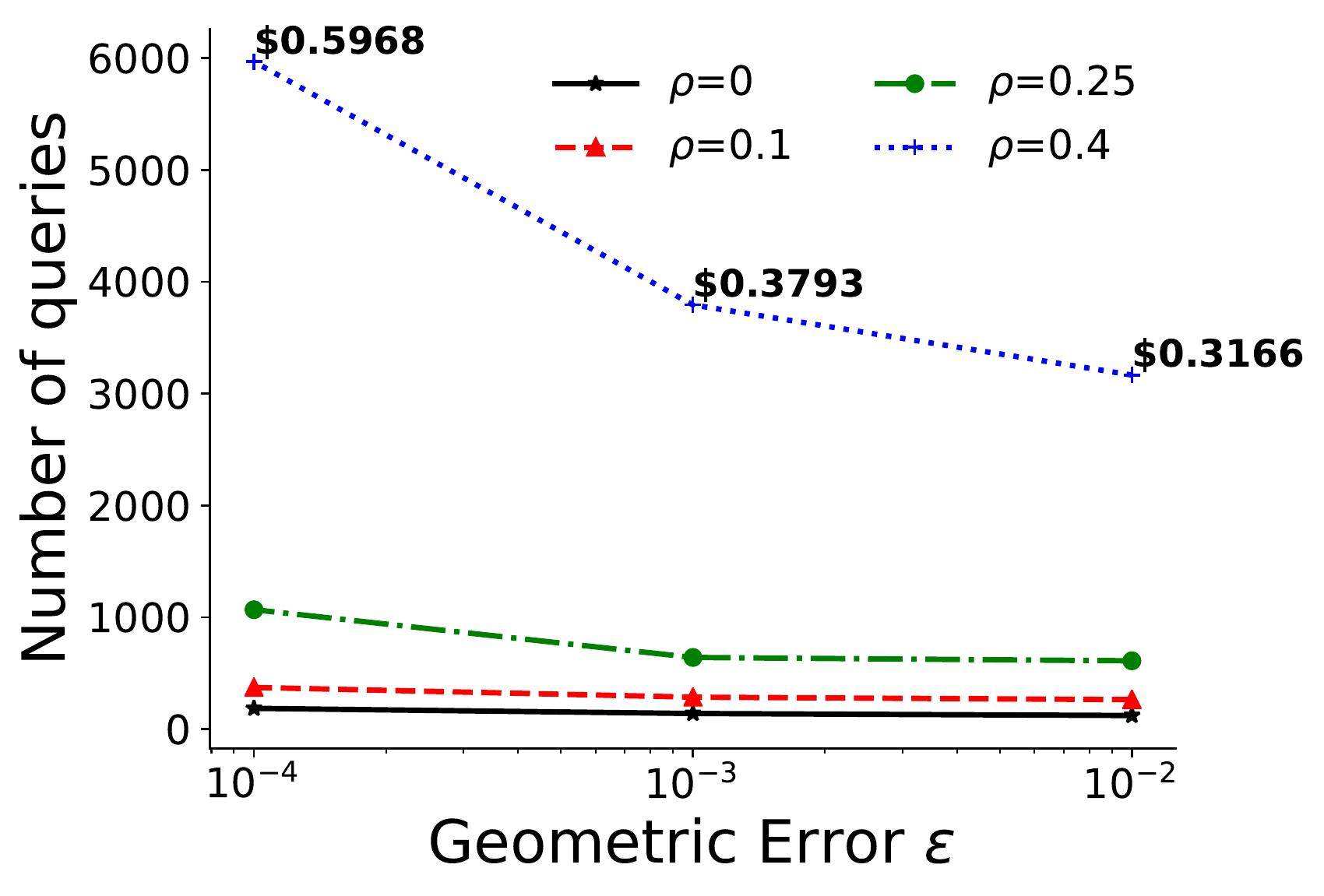}}
\subfigure[Breast Cancer]{\label{fig:evaluation:aaai17_breast_cancer}\includegraphics[width=0.45\linewidth]{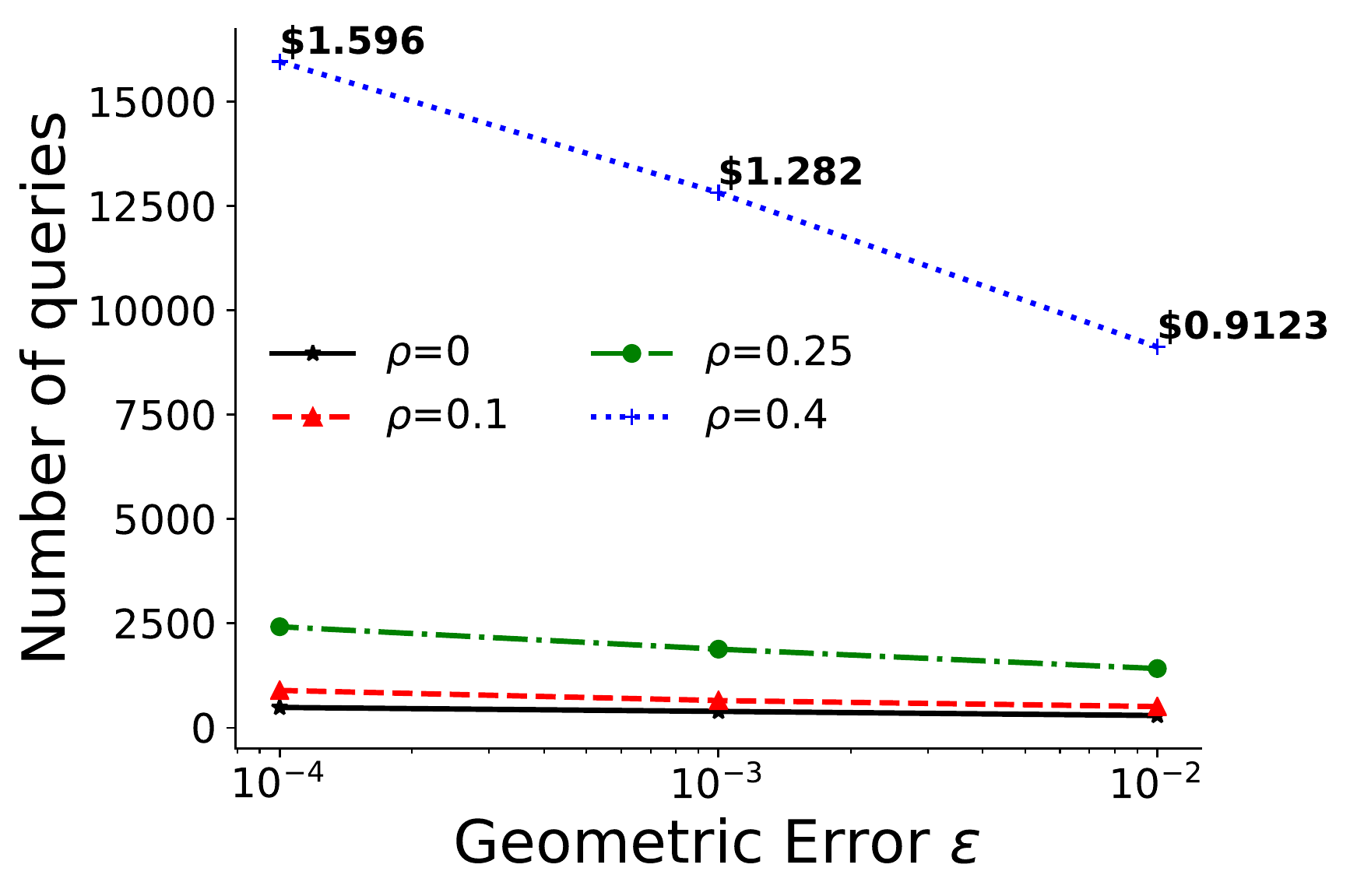}}
\\
\subfigure[Digits]{\label{fig:evaluation:aaai17_digits}\includegraphics[width=0.45\linewidth]{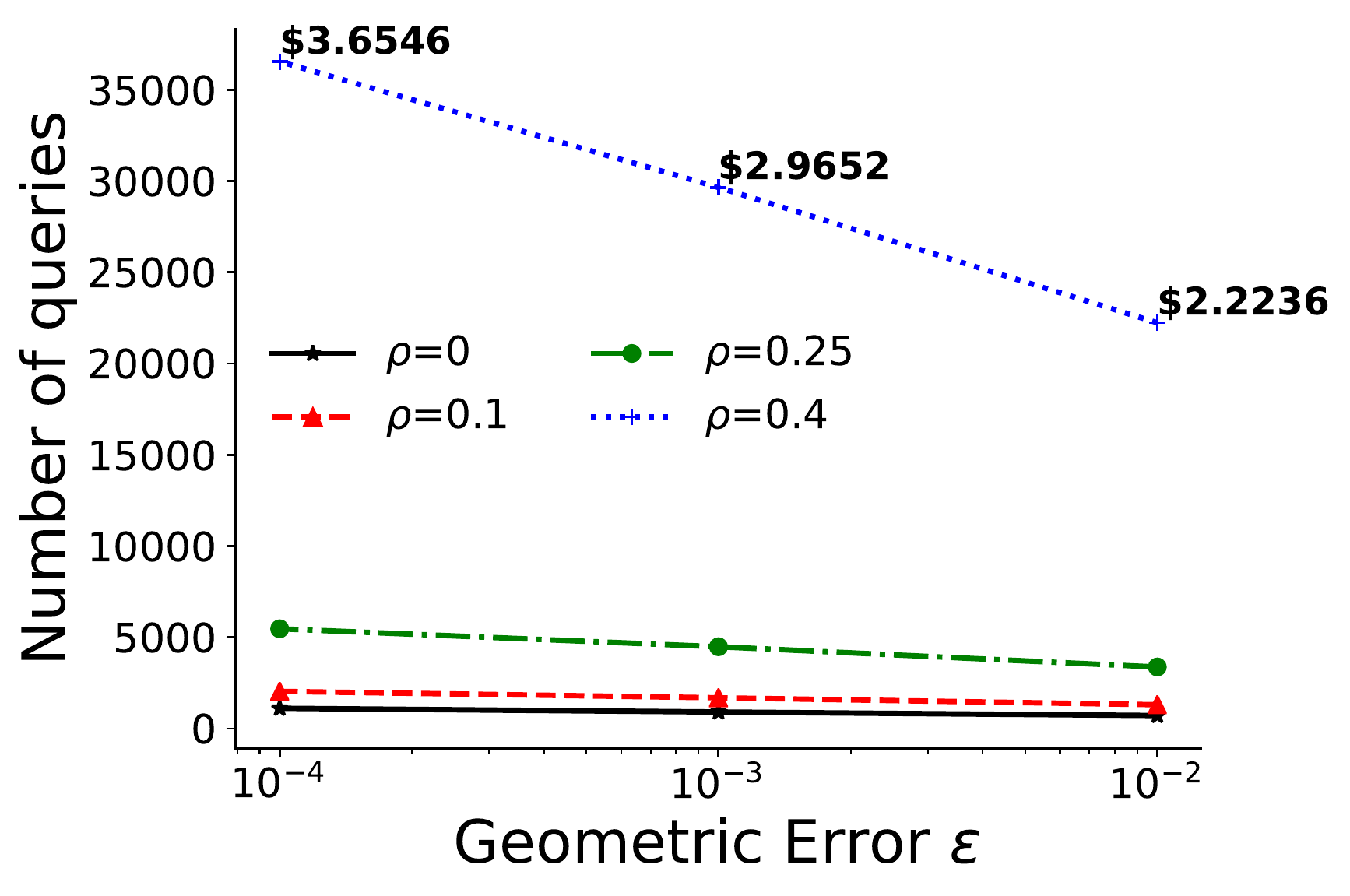}}%
\subfigure[Wine]{\label{fig:evaluation:aaai17_wine}  \includegraphics[width=0.45\linewidth]{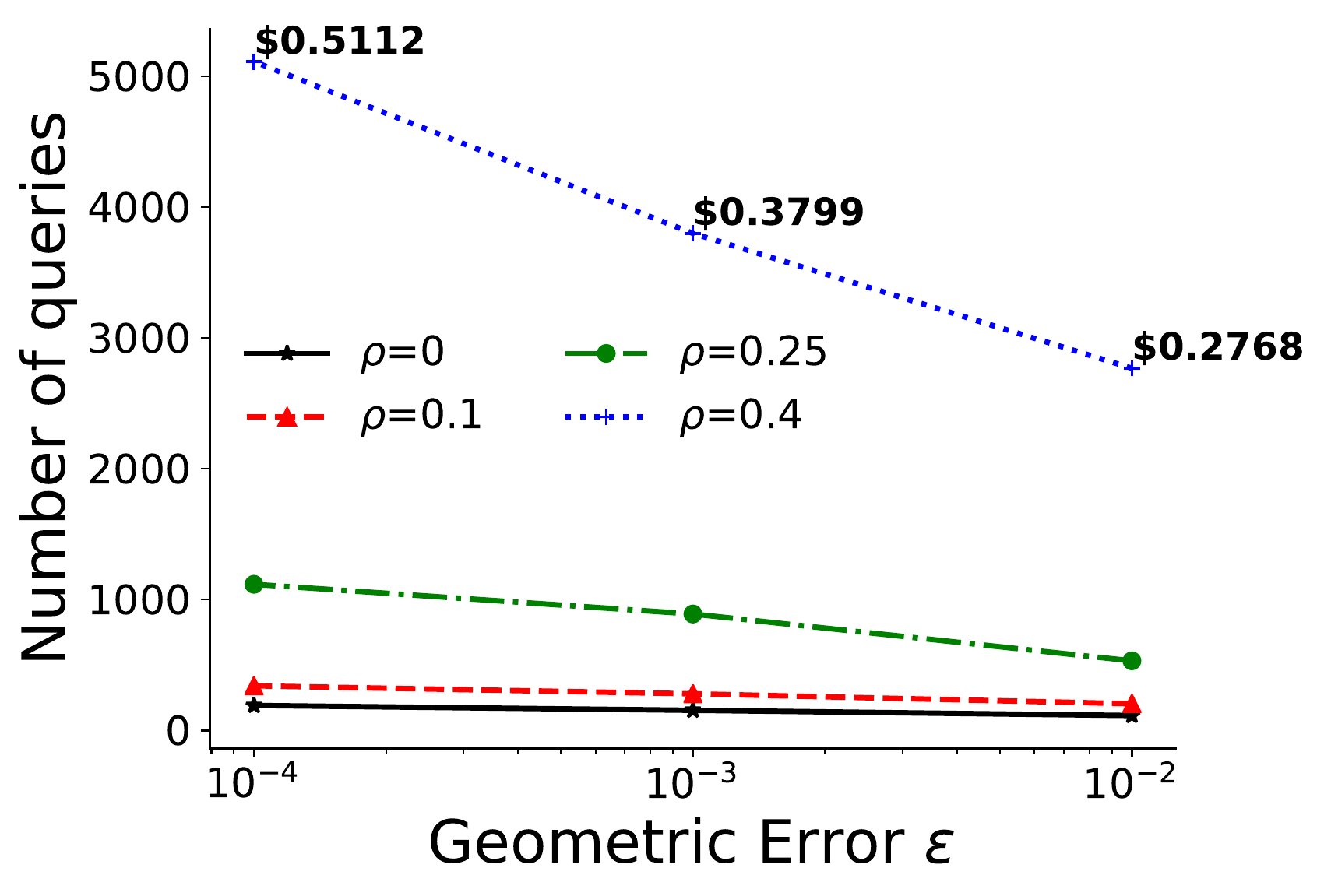}}%
\compactcaption{\footnotesize Number of queries needed for halfspace extraction using the dimension coupling algorithm. Note that the asymptotic query complexity for this algorithm is $\tilde{O}(d(\log\frac{1}{\varepsilon}+\log\frac{1}{\delta}))$. This explains the increase in query complexity as a function of $d$.}
\label{fig:aaai17}
\vspace{-4mm}
\end{figure}


The results of our experiment are presented in Figure \ref{fig:aaai17}. The algorithm is successful in extracting the halfspace for a variety of $\rho$ values. The exact bound is $C(\rho)( d\ (\log \frac{1}{\varepsilon}+\log \frac{1}{\delta})$, where $C(\rho)$ is a function of $\rho$ that is approximately $O( \frac{ \rho \log^3(1/\rho)}{\log^2 2(1 - \rho)})$. Thus, there is a multiplicative increase in the number of queries with increase in $\rho$. This introduces a modest increase in complexity in comparison to the noise-free setting. While the increase in pricing is $\approx 40\times$, this results in a worst case expenditure of $\approx \$3.6$ (see Figure \ref{fig:evaluation:aaai17_digits}). The time (and number of queries) taken for convergence is proportional to $\rho$, ranging from $1-20$ minutes for successful completion. 


\vspace{2mm}
\noindent{\bf Q3. Resilience to data-dependent noise:} As alluded to in subsection \ref{sec:defense}, another defense against extraction involves learning a family of functions very similar to $w^*$ such that they all provide accurate predictions with high probability. Proposed by Alabdulmohsin \etal \cite{CIKM-14}, data-dependent randomization enables the MLaaS server to sample a random function for each query \ie for each instance $x_i$, the MLaaS server obtains a new $w_i \thicksim \mathcal{N}(\mu, \sigma)$ and responds with $y_i = {\sign}(\langle w_i, x_i \rangle)$. Thus, this approach can be thought of as flipping the sign of the prediction output with probability $\rho_D(w^*, x_i)$ (see subsection \ref{sec:defense}). 

In this algorithm, a separation parameter $C$ determines how close the samples from $\mathcal{N}(\mu, \sigma)$ are; larger the value of $C$, closer each sample is (refer Section 4 in \cite{CIKM-14} for more details). We measure the value of $\rho_D(w^*, x_i)$ as a function of $C$ for those $x_i$ values generated by the dimension coupling algorithm. $\rho_D(w^*, x_i)$ is estimated by (a) obtaining $w_{1}, \cdots, w_{n} \thicksim \mathcal{N}(\mu, \sigma)$, for $n=1000$, and using them to classify $x_i$ to obtain $y_{1}=\sign(\langle w_1, x_i \rangle), \cdots, y_{n}$, and (b) obtaining the percentage of the prediction outputs that is not equal to ${\sign} (\langle w^*, x_i \rangle)$. Our hope was that if the value of ${\tt max}_{\forall x_i}~\rho_D(w^*, x_i) < \frac{1}{2}$, then an adversary similar to $\tilde{A}$ defined in Proposition \ref{prop} could be used to perform extraction.
 

\begin{figure}
	\centering
	\includegraphics[width=.7\linewidth]{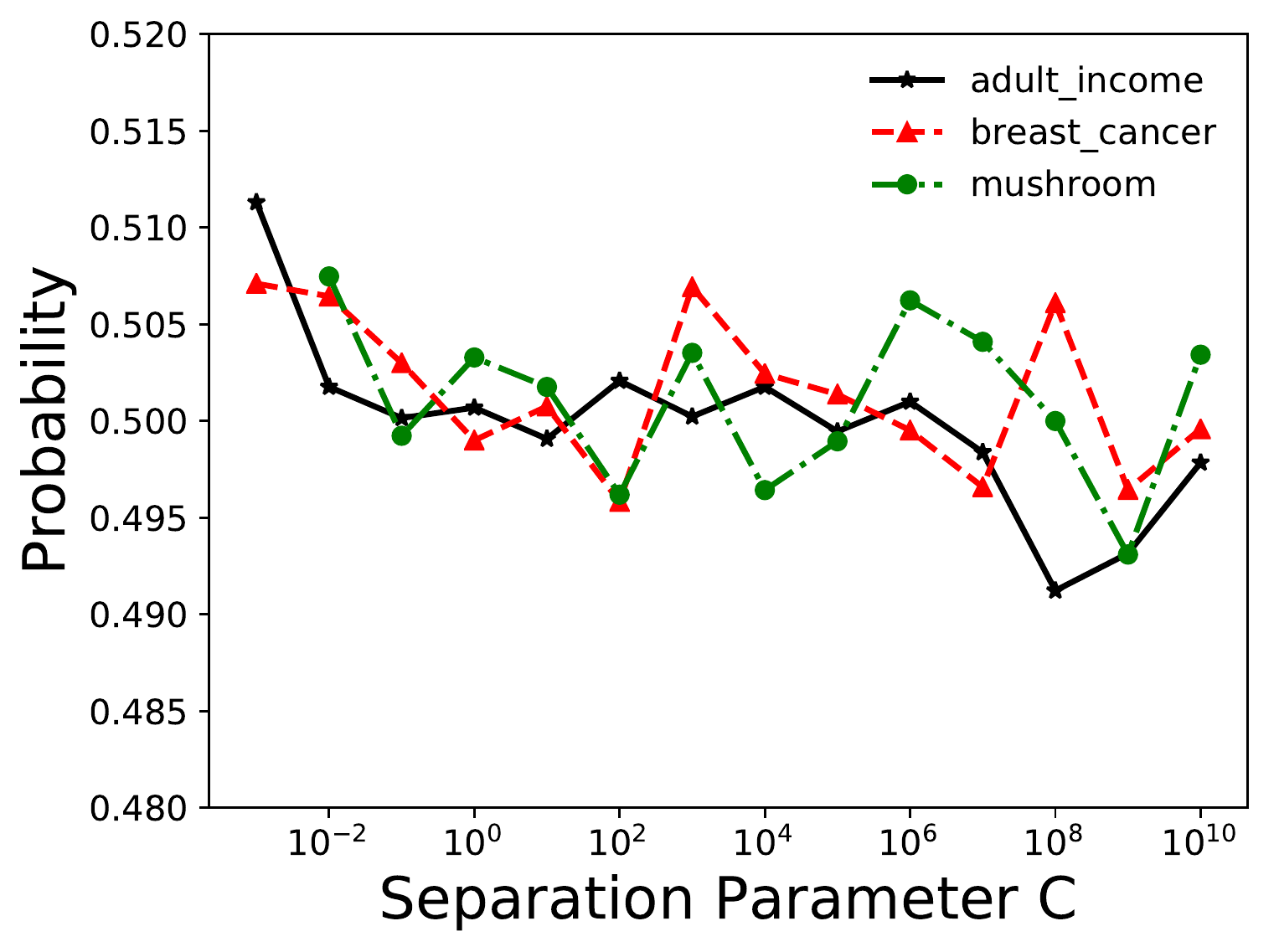}
	\compactcaption{\footnotesize ${\tt average} \rho_D(w^*, x_i) \approx \frac{1}{2} \pm \gamma$; $x_i$ synthesized by the dimension coupling algorithm.}
	\label{fig:avg_rho}
\vspace{-4mm}
\end{figure}

Figure \ref{fig:avg_rho} suggests otherwise; the average value of $\rho_D(w^*, x_i) \approx \frac{1}{2} \pm \gamma$ for some small $\gamma > 0$. Since any adversary will be unable to determine a priori the inputs for which this value is greater than half, neither majority voting, nor the vanilla dimension coupling framework will help extract the halfspace. We believe this is the case for current state-of-the-art algorithms as the instances they synthesize are "close" to the optimal halfspace. To validate this claim, we measured this distance for both the algorithms \cite{alabdulmohsin2015efficient,chen2017near}.  We observed that a majority of the points are very close to the halfspace in both cases (see Figure \ref{fig:distance} in Appendix \ref{app:defense} for more details).

\begin{algorithm}
\begin{algorithmic}[1]
\State{{\bf Input}: variance upper bound $\hat{\sigma}\geq \frac{1}{\sqrt{d}}$, target error $\varepsilon$}
\State{$m \gets \frac{(15\pi)^2}{\varepsilon^2}d\max(1,d\hat{\sigma}^2)\log\frac{2d}{\delta}$, $l \gets \frac{1}{12d\hat{\sigma}}$}
\State{Draw $x_1,x_2,\dots,x_m\in\mathbb{S}^{d-1}$ uniformly at random, and query their labels $y_1,y_2,\dots,y_m$}
\State{$v \gets \sum_{i=1}^m y_i x_i$}
\If{$\left\Vert v\right\Vert \geq l$}
\State{Return $w=\frac{v}{\left\Vert v\right\Vert}$}
\Else
\State{Return \textsc{fail}}
\EndIf
\end{algorithmic}

\caption{\footnotesize \label{alg:passive} Passive Learning Algorithm that breaks \cite{CIKM-14}}
\end{algorithm}

Such forms of data-dependent randomization, however, are not secure against traditional passive learning algorithms. Such an algorithm takes as input an estimated upper bound $\hat{\sigma}$ for $\sigma$. The algorithm first draws $\tilde{O}(\frac{d}{\varepsilon^{2}}\max(1,d\hat{\sigma}^{2}))$ instances from the $d-$dimensional unit sphere $\mathbb{S}^{d-1}$ uniformly at random, and proceeds to have them labeled - by the oracle defined in \cite{CIKM-14} in this case. It then computes the average $v=\sum_{i=1}^{m}y_{i}x_{i}$. $\ensuremath{w=\frac{v}{\left\Vert v\right\Vert }}$, the direction of $v$, is the algorithm's estimate of the classifier $w^*$, and the length of $v$ is used as an indicator of whether the algorithm succeeds:
if this estimated upper bound is correct (i.e. $\sigma\leq\hat{\sigma}$), then with high probability, $\left\Vert w-w^* \right\Vert \leq\varepsilon$; otherwise it outputs \textsc{fail}, indicating the variance bound $\hat{\sigma}$ is incorrect. In such situations, we can reduce $\hat{\sigma}$ and try again. A detailed proof of the algorithm's guarantees is available in Appendix~\ref{app:alg}.

\begin{figure}
	\centering
	\includegraphics[width=.7\linewidth]{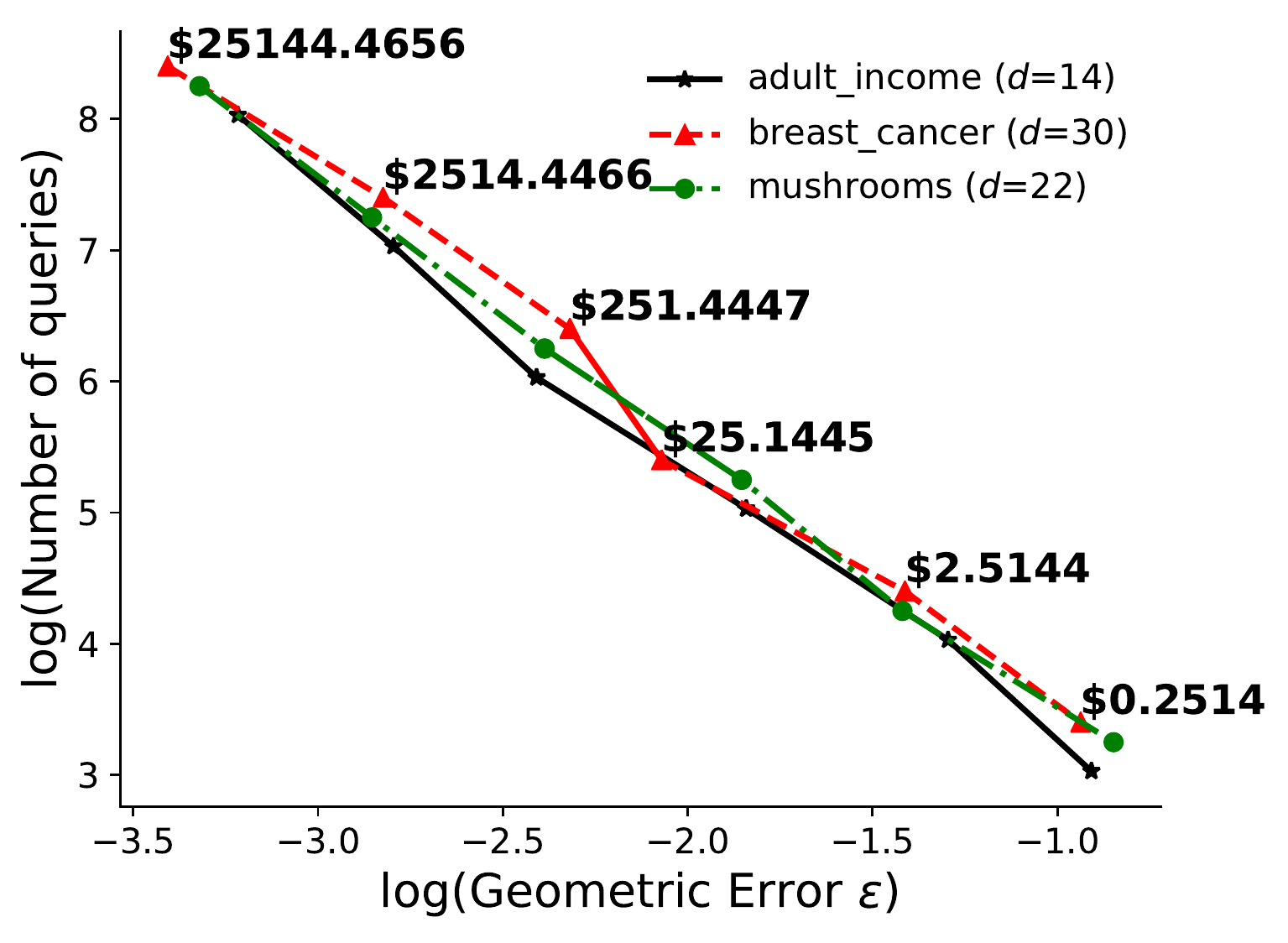}
	\compactcaption{\footnotesize log(Number of queries) needed for halfspace extraction (protected by the defense strategy proposed in \cite{CIKM-14}) using Algorithm \ref{alg:passive}. Note that the asymptotic query complexity for this algorithm is $O(\frac{1}{\varepsilon^2}d\max(1, d\hat{\sigma}^2)\log\frac{2d}{\delta})$. This explains the increase in query complexity as a function of $d$ and $\varepsilon$. The large value of $\frac{Cd}{\varepsilon^2}$ dominates the query complexity in this algorithm. The price is plotted for the attack on the breast cancer dataset.} 
	\label{fig:cikm}
\vspace{-4mm}
\end{figure}

While the asymptotic bounds for Algorithm \ref{alg:passive} are larger than the active learning algorithms discussed thus far, the constant $\mathcal{C}=(15\pi)^2$ can be reduced by a multiplicative factor to reduce the total number of queries used \ie $\frac{\mathcal{C}}{100}$ or $\frac{\mathcal{C}}{1000}$ etc. In Figure \ref{fig:cikm}, we observe that extracting halfspaces with geometric error $\varepsilon \approx 10^{-1}$ requires $\leq 10^4$ queries. While achieving $\varepsilon \approx 10^{-3}$ requires $\approx 10^7$ queries, the algorithm can be executed in parallel enabling faster run-times.

\vspace{2mm}
\noindent{\bf Lowd and Meek Baseline:} The algorithm proposed by Lowd and Meek \cite{LM05} can also be used to extract a halfspace. However, note that this algorithm can only operate in a noise-free setting. This is a severe limitation if a setting where the MLaaS employs defense strategies. From Table \ref{lowdmeek}, one can observe that the number of queries required to extract the halfspace is more than the query synthesis algorithms we implemented. For example, consider the breast cancer dataset. The version space algorithm is able to extract a halfspace at a distance of $\varepsilon \leq 10^{-4}$ with 400 queries (or $\$0.04$). However, the algorithm proposed by Lowd and Meek takes 970 queries for extraction. Additionally, the geometric error of the extracted halfspaces are also higher than those extracted in the query synthesis case. The query complexity of the Lowd and Meek algorithm is $O(d \log (\frac{1}{a\varepsilon}))$, where $a = \min_{i=1,\cdots,d} \frac{|w^*_i|}{\left\Vert w^*\right\Vert}$ ($w^*_i$ is the $i$-th coordinate of the groundtruth classifier $w^*$). This is worse than the $O(d \log (\frac{1}{\varepsilon}))$ query complexity of classical active learning algorithms. While this algorithm is not tailored to minimize the geometric error, we believe that these results further validate our claim that query synthesis active learning is a promising direction to explore.

\begin{table}
	\begin{center}
		\begin{tabular}{  l  c c c }
			\toprule
			{\bf Dataset} & {\bf Queries} & {\bf $\varepsilon$} & {\bf Slowdown} \\ \midrule
		    Wine & 189 & 0.071 & 1.67$\times$\\
            Breast Cancer & 940 & 0.162 & 3.19$\times$ \\
            Digits & 1879 & 0.665 & 2.62$\times$\\
		\bottomrule
		\end{tabular}
	\end{center}
\compactcaption{\footnotesize Number of queries and geometric error observed after extracting halfspaces using the line search procedure proposed by Lowd and Meek. Observe that the geometric error in some cases is large. Slowdown indicates the ratio between number of queries taken for the Lowd and Meek procedure and those taken by the DC$^2$ algorithm~\cite{chen2017near} for $\varepsilon=0.01$, and $\rho=0$.}
\vspace{-4mm}
\label{lowdmeek}
\end{table}

\subsection{Non-Linear Models}
\label{subsec:nonlinear}

In this section, we report experimental results for extraction efficacy for non-linear decision boundaries, specifically for kernel SVMs and decision trees. Our experiments are designed to answer the following two questions: (1) Is QS active learning practically useful to extract non-linear models (\ie, kernel SVMs) in an oracle access setting?, and (2) Is active learning useful to extract non-linear models (\ie, decision trees) in scenarios where the ML server does not reveal any auxiliary information? As before, we use the same datasets as in Tram\`{e}r et al.~\cite{TZJRR16}. The continuous variables are made discrete by binning (\ie dividing into groups), and are then one-hot-encoded. Our experiments suggest that:

\begin{enumerate}
\itemsep0em
\item Utilizing the extended adaptive training (EAT) approach (refer subsection~\ref{subsec:kSVM}) is efficient for extracting kernel SVMs. Our approach improves query complexity by 5$\times$-224$\times$.
\item Utilizing the IWAL algorithm (refer subsection~\ref{subsec:iwal}) enables extracting a decision tree in the absence of {\em any} auxiliary information, with a nominal increase in query complexity (14$\times$). 
\end{enumerate}

\noindent{\bf Q1. Kernel SVM:} As discussed in subsection~\ref{subsec:kSVM}, Tram\`{e}r et al. use {\em adaptive retraining} - a procedure to locate points close to the decision boundary - to obtain points used to seed their attack. Using these (labeled) points, they are able obtain the parameters which were used to instantiate the oracle using an extract-and-test approach (more specifically grid search). While techniques in active learning can not be used to expedite the grid search procedure, our experiments suggest that they are able to select more {\em informative} points i.e. points of greater uncertainty, with far fewer queries. With insight from the work of Bordes et al.~\cite{LASVM}, we propose the {\em extended adaptive retraining approach} (refer subsection~\ref{subsec:kSVM}) to obtain uncertain points. Additionally, we measure uncertainty with respect to a model that we train locally\footnote{such a local model is seeded with uniformly random points labeled by the oracle}, eliminating redundant queries to the oracle. To compare the efficiency of our algorithm, we re-execute the adaptive retraining procedure, and present our results in Table~\ref{table:kernelsvm}.

\begin{table}
\small
\centering
\begin{tabular}{c|c|c|c|c}
\hline
\toprule
\multicolumn{1}{c|}{\bf Dataset} & \multicolumn{2}{c|}{\bf Adaptive Retraining} & \multicolumn{2}{c}{\bf EAT}\\
\cline{2-5}
\multicolumn{1}{c|}{} &  Queries &  Accuracy &  Queries & Accuracy \\
\midrule
Mushroom   		& 11301	& 98.5	& 1001	& 94.5 \\
Breast Cancer  		& 1101	& 99.3	& 119	& 96.4 \\
Adult			& 10901	& 96.98	& 48	& 98.2 \\
Diabetes		& 901	& 98.5	& 166	& 94.8 \\
\bottomrule
\end{tabular}
	\compactcaption{\footnotesize \footnotesize Extraction of a kernel SVM model. Comparison of the query complexity and test accuracy (in $\%$) obtained running Tram\`{e}r \etal adaptive retraining vs. extended adaptive retraining.}
\label{table:kernelsvm}
\end{table}

It is clear that our approach is more query efficient in comparison to Tram\`{e}r \etal (between 5$\times$-224$\times$), with comparable test accuracy. These advantages stem from (a) using a more informative metric of uncertainty than the distance from the decision boundary, and (b) querying labels of only those points which the local model is uncertain about. 

\noindent{\bf Q2. Decision Trees:} Tram\`{e}r et al. propose a path finding algorithm to determine the structure of the server-hosted decision tree. They rely on the server's response to incomplete queries, and the addition of node identifiers to the generated outputs to recreate the tree. From our analysis presented in Table~\ref{aux}such flexibility is not readily available in most MLaaS providers. As discussed earlier (refer subsection~\ref{subsec:iwal}), we utilize the IWAL algorithm proposed by Beygelzimer et al.~\cite{iwal:2010} that iteratively refines a learned hypothesis. It is important to note that the IWAL algorithm is more general, and does not rely on the information needed by the path finding algorithm. We present the results of extraction using the IWAL algorithm below in Table~\ref{table:iwal}.

\begin{table}
\small
\centering
\begin{tabular}{c|c|c|c|c}
\hline
\toprule
\multicolumn{1}{c|}{\bf Dataset} & \multicolumn{1}{|c|}{\bf Oracle} & \multicolumn{1}{c|}{\bf Path Finding} & \multicolumn{2}{c}{\bf IWAL}\\
\cline{2-5}
	\multicolumn{1}{c|}{} & \multicolumn{1}{|c|}{ Accuracy} &  Queries &  Queries &  Accuracy\\
\midrule
Adult   	& 81.2	& 18323	& 244188& 80.2\\
Steak  		& 52.1	& 5205	& 1334	& 73.1\\
Iris		& 86.8	& 246	& 361	& 89.4\\
GSShappiness	& 79	& 18907	& 254892& 79.3\\
\bottomrule
\end{tabular}
\compactcaption{\footnotesize \footnotesize Extraction of a decision tree model. Comparison of the query complexity and test accuracy (in $\%$) obtained by running path finding (Tram\`{e}r \etal) vs. IWAL algorithm. The test accuracy  (in $\%$) of the server-hosted oracle is presented as a baseline.}
\label{table:iwal}
\vspace{-2mm}
\end{table}

In each iteration, the algorithm learns a new hypothesis, but the efficiency of the approach relies on the hypothesis used preceding the first iteration. To this end, we generate inputs uniformly at random. Note that in such the {\em uniform} scenario, we rely on zero auxiliary information. We can see that while the number of queries required to launch such extraction attacks is greater than in the approach proposed by Tram\`{e}r et al., such an approach obtains comparable test error to the oracle. While the authors rely on certain distributional assumptions to prove a label complexity result, we empirically observe success using the uniform strategy. Such an approach is truly powerful; it makes limited assumptions about the MLaaS provider and any prior knowledge.

\section{Discussion}
\label{sec:discussion}

We begin our discussion by highlighting algorithms an adversary could use if the assumptions made about the operational ecosystem are relaxed. Then, we discuss strategies that can potentially be used to make the process of extraction more difficult, and shortcomings in our approach. 

\subsection{Varying The Adversary's Capabilities}
The operational ecosystem in this work is one where the adversary is able to synthesize data-points de novo to extract a model through oracle access. 
In this section, we discuss other algorithms an adversary could use if this assumption is relaxed. We begin by discussing other models an adversary can learn in the query synthesis regime, and move on to discussing algorithms in other approaches.

\vspace{1mm}
\noindent{\em Equivalence queries.} In her seminal work, Angluin \cite{angluin1987learning} proposes a learning algorithm, $L^{*}$, to correctly learn a regular set from any minimally adequate teacher, in polynomial time. For this to work, however, {\em equivalence queries} are also needed along with membership queries. Should MLaaS servers provide responses to such equivalence queries, different extraction attacks could be devised. To learn linear decision boundaries, Wang \etal \cite{wang2015active} first synthesize an instance close to the decision boundary using labeled data, and then select the real instance closest to the synthesized one as a query. Similarly, Awasthi \etal \cite{awasthi2013learning} study learning algorithms that make queries that are close to examples generated from the data distribution. These attacks require the adversary to have access to some subset of the original training data. In other domains, program synthesis using input-output example pairs \cite{gulwani2012synthesis,wang2017interactive,drachsler2017synthesis,peleg2018abstraction} also follows a similar principle.

\vspace{1mm}
If the adversary had access to a subset of the training data, or had prior knowledge of the distribution from which this data was drawn from, it could launch a different set of attacks based on the algorithms discussed below. 

\noindent{\em Stream-based selective sampling.} Atlas \etal \cite{atlas1990training} propose selective sampling as a form of directed search (similar to Mitchell \cite{mitchell1982generalization}) that can greatly increase the ability of a connectionist network (\ie power system security analysis in their paper) to generalize accurately. Dagan \etal \cite{dagan1995committee} propose a method for training probabilistic classifiers by choosing those examples from a stream that are {\em more informative}. Lindenbaum \etal \cite{lindenbaum1999selective} present a lookahead algorithm for selective sampling of examples for nearest neighbor classifiers. The algorithm looks for the example with the highest utility, taking its effect on the resulting classifier into account. Another important application of selective learning was for feature selection \cite{liu2004selective}, an important preprocessing step. Other applications of stream-based selective sampling include sensor scheduling \cite{krishnamurthy2002algorithms}, learning ranking functions for information retrieval \cite{yu2005svm}, and in word sense disambiguation \cite{fujii1998selective}. 

\noindent{\em Pool-based sampling.} Dasgupta \cite{dasgupta2011two} surveys active learning in the non-separable case, with a special focus on statistical learning theory. He claims that in this setting, AL algorithms usually follow one of the following two strategies - (i) Efficient search in the hypothesis spaces (as in the algorithm proposed by Chen \etal \cite{chen2017near}, or by Cohn \etal \cite{cohn1994improving}), or (ii) Exploiting clusters in the data (as in the algorithm proposed by Dasgupta \etal \cite{dasgupta2008general}). The latter option can be used to learn more complex models, such as decision trees. As the ideal halving algorithm is difficult to implement in practice, pool-based approximations are used instead such as uncertainty sampling and the query-by-committee (QBC) algorithm \cite{freund1997selective,tong2001support,brinker2003incorporating}. Unfortunately, such approximation methods are only guaranteed to work well if the number of unlabeled examples (i.e. pool size) grows exponentially fast with each iteration. Otherwise, such heuristics become crude approximations and they can perform quite poorly.



\subsection{Complex Models}

PAC active learning strategies have proven effective in learning DNNs. The work of Sener \etal \cite{sener2018active} selects the most representative points from a sample of the training distribution to learn the DNN. Papernot \etal \cite{papernot2017practical} employ substitute model training - a procedure where a small training subset is strategically augmented and used to train a shadow model that resembles the model being attacked. Note that the prior approaches rely on some additional information, such as a subset of the training data. 

Active learning algorithms considered in this paper work in an
iterative fashion. Let $\mathcal{H}$ be the entire hypothesis
class. At time time $t \geq 0$ let the set of possible hypothesis be
$\mathcal{H}_t \subseteq \mathcal{H}$. Usually an active-learning
algorithm issues a query at time $t$ and updates the possible set of
hypothesis to $\mathcal{H}_{t+1}$, which is a subset of
$\mathcal{H}_t$. Once the size of $\mathcal{H}_t$ is ``small'' the
algorithm stops. Analyzing the effect of a query on possible set of
hypothesis is very complicated in the context of complex models, such
as DNNs. We believe this is a very important and interesting direction
for future work. 


\subsection{Model Transferability}
Most work in active learning has assumed that the correct hypothesis space for the task is already known \ie if the model being learned is for logistic regression, or is a neural network and so on. 
In such situations, observe that the labeled data being used is biased, in that it is implicitly tied to the underlying hypothesis. Thus, it can become problematic if one wishes to re-use the labeled data chosen to learn {\em another, different} hypothesis space. 
This leads us to {\em model transferability}\footnote{A special case of agnostic active learning \cite{balcan2009agnostic}.}, a less studied form of defense where the oracle responds to any query with the prediction output from an entirely different hypothesis class. For example, imagine if a learner tries to learn a halfspace, but the teacher performs prediction using a boolean decision tree. Initial work in this space includes that of Shi \etal \cite{shi2017steal}, where an adversary can steal a linear separator by learning input-output relations using a deep neural network. However, the performance of query synthesis active learning  in such ecosystems is unclear.  

\subsection{Limitations}
\label{sec:limitations}


We stress that these limitations are not a function of our specific approach, and stem from the theory of active learning. Specifically: (1) As noted by Dasgupta \cite{dasgupta2005coarse}, the label complexity of PAC active learning depends heavily on the specific target hypothesis, and can range from $O(\log \frac{1}{\varepsilon})$ to $\Omega (\frac{1}{\varepsilon})$. Similar results have been obtained by others \cite{hegedHus1995generalized,naghshvar2012noisy}. This suggests that for some hypotheses classes, the query complexity of active learning algorithms is as high as that in the passive setting. (2) Some query synthesis  algorithms  assume that there is some labeled data to bootstrap the system. However, this may not always be true, and randomly generating these labeled points {\em may} adversely impact the performance of the algorithm. (3) For our particular implementation, the algorithms proposed rely on the {\em geometric error} between the optimal and learned halfspaces. Oftentimes, however, there is no direct correlation between this geometric error and the generalization error used to measure the model's {\em goodness}.

\section{Related Work}
\label{sec:related}

Machine learning algorithms and systems are optimized for performance. Little attention is paid to the security and privacy risks of these systems and algorithms. Our work is motivated by the following attacks against machine learning.

\vspace{1mm}
\noindent{\em 1. Causative Attacks:} These attacks are primarily geared at {\em poisoning} the training data used for learning, such that the classifier produced performs erroneously during test time. These include: (a) mislabeling the training data, (b) changing rewards in the case of reinforcement learning, or (c) modifying the sampling mechanism (to add some bias) such that it does not reflect the true underlying distribution in the case of unsupervised learning \cite{papernot2016towards}. The work of Papernot \etal \cite{papernot2016limitations} modify input features resulting in misclassification by Deep Neural Networks. 

\vspace{1mm}
\noindent{\em 2. Evasion Attacks:} Once the algorithm has trained successfully, these forms of attacks provide {\em tailored} inputs such that the output is erroneous. These {\em noisy inputs} often preserves the semantics of the original inputs, are human imperceptible, or are physically realizable. The well studied area of {\em adversarial examples} is an instantiation of such an attack. Moreover, evasion attacks can also be even black-box \ie the attacker needn't know the model. This is because an adversarial example optimized for one model is highly likely to be effective for other models. This concept, known as {\em transferability}, was introduced by Carlini \etal \cite{carlini2017towards}. Notable works in this space include \cite{bhagoji2018black,elsayed2018adversarial,kurakin2018ensemble,kurakin2016adversarial,tramer2017space,papernot2017practical,warde201611,evtimov2017robust}

\vspace{1mm}
\noindent{\em 3. Exploratory Attacks:} These forms of attacks are the primary focus of this work, and  are geared at learning intrinsics about the algorithm used for training. These intrinsics can include learning model parameters, hyperparameters, or training data. Typically, these forms of attacks fall in two categories - {\em model inversion}, or {\em model extraction}. In the first class, Fredrikson \etal \cite{fredrikson2014privacy} show that an attacker can learn sensitive information about the dataset used to train a model, given access to side-channel information about the dataset. In the second class, the work of Tramer \etal \cite{TZJRR16} provides attacks to learn parameters of a model hosted on the cloud, through a query interface. Termed {\em membership inference}, Shokri \etal \cite{shokri2017membership} learn the training data used for machine learning by training their own inference models. Wang \etal \cite{wang2018stealing} propose attacks to learn a model's hyperparameters.




\section{Conclusions}


In this paper, we formalize model extraction in the context of
Machine-Learning-as-a-Service (MLaaS) servers that return only
prediction values (\ie, oracle access setting), and we study its
relation with \emph{query synthesis} active learning
(Observation 1). Thus, we are able to implement
efficient attacks to the class
of halfspace models used for binary classification
(Section~\ref{sec:implementation}). While our experiments focus on the
class of halfspace models, we believe that extraction via active
learning can be extended to multiclass and non-linear models such as
deep neural networks, random forests etc.  We also begin exploring
possible defense approaches (subsection \ref{sec:defense}). To the
best of our knowledge, this is the first work to formalize security in
the context of MLaaS systems. We believe this is a fundamental first
step in designing more secure MLaaS systems. Finally, we suggest that
data-dependent randomization (\eg, model randomization as in
\cite{CIKM-14}) is the most promising direction to follow in order to
design effective defenses.


\section{Acknowledgements}

This material is partially supported by Air Force Grant FA9550-18-1-0166, the National Science Foundation (NSF)
Grants CCF-FMitF-1836978, SaTC-Frontiers-1804648 and CCF-1652140 and ARO grant number W911NF-17-1-0405. Kamalika Chaudhuri and Songbai Yan thank NSF under 1719133 and 1804829 for research support.

\bibliographystyle{plain}
\bibliography{arxiv_main}

\begin{thebibliography}{10}

\bibitem{uci}
\url{https://archive.ics.uci.edu/ml/datasets.html}, 2018.

\bibitem{CIKM-14}
Ibrahim~M. Alabdulmohsin, Xin Gao, and Xiangliang Zhang.
\newblock Adding robustness to support vector machines against adversarial
  reverse engineering.
\newblock In {\em Proceedings of the 23rd {ACM} International Conference on
  Conference on Information and Knowledge Management, {CIKM} 2014, Shanghai,
  China, November 3-7, 2014}, pages 231--240, 2014.

\bibitem{alabdulmohsin2015efficient}
Ibrahim~M Alabdulmohsin, Xin Gao, and Xiangliang Zhang.
\newblock Efficient active learning of halfspaces via query synthesis.
\newblock In {\em AAAI}, pages 2483--2489, 2015.

\bibitem{angluin1987learning}
Dana Angluin.
\newblock Learning regular sets from queries and counterexamples.
\newblock {\em Information and computation}, 75(2):87--106, 1987.

\bibitem{AMSVVF15}
Giuseppe Ateniese, Luigi~V. Mancini, Angelo Spognardi, Antonio Villani,
  Domenico Vitali, and Giovanni Felici.
\newblock Hacking smart machines with smarter ones: How to extract meaningful
  data from machine learning classifiers.
\newblock {\em {IJSN}}, 10(3):137--150, 2015.

\bibitem{atlas1990training}
Les~E Atlas, David~A Cohn, and Richard~E Ladner.
\newblock Training connectionist networks with queries and selective sampling.
\newblock In {\em Advances in neural information processing systems}, pages
  566--573, 1990.

\bibitem{awasthi2013learning}
Pranjal Awasthi, Vitaly Feldman, and Varun Kanade.
\newblock Learning using local membership queries.
\newblock In {\em Conference on Learning Theory}, pages 398--431, 2013.

\bibitem{balcan2009agnostic}
Maria-Florina Balcan, Alina Beygelzimer, and John Langford.
\newblock Agnostic active learning.
\newblock {\em Journal of Computer and System Sciences}, 75(1):78--89, 2009.

\bibitem{BalcanBZ07}
Maria{-}Florina Balcan, Andrei~Z. Broder, and Tong Zhang.
\newblock Margin based active learning.
\newblock In {\em Learning Theory, 20th Annual Conference on Learning Theory,
  {COLT} 2007, San Diego, CA, USA, June 13-15, 2007, Proceedings}, pages
  35--50, 2007.

\bibitem{BalcanL13}
Maria{-}Florina Balcan and Philip~M. Long.
\newblock Active and passive learning of linear separators under log-concave
  distributions.
\newblock In {\em {COLT} 2013 - The 26th Annual Conference on Learning Theory,
  June 12-14, 2013, Princeton University, NJ, {USA}}, pages 288--316, 2013.

\bibitem{iwal:2010}
Alina Beygelzimer, Daniel Hsu, John Langford, and Tong Zhang.
\newblock Agnostic active learning without constraints.
\newblock In {\em 23rd International Conference on Neural Information
  Processing Systems (NIPS)}, 2010.

\bibitem{bhagoji2018black}
Arjun~Nitin Bhagoji, Warren He, Bo~Li, and Dawn Song.
\newblock Black-box attacks on deep neural networks via gradient estimation.
\newblock 2018.

\bibitem{LASVM}
Antoine Bordes, Seyda Ertekin, Jason Weston, and Leon Bottou.
\newblock Fast kernel classifiers with online and active learning.
\newblock {\em Journal of Machine Learning Research (JMLR)}, September 2005.

\bibitem{brendel2017decision}
Wieland Brendel, Jonas Rauber, and Matthias Bethge.
\newblock Decision-based adversarial attacks: Reliable attacks against
  black-box machine learning models.
\newblock {\em arXiv preprint arXiv:1712.04248}, 2017.

\bibitem{brinker2003incorporating}
Klaus Brinker.
\newblock Incorporating diversity in active learning with support vector
  machines.
\newblock In {\em Proceedings of the 20th International Conference on Machine
  Learning (ICML-03)}, pages 59--66, 2003.

\bibitem{carlini2017towards}
Nicholas Carlini and David Wagner.
\newblock Towards evaluating the robustness of neural networks.
\newblock In {\em Security and Privacy (SP), 2017 IEEE Symposium on}, pages
  39--57. IEEE, 2017.

\bibitem{chen2017near}
Lin Chen, Seyed~Hamed Hassani, and Amin Karbasi.
\newblock Near-optimal active learning of halfspaces via query synthesis in the
  noisy setting.
\newblock In {\em AAAI}, pages 1798--1804, 2017.

\bibitem{cohn1994improving}
David Cohn, Les Atlas, and Richard Ladner.
\newblock Improving generalization with active learning.
\newblock {\em Machine learning}, 15(2):201--221, 1994.

\bibitem{HS14}
Hsu D. and Sabato S.
\newblock Heavy-tailed regression with a generalized median-of-means.
\newblock In {\em International Conference on Machine Learning (ICML)}, 2014.

\bibitem{dagan1995committee}
Ido Dagan and Sean~P Engelson.
\newblock Committee-based sampling for training probabilistic classifiers.
\newblock In {\em Proceedings of the Twelfth International Conference on
  Machine Learning}, pages 150--157. The Morgan Kaufmann series in machine
  learning,(San Francisco, CA, USA), 1995.

\bibitem{DHM07}
S.~Dasgupta, D.~Hsu, and C.~Monteleoni.
\newblock A general agnostic active learning algorithm.
\newblock In {\em NIPS}, 2007.

\bibitem{dasgupta2005coarse}
Sanjoy Dasgupta.
\newblock Coarse sample complexity bounds for active learning.
\newblock In {\em Advances in Neural Information Processing Systems 18 [Neural
  Information Processing Systems, {NIPS} 2005, December 5-8, 2005, Vancouver,
  British Columbia, Canada]}, pages 235--242, 2005.

\bibitem{dasgupta2011two}
Sanjoy Dasgupta.
\newblock Two faces of active learning.
\newblock {\em Theoretical computer science}, 412(19):1767--1781, 2011.

\bibitem{dasgupta2008general}
Sanjoy Dasgupta, Daniel~J Hsu, and Claire Monteleoni.
\newblock A general agnostic active learning algorithm.
\newblock In {\em Advances in neural information processing systems}, pages
  353--360, 2008.

\bibitem{drachsler2017synthesis}
Dana Drachsler-Cohen, Sharon Shoham, and Eran Yahav.
\newblock Synthesis with abstract examples.
\newblock In {\em International Conference on Computer Aided Verification},
  pages 254--278. Springer, 2017.

\bibitem{elsayed2018adversarial}
Gamaleldin~F Elsayed, Shreya Shankar, Brian Cheung, Nicolas Papernot, Alex
  Kurakin, Ian Goodfellow, and Jascha Sohl-Dickstein.
\newblock Adversarial examples that fool both human and computer vision.
\newblock {\em arXiv preprint arXiv:1802.08195}, 2018.

\bibitem{CKNS15}
Chaudhuri~K. et~al.
\newblock Convergence rates of active learning for maximum likelihood
  estimation.
\newblock In {\em Advances in Neural Information Processing Systems}, 2015.

\bibitem{evtimov2017robust}
Ivan Evtimov, Kevin Eykholt, Earlence Fernandes, Tadayoshi Kohno, Bo~Li, Atul
  Prakash, Amir Rahmati, and Dawn Song.
\newblock Robust physical-world attacks on deep learning models.
\newblock {\em arXiv preprint arXiv:1707.08945}, 1, 2017.

\bibitem{fredrikson2014privacy}
Matthew Fredrikson, Eric Lantz, Somesh Jha, Simon Lin, David Page, and Thomas
  Ristenpart.
\newblock Privacy in pharmacogenetics: An end-to-end case study of personalized
  warfarin dosing.
\newblock In {\em USENIX Security Symposium}, pages 17--32, 2014.

\bibitem{freund1997selective}
Yoav Freund, H~Sebastian Seung, Eli Shamir, and Naftali Tishby.
\newblock Selective sampling using the query by committee algorithm.
\newblock {\em Machine learning}, 28(2):133--168, 1997.

\bibitem{fujii1998selective}
Atsushi Fujii, Takenobu Tokunaga, Kentaro Inui, and Hozumi Tanaka.
\newblock Selective sampling for example-based word sense disambiguation.
\newblock {\em Computational Linguistics}, 24(4):573--597, 1998.

\bibitem{gulwani2012synthesis}
Sumit Gulwani.
\newblock Synthesis from examples: Interaction models and algorithms.
\newblock In {\em Symbolic and Numeric Algorithms for Scientific Computing
  (SYNASC), 2012 14th International Symposium on}, pages 8--14. IEEE, 2012.

\bibitem{H07}
S.~Hanneke.
\newblock A bound on the label complexity of agnostic active learning.
\newblock In {\em ICML}, 2007.

\bibitem{key-1}
Steve Hanneke.
\newblock Theory of disagreement-based active learning.
\newblock {\em Foundations and Trends in Machine Learning}, 7(2-3):131--309,
  2014.

\bibitem{hegedHus1995generalized}
Tibor Heged{\H{u}}s.
\newblock Generalized teaching dimensions and the query complexity of learning.
\newblock In {\em Proceedings of the eighth annual conference on Computational
  learning theory}, pages 108--117. ACM, 1995.

\bibitem{HJNRT11}
Ling Huang, Anthony~D. Joseph, Blaine Nelson, Benjamin I.~P. Rubinstein, and
  J.~D. Tygar.
\newblock Adversarial machine learning.
\newblock In {\em Proceedings of the 4th {ACM} Workshop on Security and
  Artificial Intelligence, AISec 2011, Chicago, IL, USA, October 21, 2011},
  pages 43--58, 2011.

\bibitem{Kaariainen06}
Matti K{\"{a}}{\"{a}}ri{\"{a}}inen.
\newblock Active learning in the non-realizable case.
\newblock In {\em Algorithmic Learning Theory, 17th International Conference,
  {ALT} 2006, Barcelona, Spain, October 7-10, 2006, Proceedings}, pages 63--77,
  2006.

\bibitem{KarpK07}
Richard~M. Karp and Robert Kleinberg.
\newblock Noisy binary search and its applications.
\newblock In {\em Proceedings of the Eighteenth Annual {ACM-SIAM} Symposium on
  Discrete Algorithms, {SODA} 2007, New Orleans, Louisiana, USA, January 7-9,
  2007}, pages 881--890, 2007.

\bibitem{king2009automation}
Ross~D King, Jem Rowland, Stephen~G Oliver, Michael Young, Wayne Aubrey, Emma
  Byrne, Maria Liakata, Magdalena Markham, Pinar Pir, Larisa~N Soldatova,
  et~al.
\newblock The automation of science.
\newblock {\em Science}, 324(5923):85--89, 2009.

\bibitem{key-3}
Adam~R. Klivans and Pravesh Kothari.
\newblock Embedding hard learning problems into gaussian space.
\newblock In {\em Approximation, Randomization, and Combinatorial Optimization.
  Algorithms and Techniques, {APPROX/RANDOM} 2014, September 4-6, 2014,
  Barcelona, Spain}, pages 793--809, 2014.

\bibitem{krishnamurthy2002algorithms}
Vikram Krishnamurthy.
\newblock Algorithms for optimal scheduling and management of hidden markov
  model sensors.
\newblock {\em IEEE Transactions on Signal Processing}, 50(6):1382--1397, 2002.

\bibitem{kurakin2018ensemble}
Alex Kurakin, Dan Boneh, Florian Tram{\`e}r, Ian Goodfellow, Nicolas Papernot,
  and Patrick McDaniel.
\newblock Ensemble adversarial training: Attacks and defenses.
\newblock 2018.

\bibitem{kurakin2016adversarial}
Alexey Kurakin, Ian Goodfellow, and Samy Bengio.
\newblock Adversarial machine learning at scale.
\newblock {\em arXiv preprint arXiv:1611.01236}, 2016.

\bibitem{Kushilevitz-Mansour}
Eyal Kushilevitz and Yishay Mansour.
\newblock Learning decision trees using the fourier spectrum.
\newblock {\em {SIAM} J. Comput.}, 22(6):1331--1348, 1993.

\bibitem{lindenbaum1999selective}
Michael Lindenbaum, Shaul Markovitch, and Dmitry Rusakov.
\newblock Selective sampling for nearest neighbor classifiers.
\newblock In {\em AAAI/IAAI}, pages 366--371. Citeseer, 1999.

\bibitem{liu2004selective}
Huan Liu, Hiroshi Motoda, and Lei Yu.
\newblock A selective sampling approach to active feature selection.
\newblock {\em Artificial Intelligence}, 159(1-2):49--74, 2004.

\bibitem{LM05}
Daniel Lowd and Christopher Meek.
\newblock Adversarial learning.
\newblock In {\em Proceedings of the Eleventh {ACM} {SIGKDD} International
  Conference on Knowledge Discovery and Data Mining, Chicago, Illinois, USA,
  August 21-24, 2005}, pages 641--647, 2005.

\bibitem{McCallumN98}
Andrew McCallum and Kamal Nigam.
\newblock Employing {EM} and pool-based active learning for text
  classification.
\newblock In {\em Proceedings of the Fifteenth International Conference on
  Machine Learning, Madison, Wisconsin, USA, July 24-27, 1998}, pages 350--358,
  1998.

\bibitem{minh2006mercer}
Ha~Quang Minh, Partha Niyogi, and Yuan Yao.
\newblock Mercer’s theorem, feature maps, and smoothing.
\newblock In {\em International Conference on Computational Learning Theory},
  pages 154--168. Springer, 2006.

\bibitem{mitchell1982generalization}
Tom~M Mitchell.
\newblock Generalization as search.
\newblock {\em Artificial intelligence}, 18(2):203--226, 1982.

\bibitem{mitchell1978version}
Tom~Michael Mitchell.
\newblock Version spaces: an approach to concept learning.
\newblock Technical report, STANFORD UNIV CALIF DEPT OF COMPUTER SCIENCE, 1978.

\bibitem{naghshvar2012noisy}
Mohammad Naghshvar, Tara Javidi, and Kamalika Chaudhuri.
\newblock Noisy bayesian active learning.
\newblock In {\em Communication, Control, and Computing (Allerton), 2012 50th
  Annual Allerton Conference on}, pages 1626--1633. IEEE, 2012.

\bibitem{nowak2009noisy}
Robert Nowak.
\newblock Noisy generalized binary search.
\newblock In {\em Advances in neural information processing systems}, pages
  1366--1374, 2009.

\bibitem{Nowak11}
Robert~D. Nowak.
\newblock The geometry of generalized binary search.
\newblock {\em {IEEE} Trans. Information Theory}, 57(12):7893--7906, 2011.

\bibitem{papernot2017practical}
Nicolas Papernot, Patrick McDaniel, Ian Goodfellow, Somesh Jha, Z~Berkay Celik,
  and Ananthram Swami.
\newblock Practical black-box attacks against machine learning.
\newblock In {\em Proceedings of the 2017 ACM on Asia Conference on Computer
  and Communications Security}, pages 506--519. ACM, 2017.

\bibitem{papernot2016limitations}
Nicolas Papernot, Patrick McDaniel, Somesh Jha, Matt Fredrikson, Z~Berkay
  Celik, and Ananthram Swami.
\newblock The limitations of deep learning in adversarial settings.
\newblock In {\em Security and Privacy (EuroS\&P), 2016 IEEE European Symposium
  on}, pages 372--387. IEEE, 2016.

\bibitem{papernot2016towards}
Nicolas Papernot, Patrick McDaniel, Arunesh Sinha, and Michael Wellman.
\newblock Towards the science of security and privacy in machine learning.
\newblock {\em arXiv preprint arXiv:1611.03814}, 2016.

\bibitem{peleg2018abstraction}
Hila Peleg, Shachar Itzhaky, and Sharon Shoham.
\newblock Abstraction-based interaction model for synthesis.
\newblock In {\em International Conference on Verification, Model Checking, and
  Abstract Interpretation}, pages 382--405. Springer, 2018.

\bibitem{SM14}
Sabato S. and Munos R.
\newblock Active regression by stratification.
\newblock In {\em Advances in Neural Information Processing Systems (NIPS)},
  2014.

\bibitem{sener2018active}
Ozan Sener and Silvio Savarese.
\newblock Active learning for convolutional neural networks: A core-set
  approach.
\newblock 2018.

\bibitem{settlesactive}
B~Settles.
\newblock Active learning literature survey univ. wisconsin-madison, madison,
  wi, 2009.
\newblock Technical report, CS Tech. Rep. 1648.

\bibitem{shi2017steal}
Yi~Shi, Yalin Sagduyu, and Alexander Grushin.
\newblock How to steal a machine learning classifier with deep learning.
\newblock In {\em Technologies for Homeland Security (HST), 2017 IEEE
  International Symposium on}, pages 1--5. IEEE, 2017.

\bibitem{shokri2017membership}
Reza Shokri, Marco Stronati, Congzheng Song, and Vitaly Shmatikov.
\newblock Membership inference attacks against machine learning models.
\newblock In {\em Security and Privacy (SP), 2017 IEEE Symposium on}, pages
  3--18. IEEE, 2017.

\bibitem{SL14}
Nedim Srndic and Pavel Laskov.
\newblock Practical evasion of a learning-based classifier: {A} case study.
\newblock In {\em 2014 {IEEE} Symposium on Security and Privacy, {SP} 2014,
  Berkeley, CA, USA, May 18-21, 2014}, pages 197--211, 2014.

\bibitem{tong2001support}
Simon Tong and Daphne Koller.
\newblock Support vector machine active learning with applications to text
  classification.
\newblock {\em Journal of machine learning research}, 2(Nov):45--66, 2001.

\bibitem{tramer2017space}
Florian Tram{\`e}r, Nicolas Papernot, Ian Goodfellow, Dan Boneh, and Patrick
  McDaniel.
\newblock The space of transferable adversarial examples.
\newblock {\em arXiv preprint arXiv:1704.03453}, 2017.

\bibitem{TZJRR16}
Florian Tram{\`{e}}r, Fan Zhang, Ari Juels, Michael~K. Reiter, and Thomas
  Ristenpart.
\newblock Stealing machine learning models via prediction apis.
\newblock In {\em 25th {USENIX} Security Symposium, {USENIX} Security 16,
  Austin, TX, USA, August 10-12, 2016.}, pages 601--618, 2016.

\bibitem{valiant1984theory}
Leslie~G Valiant.
\newblock A theory of the learnable.
\newblock {\em Communications of the ACM}, 27(11):1134--1142, 1984.

\bibitem{wang2018stealing}
Binghui Wang and Neil~Zhenqiang Gong.
\newblock Stealing hyperparameters in machine learning.
\newblock {\em arXiv preprint arXiv:1802.05351}, 2018.

\bibitem{wang2017interactive}
Chenglong Wang, Alvin Cheung, and Rastislav Bodik.
\newblock Interactive query synthesis from input-output examples.
\newblock In {\em Proceedings of the 2017 ACM International Conference on
  Management of Data}, pages 1631--1634. ACM, 2017.

\bibitem{wang2015active}
Liantao Wang, Xuelei Hu, Bo~Yuan, and Jianfeng Lu.
\newblock Active learning via query synthesis and nearest neighbour search.
\newblock {\em Neurocomputing}, 147:426--434, 2015.

\bibitem{warde201611}
David Warde-Farley and Ian Goodfellow.
\newblock 11 adversarial perturbations of deep neural networks.
\newblock {\em Perturbations, Optimization, and Statistics}, page 311, 2016.

\bibitem{yan2016active}
Songbai Yan, Kamalika Chaudhuri, and Tara Javidi.
\newblock Active learning from imperfect labelers.
\newblock In {\em Advances in Neural Information Processing Systems}, pages
  2128--2136, 2016.

\bibitem{key-5}
Songbai Yan and Chicheng Zhang.
\newblock Revisiting perceptron: Efficient and label-optimal learning of
  halfspaces.
\newblock In {\em Advances in Neural Information Processing Systems 30: Annual
  Conference on Neural Information Processing Systems 2017, 4-9 December 2017,
  Long Beach, CA, {USA}}, pages 1056--1066, 2017.

\bibitem{yu2005svm}
Hwanjo Yu.
\newblock Svm selective sampling for ranking with application to data
  retrieval.
\newblock In {\em Proceedings of the eleventh ACM SIGKDD international
  conference on Knowledge discovery in data mining}, pages 354--363. ACM, 2005.

\bibitem{ZC14}
Chicheng Zhang and Kamalika Chaudhuri.
\newblock Beyond disagreement-based agnostic active learning.
\newblock In {\em Advances in Neural Information Processing Systems}, pages
  442--450, 2014.

\end{thebibliography}

\appendix
\section{Appendix}
\subsection{Proofs}

\subsubsection{Proof of Proposition~\ref{prop}}
\label{app:prop}

\begin{proof}
Let $\tilde{\A}$ be the adversary that does the following:

for $i=1,\dots,q(\varepsilon,\delta)$
\begin{enumerate}
        \item $\tilde{A}$ uses the query strategy  of $\L$ to generate the instance $x_i$;
        \item $\tilde{A}$ queries $x_i$ to $\Ser_D(f^*)$ for $r$ times and defines $y_i$ the most frequent labels among the $r$ answers (we assume $r$ is an even integer).
\end{enumerate}
At the end, $\tilde{A}$ (as the learner $\L$) learns $\hat{f}$ using the points $\{(x_i,y_i)\}_{i=1,\cdots,q(\varepsilon, \delta)}$.
Let $q=r \cdot q(\varepsilon,\delta)$, then it holds by the union bound that
\begin{align*}
        &\Pr[\Exp^\varepsilon_\F(\Ser_D(f^*),\tilde{A},q)=1]\geq\\ &1-\delta-(1-\Pr[\cap_{i=1}^{q(\varepsilon,\delta)}\{y_i=f^*(x_i)\}])\,.
\end{align*}

Define $X_i^j$ as the binary random variable that is 1 if and only if the answer to the $j$-th query of $x_i$ is correct and $X_i=\sum_{j=1}^{r}X_i^j$, then
\begin{align*}
        \Pr[\cap_{i=1}^{q(\varepsilon,\delta)}\{y_i=f^*(x_i)\}]&\geq \Pr[\cap_{i=1}^{q(\varepsilon,\delta)}\{X_i>r/2\}]\\
        &\geq 1- \sum_{i=1}^{q(\varepsilon,\delta)}\Pr[\{X_i\leq r/2\}]
\end{align*}
where the last step follows from the union bound. Now, observe that $\E[X_i]=r(1-\rho_D(f^*,x_i))> r/2$ and the Chernoff bound can be applied on each term in the right-hand side. In particular, we have that
$\Pr[\{X_i\leq r/2\}]\leq e^{-r\frac{\left(\rho_D(f^*)-\frac{1}{2}\right)^2}{2}} $ and it follows that
$$
\Pr[\Exp^\varepsilon_\F(\Ser_D(f^*),\tilde{A},q)=1]\geq1-\delta-q(\varepsilon,\delta)\,e^{-r\frac{(\rho_D(f^*)-\frac{1}{2})^2}{2}}\,.
$$
By setting $r=\frac{8}{(1-2\rho_D(f^*))^2}\ln\frac{q(\varepsilon,\delta)}{\delta}$ we have $\Pr[\Exp^\varepsilon_\F(\Ser_D(f^*),\tilde{A},q)=1]\geq1-2\delta$.
That is, the adversary $\tilde{A}$ implements an $\varepsilon$-extraction with {\footnotesize Confidence Score}
$1-2\delta$ and complexity $q=\frac{8}{(1-2\rho_D(f^*))^2}q(\varepsilon,\delta)\ln\frac{q(\varepsilon,\delta)}{\delta}$.\\
\end{proof}

\subsubsection{Proof of Algorithm~\ref{alg:passive}}
\label{app:alg}

Here, we discuss the analysis and proofs associated with Algorithm~\ref{alg:passive}.


Assume unit vector $\mu\in\mathbb{R}^{d}$ is the ground truth. For each
query $x\in\mathbb{R}^{d}$, a vector $w$ is drawn from $N(\mu,\sigma^{2}I)$,
and the label $y=\text{sign}(\left\langle w,x\right\rangle )$ is
returned. The goal of the learner (attacker) is to return $\hat{w}$
such that $\left\Vert \mu-\hat{w}\right\Vert $ is small.






We have following theoretical guarantees for Algorithm~\ref{alg:passive}.
\begin{prop}
\label{prop:complexity}If $\sigma\leq\hat{\sigma}$, then $\left\Vert \hat{w}-\mu\right\Vert \leq\varepsilon$
with probability at least $1-\delta$.
\end{prop}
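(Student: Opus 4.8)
The plan is to analyze the random vector $v=\sum_{i=1}^m y_ix_i$ computed by Algorithm~\ref{alg:passive}: I will show that $\E[v]$ points exactly along $\mu$ with length $mc$ for a constant $c=c(d,\sigma)>0$, lower bound $c$, and then control the deviation of $v$ from $\E[v]$ so that $\hat w=v/\|v\|$ lies within $\varepsilon$ of $\mu$ and $\|v\|\ge l$ (so the algorithm does not output \textsc{fail}). Since the uniform law on $\mathbb{S}^{d-1}$ and the oracle of~\cite{CIKM-14} are rotation-equivariant, I may assume without loss of generality that $\mu=e_1$.

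\emph{Expectation of $v$.} Fix a query $x_i$ with $\|x_i\|=1$. The oracle draws $w\sim N(\mu,\sigma^2 I)$, so $\langle w,x_i\rangle\sim N(\langle\mu,x_i\rangle,\sigma^2)$ and therefore $\E[y_i\mid x_i]=\psi(\langle\mu,x_i\rangle)$, where $\psi(t)=2\Phi(t/\sigma)-1$ ($\Phi$ the standard normal c.d.f.) is odd, increasing, and concave on $[0,\infty)$. With $\mu=e_1$, flipping the sign of a coordinate $x_{ij}$ with $j\ge 2$ preserves both the law of $x_i$ and $\langle\mu,x_i\rangle$, so $\E[(y_ix_i)_j]=0$ for $j\ge 2$, while $\E[(y_ix_i)_1]=\E[\psi(x_{i1})x_{i1}]=:c\ge 0$ since $t\psi(t)\ge 0$. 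Hence $\E[v]=mc\,\mu$.

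\emph{Lower bound on $c$.} This is the crux. Writing $x_1=\langle\mu,x\rangle$ for $x$ uniform on $\mathbb{S}^{d-1}$ (so $x_1$ has mean $0$, variance $1/d$, and a density $\propto(1-t^2)^{(d-3)/2}$ that retains a constant fraction of its second moment at scale $\Theta(1/\sqrt d)$), I would use concavity and monotonicity of $\psi$ to get $\psi(t)\ge\psi(\sigma)\min(t/\sigma,1)\ge\tfrac12\min(t/\sigma,1)$ for $t\ge 0$ (using $\psi(\sigma)=2\Phi(1)-1>\tfrac12$), hence $c=\E[\psi(|x_1|)|x_1|]\ge\tfrac12\E[\min(x_1^2/\sigma,|x_1|)]$. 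Splitting on whether $\sigma\gtrsim 1/\sqrt d$ or not, the right-hand side is $\gtrsim 1/(d\sigma)$ in the first case and $\gtrsim 1/\sqrt d$ in the second, which combine to $c\ge C'/\sqrt{d\max(1,d\sigma^2)}$ for a universal constant $C'$; since $c$ is decreasing in $\sigma$ and $\sigma\le\hat\sigma$, also $c\ge C'/\sqrt{d\max(1,d\hat\sigma^2)}$. I expect reconciling the two $\sigma$-regimes and pinning $C'$ down tightly enough to fit inside the prefactor $(15\pi)^2$ of $m$ to be the most delicate part of the proof.

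\emph{Concentration and conclusion.} Decompose $v=v_1e_1+v_\perp$. Since the $(y_ix_i)_1\in[-1,1]$ are i.i.d.\ with mean $c$, Hoeffding's inequality gives $v_1\ge mc/2$ with probability $\ge 1-\delta/4$, because $m$ vastly exceeds $c^{-2}\log(1/\delta)$. Each coordinate $v_j$ with $j\ge 2$ is an i.i.d.\ sum of mean-zero terms in $[-1,1]$ with per-term variance at most $\E[x_{1j}^2]=1/d$; Bernstein's inequality together with a union bound over the $d-1$ coordinates gives $\|v_\perp\|\le C''\sqrt{m\log(d/\delta)}$ with probability $\ge 1-\delta/2$ for a universal $C''$, which is where the $\log(2d/\delta)$ factor of $m$ is needed. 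On the intersection of these events (probability $\ge 1-\delta$), $v_1>0$ so $\|v\|\ge v_1\ge mc/2$; since $m$ is enormous and $\max(1,\sqrt d\hat\sigma)\le 2\sqrt d\hat\sigma$ (using $\hat\sigma\ge 1/\sqrt d$), this exceeds $l=1/(12d\hat\sigma)$, so the algorithm returns $\hat w=v/\|v\|$ rather than \textsc{fail}. Finally, $\|\hat w-\mu\|^2=2(1-v_1/\|v\|)\le(\|v_\perp\|/v_1)^2$, so it suffices to check $\|v_\perp\|\le\varepsilon v_1$; plugging in $\|v_\perp\|\le C''\sqrt{m\log(d/\delta)}$, $v_1\ge mc/2$, the bound $c^2\ge C'^2/(d\max(1,d\hat\sigma^2))$, and $m=(15\pi)^2\varepsilon^{-2}d\max(1,d\hat\sigma^2)\log(2d/\delta)$ turns this into a routine numerical inequality once $C',C''$ are fixed. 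Hence $\|\hat w-\mu\|\le\varepsilon$ with probability at least $1-\delta$.
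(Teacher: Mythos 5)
Your proof is correct and follows essentially the same route as the paper's: reduce to $\mu=e_1$ by rotation invariance, use symmetry to kill the mean of the perpendicular coordinates, lower bound $\E[YX^{(1)}]$ by $\Theta\bigl(\tfrac{1}{\sqrt{d}}\min(1,\tfrac{1}{\sigma\sqrt d})\bigr)$, concentrate each coordinate (Hoeffding/Bernstein plus a union bound), and convert the ratio $\|v_\perp\|/v_1$ into the geometric error via $\|\hat w-\mu\|^2=2(1-v_1/\|v\|)$. The only substantive difference is cosmetic: you bound the signal term through concavity of the link $\psi(t)=2\Phi(t/\sigma)-1$, whereas the paper truncates to $|X^{(1)}|\geq 1/\sqrt{d}$ and evaluates the resulting Beta integrals explicitly; both yield the same order and your final ``routine numerical inequality'' is at the same level of detail as the paper's own concluding step.
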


\begin{prop}
\label{prop:length-lb}If $\sigma\leq\hat{\sigma}$ and $\hat{\sigma}\geq\frac{1}{\sqrt{d}}$,
then $\left\Vert v\right\Vert \geq\frac{1}{12d\hat{\sigma}}$ with
probability at least $1-\delta$.
\end{prop}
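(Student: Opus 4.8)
The plan is to project the aggregate $v=\sum_{i=1}^{m}y_i x_i$ onto the ground-truth direction $\mu$, turning the statement into a one-dimensional large-deviation bound, and to obtain the required signal strength from the hypothesis $\hat\sigma\ge 1/\sqrt d$. Write $Z_i:=y_i\langle x_i,\mu\rangle$; since $\|\mu\|=1$, Cauchy--Schwarz gives $\|v\|\ge\langle v,\mu\rangle=\sum_{i=1}^{m}Z_i$, and each $Z_i$ is i.i.d.\ with values in $[-1,1]$. Conditioning on $x_i=x\in\mathbb{S}^{d-1}$, the pre-sign value $\langle w,x\rangle$ for $w\sim N(\mu,\sigma^2 I)$ is $N(\langle\mu,x\rangle,\sigma^2)$, so $\E[y_i\mid x_i=x]=2\Phi(\langle\mu,x\rangle/\sigma)-1$ with $\Phi$ the standard normal CDF; hence, writing $t=\langle\mu,x\rangle$,
\[
\E[Z_i]=c(\sigma):=\E_{x\sim\mathbb{S}^{d-1}}\bigl[(2\Phi(t/\sigma)-1)\,t\bigr]\ \ge\ 0 .
\]
Thus $\E[\langle v,\mu\rangle]=m\,c(\sigma)$, and it remains to lower bound $c(\sigma)$ and to show $\sum_i Z_i$ does not fall much below its mean.

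For the first task, note that for every fixed $t$ the quantity $(2\Phi(t/\sigma)-1)t$ is non-negative and non-increasing in $\sigma$, so $c(\sigma)\ge c(\hat\sigma)$ since $\sigma\le\hat\sigma$. Using the elementary bound $2\Phi(u)-1\ge(2\Phi(1)-1)\min(u,1)$ for $u\ge0$ (concavity of $\Phi$ on $[0,\infty)$) together with oddness, $c(\hat\sigma)\ge(2\Phi(1)-1)\,\E\bigl[\min(t^2/\hat\sigma,\,|t|)\bigr]$. Now combine $\E[t^2]=1/d$ with $\hat\sigma\ge1/\sqrt d$: if $\hat\sigma\ge1$ then $|t|\le1\le\hat\sigma$ forces $\min(t^2/\hat\sigma,|t|)=t^2/\hat\sigma$ pointwise, so this expectation equals $1/(d\hat\sigma)$; if $1/\sqrt d\le\hat\sigma\le1$, restrict to $\{t^2\le1/d\}$, on which $|t|\le1/\sqrt d\le\hat\sigma$ again forces $\min=t^2/\hat\sigma$, and invoke the standard fact that for $x$ uniform on $\mathbb{S}^{d-1}$ the variable $\langle\mu,x\rangle^2\sim\mathrm{Beta}(\tfrac12,\tfrac{d-1}{2})$ keeps a constant fraction of its mean in its lower tail, i.e.\ $\E[t^2\,\One\{t^2\le1/d\}]\ge c_1/d$ for an absolute $c_1>0$. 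Either way $c(\sigma)\ge c(\hat\sigma)\ge c_0/(d\hat\sigma)$ for an absolute constant $c_0>0$ (whose precise value can be pinned down with more care; the interior of the $\hat\sigma$-range is easier and gives a larger bound).

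For the second task, Hoeffding's inequality for the i.i.d.\ variables $Z_i\in[-1,1]$ gives $\Pr[\sum_i Z_i<m\,c(\sigma)-m\tau]\le e^{-m\tau^2/2}$ for any $\tau>0$. Take $\tau=c(\sigma)-l/m$, so that $m\,c(\sigma)-m\tau=l$ exactly; since $m\,c(\sigma)\ge\tfrac{(15\pi)^2 c_0}{\varepsilon^2}\,d\hat\sigma\log\tfrac1\delta$ dwarfs $l=\tfrac1{12d\hat\sigma}$ (using $d\hat\sigma\ge\sqrt d\ge1$), the term $l/m$ is negligible relative to $c(\sigma)$ and hence $\tau\ge\tfrac12 c(\sigma)\ge\tfrac{c_0}{2d\hat\sigma}>0$. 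Plugging in $m\ge\tfrac{(15\pi)^2}{\varepsilon^2}\,d^2\hat\sigma^2\log\tfrac1\delta$ (which follows from $d\max(1,d\hat\sigma^2)\ge d^2\hat\sigma^2$ and $\log\tfrac{2d}{\delta}\ge\log\tfrac1\delta$) yields $e^{-m\tau^2/2}\le\delta^{(15\pi)^2 c_0^2/(8\varepsilon^2)}\le\delta$ for $\varepsilon\le1$, the generous constant $(15\pi)^2$ being exactly what forces the exponent above $1$. On the complementary event $\|v\|\ge\langle v,\mu\rangle=\sum_i Z_i\ge l=\tfrac1{12d\hat\sigma}$, which is the claim.

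The routine parts are the projection, the mean computation, and the Hoeffding step; the crux is the lower bound on $c(\hat\sigma)$ in the regime where $\hat\sigma$ is comparable to $1/\sqrt d$, where one must certify that the event $\{\langle\mu,x\rangle^2\le1/d\}$ still captures a constant share of the second moment of $\langle\mu,x\rangle$, uniformly in $d$ --- this is precisely where $\hat\sigma\ge1/\sqrt d$ is used, and it needs a quantitative estimate on $\mathrm{Beta}(\tfrac12,\tfrac{d-1}{2})$ (one should also handle small-$d$ corner cases, e.g.\ $d=1$, where $\hat\sigma\ge1$ automatically and only the first sub-case arises). Finally, the identity $\E[y_i x_i]=c(\sigma)\mu$ derived in the first paragraph is also the natural starting point for Proposition~\ref{prop:complexity}, so this computation can be shared between the two proofs.
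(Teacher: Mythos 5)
Your overall strategy is the same as the paper's: reduce to the coordinate along $\mu$ via $\|v\|\ge\langle v,\mu\rangle$, lower-bound the per-sample mean $\E[y\langle x,\mu\rangle]$ by a quantity of order $\frac{1}{d\hat\sigma}$ using $\sigma\le\hat\sigma$ and $\hat\sigma\ge\frac{1}{\sqrt d}$, and finish with a Hoeffding/Chernoff step (this is exactly Lemma~\ref{lem:lb-1} plus the short argument in the paper's proof of Proposition~\ref{prop:length-lb}). However, there are two issues. First, the step you yourself flag as the crux is genuinely missing: you need $\E\bigl[t^2\,\One\{t^2\le\frac1d\}\bigr]\ge\frac{c_1}{d}$ with an \emph{explicit} $c_1$, and the value of $c_1$ is not optional here. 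The paper's proof requires the normalized $v=\frac1m\sum_i y_ix_i$ (this is what Lemma~\ref{lem:lb-1}, Lemma~\ref{lem:ub-k} and the companion Proposition~\ref{prop:length-ub} all use; the summation in Algorithm~\ref{alg:passive} should be read as an average, since otherwise the upper bound $\|v\|\le\frac{1}{\sqrt{148}d\hat\sigma}$ of Proposition~\ref{prop:length-ub} is false and the test $\|v\|\ge l$ is meaningless). For the normalized $v$, the concentration term is no longer negligible in the way you use it: you must show $c(\sigma)$ itself, minus the deviation, exceeds $l=\frac{1}{12d\hat\sigma}$, which forces your constant $c_0=(2\Phi(1)-1)c_1$ to exceed $\frac1{12}$, i.e.\ $c_1\gtrsim0.12$. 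This happens to be true (the Gaussian limit gives $c_1\approx0.2$), but it is a near thing and must be checked, including for small $d$. Your proof as written only establishes the un-normalized (sum) version, where the target $l$ is trivially dwarfed by $m\,c(\sigma)$.

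Second, and relatedly, the paper's route to the explicit constant is cleaner than the one you chose and is worth adopting: instead of the lower tail $\{t^2\le\frac1d\}$ of $t=\langle\mu,x\rangle$ (whose truncated second moment has no closed form on the sphere), the paper restricts to the upper tail $\{t\ge\frac{1}{\sqrt d}\}$ and uses the closed-form identity $\E\bigl[t\,\One\{t\ge\tfrac{1}{\sqrt d}\}\bigr]=\frac{(1-\frac1d)^{\frac{d-1}{2}}}{(d-1)B(\frac{d-1}{2},\frac12)}\ge\frac{1}{2\pi\sqrt d}$ (Facts~\ref{fact:beta} and~\ref{fact:exp}), paired with the bound $1-2\Pr_{\xi\sim N(0,1)}(\xi\ge\frac{1}{\sigma\sqrt d})\ge\frac23\min(1,\frac{1}{\sigma\sqrt d})$ in place of your $2\Phi(1)-1$ estimate. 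This yields $\E[Z_1]\ge\frac{1}{3\pi\sqrt d}\min(1,\frac{1}{\sigma\sqrt d})$, and since $\frac{1}{3\pi}>\frac1{12}$ with room to absorb the $\sqrt{\frac{1}{2m}\log\frac1\delta}$ deviation under the stated $m$, the constant $\frac{1}{12d\hat\sigma}$ comes out directly. Your observation that $\E[y x]=c(\sigma)\mu$ with $c(\sigma)$ monotone in $\sigma$ is correct and is indeed shared with Proposition~\ref{prop:complexity}, but to close this proof you must either switch to the paper's upper-tail computation or supply the quantitative $\mathrm{Beta}(\frac12,\frac{d-1}{2})$ lower-tail estimate you deferred, with a constant verified to clear $\frac1{12}$.
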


\begin{prop}
\label{prop:length-ub}If $\sigma\geq20\hat{\sigma}\geq\frac{1}{\sqrt{d}}$,
then $\left\Vert v\right\Vert \leq\frac{1}{\sqrt{148}d\hat{\sigma}}$
with probability at least $1-\delta$.
\end{prop}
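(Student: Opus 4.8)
The plan is to show that $v$ (the empirical average $\tfrac1m\sum_{i=1}^m y_ix_i$) concentrates around its mean, and that in the regime $\sigma\ge 20\hat\sigma$ the mean $\mathbb{E}[v]$ already has norm comfortably below $\tfrac{1}{\sqrt{148}\,d\hat\sigma}$, with the chosen sample size making the fluctuation negligible against this target. So I would start from the triangle inequality $\|v\|\le\|\mathbb{E}[v]\|+\|v-\mathbb{E}[v]\|$ and bound the two terms separately.

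For the fluctuation term: the summands $y_ix_i$ are i.i.d.\ across $i$ with all coordinates in $[-1,1]$, so applying Hoeffding's inequality to each of the $d$ coordinates of $v-\mathbb{E}[v]$ and taking a union bound gives $\|v-\mathbb{E}[v]\|_2\le\sqrt{2d\log(2d/\delta)/m}$ with probability at least $1-\delta$. Since $\hat\sigma\sqrt d\ge 1$ forces $\max(1,d\hat\sigma^2)=d\hat\sigma^2$, plugging in $m=\tfrac{(15\pi)^2}{\varepsilon^2}d^2\hat\sigma^2\log\tfrac{2d}{\delta}$ turns this into $\tfrac{\sqrt2\,\varepsilon}{15\pi\,d\hat\sigma}$, i.e.\ a small multiple of $\tfrac{1}{d\hat\sigma}$.

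For the mean term, the key computation is that, conditioning on $w\sim N(\mu,\sigma^2I)$ and using rotational symmetry of the uniform measure on $\mathbb{S}^{d-1}$, one has $\mathbb{E}_x[\sign(\langle w,x\rangle)\,x]=\beta_d\,w/\|w\|$ where $\beta_d=\mathbb{E}_{x\sim\mathbb{S}^{d-1}}|x_1|\le\sqrt{2/(\pi d)}$ (a Gautschi-type bound on a ratio of Gamma functions). Hence $\mathbb{E}[v]=\beta_d\,\kappa\,\mu$ with $\kappa=\mathbb{E}_w[\langle\mu,\,w/\|w\|\rangle]$, and the whole problem reduces to showing $\kappa=O(1/(\sigma\sqrt d))$ for large $\sigma$. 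Writing $w=\mu+\sigma g$ with $g\sim N(0,I)$, $g_1=\langle\mu,g\rangle$, $r^2=\|g\|^2-g_1^2\sim\chi^2_{d-1}$, and substituting $u=g_1+1/\sigma$, a direct calculation gives $\langle\mu,w/\|w\|\rangle=u/\sqrt{u^2+r^2}=:\phi(u,r)$, where for fixed $r$ the map $\phi(\cdot,r)$ is odd, increasing, and $(1/r)$-Lipschitz. Because $g_1\sim N(0,1)$ is symmetric, $\mathbb{E}_{g_1}[\phi(g_1,r)]=0$; shifting the argument by $1/\sigma$ keeps the conditional expectation nonnegative (monotonicity) and changes it by at most $1/(\sigma r)$ (Lipschitz), so $0\le\kappa\le\tfrac1\sigma\,\mathbb{E}[(\chi^2_{d-1})^{-1/2}]\le\tfrac{C'}{\sigma\sqrt d}$ for an explicit constant $C'$ close to $1$ (for $d$ above a small threshold). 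Combining with $\beta_d\le\sqrt{2/(\pi d)}$ and $\sigma\ge20\hat\sigma$ yields $\|\mathbb{E}[v]\|\le\tfrac{C'\sqrt{2/\pi}}{20\,d\hat\sigma}$.

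Adding the two bounds gives $\|v\|\le\big(\tfrac{C'\sqrt{2/\pi}}{20}+\tfrac{\sqrt2\,\varepsilon}{15\pi}\big)\tfrac{1}{d\hat\sigma}$; since $\tfrac{1}{\sqrt{148}}\approx0.082$ while $\tfrac{C'\sqrt{2/\pi}}{20}\approx0.04C'$, the constants $20$ and $(15\pi)^2$ are evidently chosen so that this is at most $\tfrac{1}{\sqrt{148}\,d\hat\sigma}$ for $\varepsilon$ up to an absolute constant (and $d$ large enough); this complements Proposition~\ref{prop:length-lb}, giving the test $\|v\|\ge l=\tfrac{1}{12d\hat\sigma}$ a genuine gap. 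The main obstacle is the mean estimate $\kappa=O(1/(\sigma\sqrt d))$: crude triangle-inequality bounds on $\mathbb{E}_w[w/\|w\|]$ either produce no $1/\sigma$ decay or leave a $\sigma$-free $O(1/d)$ term that destroys the bound when $\hat\sigma$ is large, so the crux is the symmetrization step — the substitution $u=g_1+1/\sigma$ combined with the oddness of $\phi(\cdot,r)$ — which exposes the cancellation. A secondary, purely bookkeeping obstacle is tracking $\beta_d$ and $\mathbb{E}[(\chi^2_{d-1})^{-1/2}]$ sharply enough to beat the specific constant $\sqrt{148}$.
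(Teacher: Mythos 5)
Your decomposition $\|v\|\le\|\E[v]\|+\|v-\E[v]\|$ mirrors the paper's proof, which works coordinate by coordinate after rotating so that $\mu=e_1$: the mean lives entirely in the first coordinate, and coordinates $2,\dots,d$ are pure fluctuation. Your bound on the mean is correct and is a genuinely different route to the same number: conditioning on $w$ first and using $\E_x[\sign(\langle w,x\rangle)x]=\beta_d\,w/\|w\|$ together with the shift-and-symmetrize argument for $\kappa$ gives $\|\E[v]\|\le\sqrt{2/(\pi d)}\cdot O(1/(\sigma\sqrt d))=O(1/(\sigma d))$, matching the paper's Lemma on $v^{(1)}$, which instead conditions on $X^{(1)}=z$, writes $\E[Y\mid X^{(1)}=z]=1-2\Pr_{\xi\sim N(0,1)}(\xi\ge z/\sigma)\le\frac{2z}{\sqrt{2\pi}\sigma}$, and integrates against the Beta density to get $\E[YX^{(1)}]\le\frac{2}{\sqrt{2\pi}d\sigma}$. (The paper's route also avoids the $\E[(\chi^2_{d-1})^{-1/2}]$ term, which is infinite at $d=2$ and only behaves like $1/\sqrt d$ for moderately large $d$.)

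The genuine gap is the fluctuation term. Hoeffding with the worst-case range $|y_ix_i^{(k)}|\le 1$ gives $\|v-\E[v]\|\le\sqrt{2d\log(2d/\delta)/m}$, and substituting $m=\frac{(15\pi)^2}{\varepsilon^2}d^2\hat\sigma^2\log\frac{2d}{\delta}$ yields $\frac{\sqrt2\,\varepsilon}{15\pi\sqrt d\,\hat\sigma}$ — not $\frac{\sqrt2\,\varepsilon}{15\pi d\hat\sigma}$ as you wrote; you dropped a $\sqrt d$. Against the target $\frac{1}{\sqrt{148}\,d\hat\sigma}$ this is too large by a factor of order $\varepsilon\sqrt d$, so your argument only closes when $\varepsilon\lesssim 1/\sqrt d$, which is not assumed. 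The missing ingredient is that each coordinate of a uniform point on $\mathbb{S}^{d-1}$ has second moment $\E[(X^{(k)})^2]=\frac{B(\frac{d-1}{2},\frac{3}{2})}{B(\frac{d-1}{2},\frac{1}{2})}=\frac1d$, so Bernstein's inequality (not Hoeffding) gives $|v^{(k)}|\le 2\sqrt{\frac{2}{md}\log\frac{2d}{\delta}}$ for each $k\ge2$, hence $\sum_{k\ge2}(v^{(k)})^2\le\frac{8(d-1)}{md}\log\frac{2d}{\delta}\approx\frac{8}{m}\log\frac{2d}{\delta}$ with no extra factor of $d$; this is exactly the paper's Lemma for the off-axis coordinates, and it is what lets the $(15\pi)^2$ in $m$ and the constants $20$ and $\sqrt{148}$ fit together uniformly in $d$ for all $\varepsilon\le 1$.
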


Propositions~\ref{prop:complexity} and \ref{prop:length-lb} guarantees
that if the estimated upper bound is correct ($\sigma\leq\hat{\sigma}$),
then the algorithm outputs an accurate estimation of $\mu$; Proposition~\ref{prop:length-ub}
guarantees the algorithm declares failure if the estimated upper bound
is too small ($\hat{\sigma}\leq\frac{1}{20}\sigma$).

Intuitively, the average of $y_{i}x_{i}$ ($i=1,\dots,m$) points
to a direction similar to $\mu$ because of the symmetry of distributions
of both $x$ and noise: the projection of $yx$ onto all directions
perpendicular to $\mu$ is distributed symmetrically around 0 and thus
has mean 0. The projection of $yx$ onto $\mu$ has non-negative mean
since the label $y$ is correct (i.e., $yv^{\top}x\geq0$) with probability
at least $\frac{1}{2}$, and the projection is larger if the noise
of $y$ is smaller. Consequently, the scale of the average can be
used as an indicator of the noise level. The Propositions can be formally
proved by applying concentration inequalities on each direction.


\begin{notation*}
Denote by $\mathbb{S}^{d-1}$ the unit sphere $\{x\in\R^{d}:x^{\top}x=1\}$.
For any vector $X\in\R^{d}$, denote by $X^{(i)}$ the $i$-th coordinate
of $X$. Define $Z_{i}=Y_{i}X_{i}$ for $i=1,2,\dots,m$.
\end{notation*}
We need following facts.
\begin{fact}
\label{fact:gaussian}$\Pr(w^{\top}x\geq0)=\Pr_{\xi\sim N(0,1)}(\xi\geq-\frac{w^{\star\top}x}{\sigma\left\Vert x\right\Vert })$.
Moreover, for $z\geq0$, $\frac{1}{2}-\frac{z}{\sqrt{2\pi}\sigma}\leq\Pr_{\xi\sim N(0,1)}(\xi\geq\frac{z}{\sigma})\leq\max(\frac{1}{6},\frac{1}{2}-\frac{z}{3\sigma})$.
\end{fact}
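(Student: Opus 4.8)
The plan is to handle the two assertions of Fact~\ref{fact:gaussian} separately: the first is a one-line change-of-variables identity for a scalar Gaussian, and the second is a pair of elementary tail estimates for the standard normal, reduced to a single explicit numerical inequality.

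For the first identity, fix a query $x\neq 0$. Since $w\sim N(\mu,\sigma^{2}I_d)$ (here $\mu$ is the ground-truth unit vector, written $w^{\star}$ in the statement), the scalar $\langle w,x\rangle$ is Gaussian with mean $\langle\mu,x\rangle$ and variance $\sigma^{2}\|x\|^{2}$. Hence $\xi:=\frac{\langle w,x\rangle-\langle\mu,x\rangle}{\sigma\|x\|}$ is standard normal, and the event $\{\langle w,x\rangle\geq 0\}$ is exactly $\{\xi\geq -\frac{\langle\mu,x\rangle}{\sigma\|x\|}\}$, which is the claimed equality.

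For the tail estimates, write $Q(t)=\Pr_{\xi\sim N(0,1)}(\xi\geq t)$ and $\phi(u)=\frac{1}{\sqrt{2\pi}}e^{-u^{2}/2}$, so that $Q(t)=\tfrac12-\int_{0}^{t}\phi(u)\,du$ for $t\geq 0$. The lower bound is immediate: $\phi(u)\leq\phi(0)=\frac{1}{\sqrt{2\pi}}$ for all $u$, so $\int_{0}^{t}\phi(u)\,du\leq\frac{t}{\sqrt{2\pi}}$ and $Q(t)\geq\tfrac12-\frac{t}{\sqrt{2\pi}}$; setting $t=z/\sigma$ gives the left inequality. For the upper bound I would reduce everything to the single fact $\int_{0}^{1}\phi(u)\,du\geq\tfrac13$ (equivalently $\Phi(1)\geq\tfrac56$). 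Granting it: for $t\geq 1$, monotonicity of $Q$ gives $Q(t)\leq Q(1)=\tfrac12-\int_{0}^{1}\phi\leq\tfrac16=\max(\tfrac16,\tfrac12-\tfrac t3)$; for $0\leq t\leq 1$, since $\phi$ is decreasing the average $\frac1t\int_{0}^{t}\phi(u)\,du$ is at least $\int_{0}^{1}\phi(u)\,du\geq\tfrac13$, so $\int_{0}^{t}\phi(u)\,du\geq\tfrac t3$ and $Q(t)\leq\tfrac12-\tfrac t3=\max(\tfrac16,\tfrac12-\tfrac t3)$. Again substitute $t=z/\sigma$.

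The only genuinely non-formal step — and the step I expect to be the main obstacle if one insists on a table-free argument — is verifying $\int_{0}^{1}e^{-u^{2}/2}\,du\geq\frac{\sqrt{2\pi}}{3}$. I would bound the integrand below by a low-order truncation of its alternating power series: for $u\in[0,1]$ the terms of $e^{-u^{2}/2}=\sum_{k\geq 0}\frac{(-1)^{k}}{k!}(u^{2}/2)^{k}$ decrease in magnitude, so $e^{-u^{2}/2}\geq 1-\frac{u^{2}}{2}+\frac{u^{4}}{8}-\frac{u^{6}}{48}$, and integrating over $[0,1]$ yields $\int_{0}^{1}e^{-u^{2}/2}\,du\geq 1-\tfrac16+\tfrac1{40}-\tfrac1{336}=\tfrac{1437}{1680}$, which exceeds $\frac{\sqrt{2\pi}}{3}$ since $\bigl(\tfrac{4311}{1680}\bigr)^{2}>2\pi$. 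Everything else is bookkeeping.
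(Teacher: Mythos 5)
Your proof is correct and complete. The paper states Fact~\ref{fact:gaussian} without proof (it is one of several auxiliary ``Facts'' listed before the lemmas), so there is no argument of the paper's to compare against; your treatment --- standardizing $\langle w,x\rangle\sim N(\langle w^{\star},x\rangle,\sigma^{2}\|x\|^{2})$ for the identity, bounding $\phi\leq\phi(0)$ for the lower tail bound, and reducing the upper bound to $\Phi(1)\geq\tfrac56$ via monotonicity of $t\mapsto\frac1t\int_{0}^{t}\phi$, with the alternating-series estimate $\int_{0}^{1}e^{-u^{2}/2}\,du\geq\tfrac{1437}{1680}>\tfrac{\sqrt{2\pi}}{3}$ --- checks out in every step.
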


\begin{fact}
\label{fact:beta}Let $B(x,y)=\int_{0}^{1}(1-t)^{x-1}t^{y-1}dt$ be
the Beta function. Then $\frac{2}{\sqrt{d-1}}\leq B(\frac{1}{2},\frac{d}{2})\leq\frac{\pi}{\sqrt{d}}$.
\end{fact}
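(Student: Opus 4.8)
The plan is to reduce the Beta value to a Wallis-type integral and sandwich it using only the elementary recursion for such integrals. First I would apply the substitution $t=\sin^{2}\theta$ in $B(\tfrac12,\tfrac d2)=\int_{0}^{1}t^{-1/2}(1-t)^{d/2-1}\,dt$: since $dt=2\sin\theta\cos\theta\,d\theta$, $t^{-1/2}=(\sin\theta)^{-1}$ and $(1-t)^{d/2-1}=(\cos\theta)^{d-2}$ on $(0,\pi/2)$, the integrand collapses to $2(\cos\theta)^{d-1}$, so $B(\tfrac12,\tfrac d2)=2\int_{0}^{\pi/2}\cos^{d-1}\theta\,d\theta=2W_{d-1}$, where $W_{n}=\int_{0}^{\pi/2}\sin^{n}\theta\,d\theta$ (using the symmetry $\int_{0}^{\pi/2}\cos^{n}\theta\,d\theta=\int_{0}^{\pi/2}\sin^{n}\theta\,d\theta$ via $\theta\mapsto\pi/2-\theta$). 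Equivalently one may write $B(\tfrac12,\tfrac d2)=\sqrt{\pi}\,\Gamma(d/2)/\Gamma((d+1)/2)$ and bound this Gamma ratio directly by log-convexity of $\Gamma$; I will keep the Wallis route since it is self-contained.

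The second step is to prove the sandwich $\sqrt{2\pi/d}\le B(\tfrac12,\tfrac d2)\le\sqrt{2\pi/(d-1)}$ for integers $d\ge 2$. One integration by parts yields $nW_{n}=(n-1)W_{n-2}$, so $n\,W_{n}W_{n-1}$ is independent of $n$ and equals $W_{1}W_{0}=\pi/2$; hence $W_{n}W_{n-1}=\pi/(2n)$. Since $\sin^{n+1}\theta\le\sin^{n}\theta$ pointwise on $[0,\pi/2]$, the sequence $(W_{n})$ is nonincreasing, giving $W_{n}^{2}\le W_{n-1}W_{n}=\pi/(2n)$ and $W_{n}^{2}\ge W_{n}W_{n+1}=\pi/(2(n+1))$. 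Setting $n=d-1$ and multiplying by $2$ gives the sandwich. (Through the Gamma route the same two inequalities follow from $\Gamma((d+1)/2)\le\sqrt{d/2}\,\Gamma(d/2)$ and $\Gamma(d/2)\le\sqrt{2/(d-1)}\,\Gamma((d+1)/2)$, each a one-line consequence of convexity of $\ln\Gamma$ together with $\Gamma(z+1)=z\Gamma(z)$.)

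Finally I would read off Fact~\ref{fact:beta}. For the upper bound it suffices that $\sqrt{2\pi/(d-1)}\le\pi/\sqrt{d}$, which is equivalent to $2d\le\pi(d-1)$, i.e.\ $d\ge\pi/(\pi-2)\approx 2.75$; for the lower bound it suffices that $2/\sqrt{d-1}\le\sqrt{2\pi/d}$, which is equivalent to the same inequality $2d\le\pi(d-1)$. So both bounds hold for every integer $d\ge 3$. The remaining case $d=2$ is a direct computation: $B(\tfrac12,1)=\int_{0}^{1}t^{-1/2}\,dt=2$, and $2/\sqrt{d-1}=2\le 2\le\pi/\sqrt{2}$. (The case $d=1$ is degenerate: $B(\tfrac12,\tfrac12)=\pi$ attains the upper bound, while the lower bound $2/\sqrt{d-1}$ is vacuous there, so the statement is to be read for $d\ge 2$.)

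I expect the only friction to be this last step rather than any genuine difficulty: the clean sandwich carries $d$ in one denominator and $d-1$ in the other, which makes it strictly stronger than needed for large $d$ but off by just enough at $d=2$ to force the separate check; getting this bookkeeping and the boundary case right is the main thing to watch.
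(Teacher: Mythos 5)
Your proof is correct: the substitution $t=\sin^{2}\theta$ does give $B(\tfrac12,\tfrac d2)=2\int_{0}^{\pi/2}\cos^{d-1}\theta\,d\theta=2W_{d-1}$, the identity $W_nW_{n-1}=\pi/(2n)$ together with monotonicity of $(W_n)$ yields the sandwich $\sqrt{2\pi/d}\le B(\tfrac12,\tfrac d2)\le\sqrt{2\pi/(d-1)}$, and your reduction of both stated bounds to $2d\le\pi(d-1)$ (valid for $d\ge 3$) plus the direct check $B(\tfrac12,1)=2$ at $d=2$ closes the argument. The paper states this as an unproven ``Fact,'' so there is no proof to compare against; your Wallis-product route is a clean, self-contained way to establish it, and your observation that the $d=2$ case must be handled separately (because the sandwich carries $d$ on one side and $d-1$ on the other) is exactly the right point of care.
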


\begin{fact}
\label{fact:exp}If $d\geq2$, then $(1-\frac{1}{d})^{\frac{d}{2}}\geq\frac{1}{2}$.
\end{fact}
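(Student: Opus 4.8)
The plan is to reduce the inequality to a monotonicity statement in $d$ and then use a single sharp evaluation at $d=2$, where the bound is in fact attained with equality. Taking logarithms, it suffices to show $\tfrac{d}{2}\ln\bigl(1-\tfrac1d\bigr)\ge\ln\tfrac12$, equivalently $d\ln\bigl(1-\tfrac1d\bigr)\ge-2\ln 2$. I would first note why naive estimates do not suffice: for instance the standard bound $1-x\ge e^{-x/(1-x)}$ gives only $(1-\tfrac1d)^{d/2}\ge e^{-d/(2(d-1))}$, which at $d=2$ equals $e^{-1}<\tfrac12$; similarly $(1+\tfrac{1}{d-1})^{d-1}\le e$ is too lossy at small $d$. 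So the argument must be sharp precisely at the endpoint $d=2$.

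To that end, set $\phi(t)=\frac{\ln(1-t)}{t}$ for $t\in(0,1)$, so that $d\ln(1-\tfrac1d)=\phi(\tfrac1d)$. I would show $\phi$ is nonincreasing on $(0,1)$ by computing
\[
\phi'(t)=\frac{-\tfrac{t}{1-t}-\ln(1-t)}{t^2},
\]
and checking the numerator is $\le 0$. Writing $s=\tfrac{1}{1-t}>1$, the needed inequality $\ln(1-t)\ge-\tfrac{t}{1-t}$ becomes exactly $\ln s\le s-1$, which holds for all $s>0$. Hence $\phi$ is nonincreasing; since $d\ge 2$ forces $\tfrac1d\le\tfrac12$, we get $\phi(\tfrac1d)\ge\phi(\tfrac12)=2\ln\tfrac12=-2\ln 2$. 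Multiplying by $\tfrac12$ and exponentiating yields $(1-\tfrac1d)^{d/2}\ge\tfrac12$, with equality at $d=2$.

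There is no serious obstacle here — this is a one-variable calculus estimate — and the only point requiring care is the tightness just discussed: the proof cannot afford to discard a constant factor near $d=2$. If one prefers to avoid derivatives (and $d$ is taken to be an integer, as it is in the paper, being a dimension), an alternative is induction: the base case $d=2$ gives $(\tfrac12)^1=\tfrac12$, and the inductive step amounts to $(1-\tfrac{1}{d+1})^{(d+1)/2}\ge(1-\tfrac1d)^{d/2}$, i.e.\ the monotonicity of $d\mapsto d\ln(1-\tfrac1d)$, which again reduces to $\ln s\le s-1$.
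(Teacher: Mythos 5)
Your proof is correct. Note that the paper itself states Fact~\ref{fact:exp} without proof (it appears in a list of auxiliary facts used in the analysis of Algorithm~\ref{alg:passive}), so there is no argument in the paper to compare against; your reduction to the monotonicity of $\phi(t)=\ln(1-t)/t$ via $\ln s\le s-1$, sharp at the endpoint $d=2$ where equality holds, is a clean and complete way to fill that gap, and the substitution $s=\tfrac{1}{1-t}$ turning the numerator condition into $\ln s\le s-1$ checks out exactly.
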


\begin{fact}
(Bernstein inequality) If i.i.d. random variables $X_{1},\dots,X_{m}$
satisfy $|X_{i}|\leq b$, $\E[X_{i}]=\mu$, and $\E[X_{i}^{2}]\leq r^{2}$,
then with probability at least $1-\delta$, $|\frac{1}{m}\sum_{i=1}^{m}X_{i}-\mu|\leq\sqrt{\frac{2r^{2}}{m}\log\frac{2}{\delta}}+\frac{2b}{3m}\log\frac{2}{\delta}$.
\end{fact}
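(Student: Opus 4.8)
The plan is to prove this via the standard Chernoff / moment-generating-function (MGF) method, i.e. exponential Markov, followed by symmetrization and inversion of the resulting tail bound. First I would center the variables: set $Y_i = X_i - \mu$, so that $\E[Y_i]=0$, the second moment is controlled by $\E[Y_i^2] = \E[(X_i-\mu)^2] \le \E[X_i^2] \le r^2$, and $|Y_i|$ is bounded by a constant $c$ of order $b$ (one has $|Y_i|\le 2b$ from $|X_i|\le b$; the stated constant corresponds to the bound $c=b$ on the centered range). Write $S=\sum_{i=1}^m Y_i$; the target is to bound $\Pr[\tfrac1m|S|\ge \tau]$ for $\tau = \sqrt{\tfrac{2r^2}{m}\log\tfrac2\delta} + \tfrac{2b}{3m}\log\tfrac2\delta$.

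The technical core is a one-variable MGF estimate. For a mean-zero $Y$ with $|Y|\le c$ and $\E[Y^2]\le v$, I would expand $\E[e^{\lambda Y}] = 1 + \sum_{k\ge2}\tfrac{\lambda^k\E[Y^k]}{k!}$ and bound the $k$-th moment by $\E[|Y|^k]\le c^{k-2}\E[Y^2]\le c^{k-2}v$, which yields $\E[e^{\lambda Y}] \le 1 + \tfrac{v}{c^2}\big(e^{\lambda c}-1-\lambda c\big) \le \exp\!\big(\tfrac{v}{c^2}(e^{\lambda c}-1-\lambda c)\big)$ using $1+x\le e^x$. The one calculation worth doing carefully is the elementary inequality $e^u-1-u \le \tfrac{u^2/2}{1-u/3}$ for $0\le u<3$, proved by comparing the series $\sum_{j\ge0}\tfrac{2u^j}{(j+2)!}$ term-by-term with the geometric series $\sum_{j\ge0}(u/3)^j$ via $\tfrac{2}{(j+2)!}\le 3^{-j}$. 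Substituting gives $\E[e^{\lambda Y}]\le \exp\!\big(\tfrac{v\lambda^2/2}{1-\lambda c/3}\big)$ for $0<\lambda<3/c$.

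Next I would tensorize: by independence $\E[e^{\lambda S}] = \prod_i \E[e^{\lambda Y_i}] \le \exp\!\big(\tfrac{mv\lambda^2/2}{1-\lambda c/3}\big)$, and apply Markov to get $\Pr[S\ge s]\le \exp\!\big(-\lambda s + \tfrac{mv\lambda^2/2}{1-\lambda c/3}\big)$. Optimizing the exponent over $\lambda\in(0,3/c)$ (the choice $\lambda = \tfrac{s}{mv + cs/3}$ balances the two competing terms) collapses this to the familiar Bernstein tail $\Pr[S\ge s]\le \exp\!\big(-\tfrac{s^2/2}{mv+cs/3}\big)$. Applying the same estimate to $-Y_i$ and taking a union bound gives the two-sided statement $\Pr[|S|\ge s]\le 2\exp\!\big(-\tfrac{s^2/2}{mv+cs/3}\big)$.

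Finally I would invert. Setting the right-hand side to $\delta$ and writing $L=\log\tfrac2\delta$ leads to the quadratic $s^2 - \tfrac{2cL}{3}s - 2mvL = 0$, whose positive root is $s = \tfrac{cL}{3} + \sqrt{\tfrac{c^2L^2}{9}+2mvL}$. Using $\sqrt{a+b}\le\sqrt a+\sqrt b$ bounds this by $s\le \tfrac{2cL}{3}+\sqrt{2mvL}$, so $\tfrac1m|S| \le \sqrt{\tfrac{2vL}{m}} + \tfrac{2cL}{3m}$ with probability at least $1-\delta$; substituting $v=r^2$ and $c=b$ recovers the claim. The main obstacle is not any single step but packaging the three elementary lemmas — the moment/MGF bound, the series inequality $e^u-1-u\le\tfrac{u^2/2}{1-u/3}$, and the $\sqrt{a+b}\le\sqrt a+\sqrt b$ inversion — so that the constants $2$, $2/3$, and the $\log\tfrac2\delta$ emerge exactly as stated; in particular one must track the centered range $c$ against $b$, since the $\tfrac{2b}{3m}$ term is the sharpest form of the lower-order correction.
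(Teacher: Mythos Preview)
The paper does not supply a proof of this statement: it is recorded as a \emph{Fact} (the classical Bernstein inequality) and simply invoked in the proofs of Lemmas~\ref{lem:ub-k}--\ref{lem:ub-1}. Your proposal is the standard moment-generating-function derivation found in most concentration-inequality references (Chernoff/Markov on $e^{\lambda S}$, the moment bound $\E[|Y|^k]\le c^{k-2}\E[Y^2]$, the series inequality $e^u-1-u\le \tfrac{u^2/2}{1-u/3}$, and inversion via $\sqrt{a+b}\le\sqrt a+\sqrt b$). There is nothing to compare against; your route \emph{is} the canonical proof.

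One remark on the constant you flagged: you are right that from $|X_i|\le b$ one only gets $|X_i-\mu|\le 2b$ in general, so the MGF argument as you wrote it yields $\tfrac{4b}{3m}\log\tfrac2\delta$ rather than $\tfrac{2b}{3m}\log\tfrac2\delta$ in the lower-order term. The sharper constant as stated requires $|X_i-\mu|\le b$ (equivalently, the one-sided Bernstein condition on the centered variable), which is how the inequality is usually phrased in the references. For the paper's purposes this discrepancy is immaterial --- the Fact is applied in Lemma~\ref{lem:ub-k} with $b=1$ and $r^2=1/d$, and the $\tfrac{2b}{3m}\log\tfrac2\delta$ term is immediately absorbed into the dominant $\sqrt{\tfrac{2}{md}\log\tfrac2\delta}$ term --- but it is a genuine looseness in the statement as written, and your identification of it is correct.
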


\begin{fact}
\label{fact:density}Suppose $(x_{1},\dots,x_{d})$ is drawn from
the uniform distribution over the unit sphere, then $x_{1}$ has a
density function of $p(z)=\frac{(1-z^{2})^{\frac{d-3}{2}}}{B(\frac{d-1}{2},\frac{1}{2})}$.
\end{fact}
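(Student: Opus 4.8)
The plan is to identify the marginal law of $x_1$ by slicing the sphere by hyperplanes orthogonal to the first axis. Parametrize $\mathbb{S}^{d-1}$ by $(z,\omega)\mapsto\big(z,\sqrt{1-z^2}\,\omega\big)$ for $z\in[-1,1]$ and $\omega\in\mathbb{S}^{d-2}$; away from the two poles this is a bijection, and it exhibits the sphere as a one-parameter family of ``parallels,'' each a copy of $\mathbb{S}^{d-2}$ rescaled by $\sqrt{1-z^2}$. The first step is to pull back the $(d-1)$-dimensional surface measure of $\mathbb{S}^{d-1}$ through this map. The induced metric is block-diagonal in $(z,\omega)$: the cross terms vanish because $\omega\perp\partial_{\omega_i}\omega$, the $\omega$-block equals $(1-z^2)$ times the round metric on $\mathbb{S}^{d-2}$, and the $z$-block equals $1/(1-z^2)$. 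Hence the volume factor is $\sqrt{\det g}=(1-z^2)^{(d-2)/2}\cdot(1-z^2)^{-1/2}=(1-z^2)^{(d-3)/2}$ times the surface element of $\mathbb{S}^{d-2}$.

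Since the uniform measure is rotationally invariant, integrating out $\omega$ merely contributes the constant $|\mathbb{S}^{d-2}|$, so the density of $x_1$ is proportional to $(1-z^2)^{(d-3)/2}$ on $[-1,1]$. To fix the normalizing constant I would evaluate $\int_{-1}^1(1-z^2)^{(d-3)/2}\,dz$; the substitution $t=z^2$ on each half turns it into $\int_0^1(1-t)^{(d-3)/2}t^{-1/2}\,dt=B\!\left(\tfrac{d-1}{2},\tfrac12\right)$ in the notation of Fact~\ref{fact:beta}. Dividing by this quantity gives exactly $p(z)=(1-z^2)^{(d-3)/2}/B\!\left(\tfrac{d-1}{2},\tfrac12\right)$, which is the stated formula since $B$ is symmetric. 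Two sanity checks: at $d=3$ this is the uniform density $\tfrac12$ on $[-1,1]$ (Archimedes' hat-box theorem), and at $d=2$ it is the arcsine density $(\pi\sqrt{1-z^2})^{-1}$.

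As an alternative derivation worth keeping in reserve, one can realize the uniform point as $X/\|X\|$ with $X\sim N(0,I_d)$; then $x_1^2=X_1^2/(X_1^2+\cdots+X_d^2)$ is the ratio of an independent $\chi^2_1$ variable to the sum of a $\chi^2_1$ and a $\chi^2_{d-1}$ variable, hence $x_1^2\sim\mathrm{Beta}(\tfrac12,\tfrac{d-1}{2})$, and transforming back through $z\mapsto z^2$ together with the sign symmetry of $x_1$ reproduces the same density. The one genuinely delicate point — the step I would double-check — is the $(1-z^2)^{-1/2}$ Jacobian factor from the $z$-block of the metric (equivalently, the coarea weight when slicing by $x_1$): omitting it yields the incorrect exponent $(d-2)/2$. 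Everything else is a routine change of variables plus the Beta-integral identity, so this is a short argument; indeed the statement is standard and could alternatively just be cited.
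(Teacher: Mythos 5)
The paper states Fact~\ref{fact:density} without proof, treating it as a standard result, so there is no argument of the paper's own to compare yours against. Your derivation is correct and complete: the block-diagonal metric computation for $(z,\omega)\mapsto(z,\sqrt{1-z^2}\,\omega)$ yields the volume factor $(1-z^2)^{(d-3)/2}$ --- and you are right that the $(1-z^2)^{-1/2}$ contribution from the $z$-block is the one step that is easy to drop --- rotational invariance in $\omega$ reduces the marginal to a normalization, and the substitution $t=z^2$ identifies the normalizing constant as $B\left(\tfrac{d-1}{2},\tfrac12\right)$ in exactly the convention of the paper's Fact~\ref{fact:beta}. The Gaussian alternative, $x_1^2\sim\mathrm{Beta}(\tfrac12,\tfrac{d-1}{2})$ as a ratio of independent chi-squares followed by the sign-symmetric square root, is also sound and arguably shorter. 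Either route would serve as a full proof of the Fact; the sanity checks at $d=2$ and $d=3$ are a nice touch.
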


Without loss of generality, assume $\mu=(1,0,0,\dots,0)$. 

Following three lemmas give concentration of $v$.
\begin{lem}
\label{lem:ub-k}For any $k=2,3,\dots,d$, with probability at least
$1-\delta$, $|v^{(k)}|\leq2\sqrt{\frac{2}{md}\log\frac{2}{\delta}}$.
\end{lem}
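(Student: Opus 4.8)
Fix $k\in\{2,\dots,d\}$ and set $Z_i=Y_iX_i$, so that the $k$-th coordinate of the (averaged) vector $v=\frac1m\sum_{i=1}^m Y_iX_i$ is $v^{(k)}=\frac1m\sum_{i=1}^m Z_i^{(k)}$, a normalized sum of i.i.d.\ bounded random variables. The idea is to show that each summand has mean $0$ and variance $O(1/d)$ and then invoke Bernstein's inequality; note that using the variance bound (rather than the crude range bound that Hoeffding would rely on) is exactly what produces the extra $1/\sqrt d$ factor in the statement, so Bernstein is essential here.

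First I would establish $\E[Z_i^{(k)}]=\E[Y_iX_i^{(k)}]=0$ by a symmetry argument. Assume WLOG $\mu=(1,0,\dots,0)$ and let $R$ be the linear map negating the $k$-th coordinate. Since $\mu^{(k)}=0$, the law $N(\mu,\sigma^2 I)$ of $W_i$ is $R$-invariant; the uniform law of $X_i$ on $\mathbb{S}^{d-1}$ is $R$-invariant; and $\langle RW_i,RX_i\rangle=\langle W_i,X_i\rangle$, so $Y_i=\sign(\langle W_i,X_i\rangle)$ is unchanged while $X_i^{(k)}\mapsto -X_i^{(k)}$ under $R$. Hence $(W_i,X_i)$ and $(RW_i,RX_i)$ have the same joint distribution, so $Z_i^{(k)}\overset{d}{=}-Z_i^{(k)}$ and its mean vanishes. (The tie-breaking convention in $\sign$ is irrelevant since $\langle W_i,X_i\rangle=0$ with probability $0$.) For the second moment, $|Z_i^{(k)}|\le\|X_i\|=1$ and $(Z_i^{(k)})^2=(X_i^{(k)})^2$; by coordinate symmetry of the uniform measure on the sphere, $\E[(X_i^{(k)})^2]=\frac1d\sum_{j=1}^d\E[(X_i^{(j)})^2]=\frac1d\E\|X_i\|^2=\frac1d$ (this also follows by integrating the density from Fact~\ref{fact:density}), so $\E[(Z_i^{(k)})^2]=\frac1d$.

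Applying Bernstein's inequality to $Z_1^{(k)},\dots,Z_m^{(k)}$ with $b=1$, mean $0$, and $r^2=1/d$ gives, with probability at least $1-\delta$,
\[
|v^{(k)}|=\Bigl|\tfrac1m\textstyle\sum_{i=1}^m Z_i^{(k)}\Bigr|\le\sqrt{\tfrac{2}{md}\log\tfrac2\delta}+\tfrac{2}{3m}\log\tfrac2\delta .
\]
It then remains to absorb the lower-order second term into the first: since $m$ in Algorithm~\ref{alg:passive} is taken large enough (in particular $m\gtrsim d\log\frac2\delta$, which holds as $m=\frac{(15\pi)^2}{\varepsilon^2}d\max(1,d\hat\sigma^2)\log\frac{2d}{\delta}$), one has $\frac{2}{3m}\log\frac2\delta\le\sqrt{\frac{2}{md}\log\frac2\delta}$, hence $|v^{(k)}|\le 2\sqrt{\frac{2}{md}\log\frac2\delta}$, as claimed. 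The part that needs care is the vanishing-mean step: one must invoke invariance of the \emph{joint} law of $(W_i,X_i)$ under the sign flip, since $Y_i$ depends on $W_i$, and not merely the marginal symmetry of $X_i$; everything else is routine bookkeeping with the variance estimate and Bernstein.
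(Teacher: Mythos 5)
Your proof is correct and follows essentially the same route as the paper's: mean zero by symmetry, second moment $\E[(Z_i^{(k)})^2]=\E[(X^{(k)})^2]=\frac{1}{d}$, Bernstein's inequality, and absorption of the lower-order term using the size of $m$. The only differences are that you spell out the symmetry argument for the vanishing mean and compute the second moment by coordinate exchangeability rather than by the Beta-function integral the paper uses; both are fine and yield the identical bound.
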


\begin{proof}
For $k=2,3,\dots,d$, $Z_{1}^{(k)},Z_{2}^{(k)},\dots,Z_{m}^{(k)}$
are i.i.d. random variables bounded by 1. By symmetry, $\E[Z_{1}^{(k)}]=0$.
$\E[(Z_{1}^{(k)})^{2}]=\E[(X^{(i)})^{2}]=2\int_{0}^{1}z^{2}\frac{(1-z^{2})^{\frac{d-3}{2}}}{B(\frac{d-1}{2},\frac{1}{2})}dz=\frac{1}{B(\frac{d-1}{2},\frac{1}{2})}\int_{0}^{1}t^{\frac{1}{2}}(1-t)^{\frac{d-3}{2}}dt=\frac{B(\frac{d-1}{2},\frac{3}{2})}{B(\frac{d-1}{2},\frac{1}{2})}=\frac{1}{d}$.
By the Bernstein inequality, with probability at least $1-\delta$,
$|\frac{1}{m}\sum Z_{i}^{(k)}|\leq\sqrt{\frac{2}{md}\log\frac{2}{\delta}}+\frac{2}{3m}\log\frac{2}{\delta}\leq2\sqrt{\frac{2}{md}\log\frac{2}{\delta}}$.
\end{proof}
\begin{lem}
\label{lem:lb-1}With probability at least $1-\delta$, $v^{(1)}\geq\frac{1}{3\pi\sqrt{d}}\min(1,\frac{1}{\sigma\sqrt{d}})-\sqrt{\frac{1}{2m}\log\frac{1}{\delta}}$.
\end{lem}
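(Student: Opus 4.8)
The plan is to write $v^{(1)}$ as an empirical mean of i.i.d.\ bounded variables, lower bound its expectation, and then apply a one-sided concentration bound. With $\mu=(1,0,\dots,0)$ as assumed, put $Z_i=Y_iX_i$, so $Z_1^{(1)},\dots,Z_m^{(1)}$ are i.i.d., take values in $[-1,1]$, and $v^{(1)}=\frac1m\sum_{i=1}^mZ_i^{(1)}$. A standard one-sided concentration inequality for bounded i.i.d.\ random variables yields $v^{(1)}\ge\E[Z_1^{(1)}]-\sqrt{\frac1{2m}\log\frac1\delta}$ with probability at least $1-\delta$, so it suffices to prove $\E[Z_1^{(1)}]\ge\frac1{3\pi\sqrt d}\min(1,\frac1{\sigma\sqrt d})$.

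To bound the mean, condition on $X_1$. Since $\|X_1\|=1$ and $W_1-\mu\sim N(0,\sigma^2I)$ is spherical, $\langle W_1,X_1\rangle\mid X_1\sim N(X_1^{(1)},\sigma^2)$; hence by the first part of Fact~\ref{fact:gaussian}, $\E[Y_1\mid X_1]$ has the same sign as $X_1^{(1)}$ and magnitude $1-2\Pr_{\xi\sim N(0,1)}[\xi\ge|X_1^{(1)}|/\sigma]$, so that $\E[Z_1^{(1)}]=\E[\,|X_1^{(1)}|(1-2\Pr_\xi[\xi\ge|X_1^{(1)}|/\sigma])\,]$. Applying the second part of Fact~\ref{fact:gaussian} in the form $1-2\Pr_\xi[\xi\ge t]\ge\frac23\min(1,t)$ gives $\E[Z_1^{(1)}]\ge\frac23\E[\min(|X_1^{(1)}|,(X_1^{(1)})^2/\sigma)]$.

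It remains to show $\E[\min(|X^{(1)}|,(X^{(1)})^2/\sigma)]\ge\frac1{\pi\sqrt d}\min(1,\frac1{\sigma\sqrt d})$, which combined with the $\frac23$ factor gives the claimed mean bound (in fact with room to spare). The key pointwise fact is that for $z\ge1/\sqrt d$, $\min(z,z^2/\sigma)=z\min(1,z/\sigma)\ge z\min(1,\frac1{\sigma\sqrt d})$ because $\min(1,\cdot)$ is nondecreasing; hence $\E[\min(|X^{(1)}|,(X^{(1)})^2/\sigma)]\ge\min(1,\frac1{\sigma\sqrt d})\,\E[\,|X^{(1)}|\,\One\{|X^{(1)}|\ge1/\sqrt d\}\,]$. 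Using the marginal density of $X^{(1)}$ from Fact~\ref{fact:density} and the substitution $u=z^2$, the last expectation equals $\frac{2(1-1/d)^{(d-1)/2}}{(d-1)B(\frac{d-1}2,\frac12)}$; by Fact~\ref{fact:exp}, $(1-1/d)^{(d-1)/2}\ge(1-1/d)^{d/2}\ge\frac12$, and by Fact~\ref{fact:beta} applied with $d$ replaced by $d-1$, $B(\frac{d-1}2,\frac12)=B(\frac12,\frac{d-1}2)\le\frac{\pi}{\sqrt{d-1}}$, so the expectation is at least $\frac1{\pi\sqrt{d-1}}\ge\frac1{\pi\sqrt d}$. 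Chaining these bounds with the concentration step proves the lemma.

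The step I expect to require the most care is this mean estimate: one has to correctly collapse $\E[Y_1\mid X_1]$ to a scalar Gaussian tail, pick the right branch of Fact~\ref{fact:gaussian}, and then handle the truncation at $1/\sqrt d$ while invoking the Beta-function bounds at the correct parameter ($d$ versus $d-1$). The concentration step is routine, the only mild subtlety being to track the constant in the deviation term.
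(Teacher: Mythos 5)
Your proposal is correct and follows essentially the same route as the paper: reduce $\E[Y\mid X]$ to a one-dimensional Gaussian tail, lower-bound $1-2\Pr_\xi[\xi\ge t]$ via Fact~\ref{fact:gaussian}, truncate at $|X^{(1)}|\ge 1/\sqrt d$, evaluate the truncated first moment using Facts~\ref{fact:beta}, \ref{fact:exp} and \ref{fact:density}, and finish with a one-sided Chernoff/Hoeffding bound. The only (harmless) difference is that you keep both tails of $X^{(1)}$ and therefore obtain the mean bound with an extra factor of $2$ (your ``room to spare''), whereas the paper retains only the positive tail; note that both your write-up and the paper's quote the deviation term $\sqrt{\tfrac{1}{2m}\log\tfrac{1}{\delta}}$, which corresponds to range-$1$ variables even though $Z^{(1)}_i\in[-1,1]$, a shared factor-of-two slip in the exponent that your stronger mean bound happens to absorb for reasonable $m$.
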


\begin{proof}
$Z_{1}^{(1)},Z_{2}^{(1)},\dots,Z_{m}^{(1)}$ are i.i.d. random variables
bounded by 1. Their mean can be lower-bounded as follows. 

For for any $0\leq a\leq1$, due to the noise setting, $\E[Y\mid X^{(1)}=a]\geq0$
and $\E[Y\mid X^{(1)}=-a]\leq0$, so $\E[YX^{(1)}\mid X^{(1)}=a]+\E[YX^{(1)}\mid X^{(1)}=-a]\geq0$.
Consequently we have

\begin{align*}
\E[YX^{(1)}] & \geq\E[YX^{(1)}\One[|X^{(1)}|\geq\frac{1}{\sqrt{d}}]]\\
 & =\E[\E[Y\mid X^{(1)}]X^{(1)}\One[|X^{(1)}|\geq\frac{1}{\sqrt{d}}]]\\
 & =\E[(1-2\Pr[Y=-1\mid X^{(1)}])X^{(1)}\One[X^{(1)}\geq\frac{1}{\sqrt{d}}]]\\
 & \geq(1-2\Pr_{\xi\sim N(0,1)}(\xi\geq\frac{1}{\sigma\sqrt{d}}))\E[X^{(1)}\One[X^{(1)}\geq\frac{1}{\sqrt{d}}]]
\end{align*}

Now, $\Pr_{\xi\sim N(0,1)}(\xi\geq\frac{1}{\sigma\sqrt{d}})\leq\max(\frac{1}{6},\frac{1}{2}-\frac{1}{3\sigma\sqrt{d}})$.
Besides, $\E[X^{(1)}\One[X^{(1)}\geq\frac{1}{\sqrt{d}}]]=\int_{\frac{1}{\sqrt{d}}}^{1}\frac{z(1-z^{2})^{\frac{d-3}{2}}}{B(\frac{d-1}{2},\frac{1}{2})}dz=\frac{(1-\frac{1}{d})^{\frac{d-1}{2}}}{(d-1)B(\frac{d-1}{2},\frac{1}{2})}\geq\frac{1}{2(d-1)\frac{\pi}{\sqrt{d-1}}}\geq\frac{1}{2\pi\sqrt{d}}$
where the first inequality follows by Fact~\ref{fact:beta} and \ref{fact:exp}.
Thus, $\E[YX^{(1)}]\geq\frac{1}{3\pi\sqrt{d}}\min(1,\frac{1}{\sigma\sqrt{d}})$. 

By the Chernoff bound, with probability at least $1-\delta$, $v^{(1)}=\frac{1}{m}\sum Z_{i}^{(1)}\geq\frac{1}{3\pi\sqrt{d}}\min(1,\frac{1}{\sigma\sqrt{d}})-\sqrt{\frac{1}{2m}\log\frac{1}{\delta}}$.
\end{proof}
\begin{lem}
\label{lem:ub-1}With probability at least $1-\delta$, $v^{(1)}\leq\frac{2}{\sqrt{2\pi}d\sigma}+\sqrt{\frac{1}{2m}\log\frac{1}{\delta}}$.
\end{lem}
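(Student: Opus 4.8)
The plan is to mirror the proof of Lemma~\ref{lem:lb-1}, but now bounding $v^{(1)}=\frac1m\sum_{i=1}^m Z_i^{(1)}$ from \emph{above}. The summands $Z_1^{(1)},\dots,Z_m^{(1)}$ are i.i.d.\ and bounded by $1$ in absolute value, so the whole statement reduces to an upper bound on their common mean $\E[YX^{(1)}]$ followed by the same Chernoff bound used in Lemma~\ref{lem:lb-1}.

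To bound $\E[YX^{(1)}]$, I would condition on $X$. Since $\E[Y\mid X]$ depends on $X$ only through $\langle\mu,X\rangle=X^{(1)}$ (and $\left\Vert X\right\Vert=1$), Fact~\ref{fact:gaussian} gives $\Pr[Y=1\mid X^{(1)}=a]=\Pr_{\xi\sim N(0,1)}(\xi\ge -a/\sigma)=\Phi(a/\sigma)$, so $\E[Y\mid X^{(1)}=a]=2\Phi(a/\sigma)-1$ and hence $\E[YX^{(1)}]=\E[(2\Phi(X^{(1)}/\sigma)-1)X^{(1)}]$. The key estimate is that the integrand is pointwise small: for every $a\in[-1,1]$ one has $(2\Phi(a/\sigma)-1)\,a\le \frac{2a^2}{\sqrt{2\pi}\,\sigma}$. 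Indeed, $(2\Phi(a/\sigma)-1)a=|2\Phi(|a|/\sigma)-1|\cdot|a|$ (using the symmetry of $\Phi$), and the lower tail bound of Fact~\ref{fact:gaussian}, $\Pr_{\xi}(\xi\ge |a|/\sigma)\ge \frac12-\frac{|a|}{\sqrt{2\pi}\,\sigma}$, rearranges to $2\Phi(|a|/\sigma)-1\le \frac{2|a|}{\sqrt{2\pi}\,\sigma}$. Taking expectations and recalling $\E[(X^{(1)})^2]=1/d$ (this second moment was already computed in the proof of Lemma~\ref{lem:ub-k} via Fact~\ref{fact:density}), I get $\E[YX^{(1)}]\le \frac{2}{\sqrt{2\pi}\,\sigma}\,\E[(X^{(1)})^2]=\frac{2}{\sqrt{2\pi}\,d\sigma}$.

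Finally I would apply the Chernoff bound to the i.i.d.\ random variables $Z_1^{(1)},\dots,Z_m^{(1)}\in[-1,1]$, exactly as in Lemma~\ref{lem:lb-1}: with probability at least $1-\delta$, $v^{(1)}\le \E[YX^{(1)}]+\sqrt{\tfrac{1}{2m}\log\tfrac1\delta}\le \frac{2}{\sqrt{2\pi}\,d\sigma}+\sqrt{\tfrac{1}{2m}\log\tfrac1\delta}$, which is the claimed inequality.

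I do not anticipate a genuine obstacle: this is the upper-bound twin of Lemma~\ref{lem:lb-1} and is in fact slightly easier, since there is no need for the truncation to $|X^{(1)}|\ge 1/\sqrt d$ that appeared there (that device was only needed to \emph{lower} bound the mean cleanly). The one point requiring a little care is the pointwise bound $(2\Phi(a/\sigma)-1)a\le \frac{2a^2}{\sqrt{2\pi}\,\sigma}$ and its reliance on the correct direction of the tail inequality in Fact~\ref{fact:gaussian}; once that is in place, everything else is substitution of facts already established in the excerpt.
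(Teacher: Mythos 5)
Your proposal is correct and follows essentially the same route as the paper: upper-bound $\E[YX^{(1)}]$ pointwise via the Gaussian tail estimate $1-2\Pr_{\xi}(\xi\ge z/\sigma)\le \frac{2z}{\sqrt{2\pi}\sigma}$ from Fact~\ref{fact:gaussian}, integrate against the density of $X^{(1)}$ to get $\frac{2}{\sqrt{2\pi}d\sigma}$, and finish with the Chernoff bound. The only cosmetic difference is that you reuse $\E[(X^{(1)})^2]=1/d$ from Lemma~\ref{lem:ub-k} instead of re-evaluating the Beta-function integral, which is the same computation.
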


\begin{proof}
We first give an upper bound of $\E[YX^{(1)}]=2\int_{0}^{1}z\frac{(1-z^{2})^{\frac{d-3}{2}}}{B(\frac{d-1}{2},\frac{1}{2})}(1-2\Pr_{\xi\sim N(0,1)}(\xi\geq\frac{z}{\sigma}))dz$.
By Fact~\ref{fact:gaussian}, $1-2\Pr_{\xi\sim N(0,1)}(\xi\geq\frac{z}{\sigma})\leq\frac{2z}{\sqrt{2\pi}\sigma}$,
so we have

\begin{align*}
\E[YX^{(1)}] & \leq2\int_{0}^{1}z\frac{(1-z^{2})^{\frac{d-3}{2}}}{B(\frac{d-1}{2},\frac{1}{2})}\frac{2z}{\sqrt{2\pi}\sigma}dz\\
 & =\frac{4}{\sqrt{2\pi}\sigma B(\frac{d-1}{2},\frac{1}{2})}\int_{0}^{1}z^{2}(1-z^{2})^{\frac{d-3}{2}}dz\\
 & =\frac{2B(\frac{d-1}{2},\frac{3}{2})}{\sqrt{2\pi}\sigma B(\frac{d-1}{2},\frac{1}{2})}\\
 & =\frac{2}{\sqrt{2\pi}d\sigma}
\end{align*}

The conclusion follows by the Chernoff bound.
\end{proof}
Now we present the proofs for the propositions.
\begin{proof}
(of Proposition~\ref{prop:complexity}) Since $\left\Vert \hat{w}-\mu\right\Vert ^{2}=\left\Vert \hat{w}\right\Vert ^{2}+\left\Vert \mu\right\Vert ^{2}-2\mu^{\top}\hat{w}=2(1-\mu^{\top}\hat{w})=2(1-\frac{v^{(1)}}{\left\Vert v\right\Vert })$,
to prove $\left\Vert \hat{w}-\mu\right\Vert \leq\varepsilon$, it suffices
to show $\frac{v^{(1)}}{\left\Vert v\right\Vert }\geq1-\frac{\varepsilon^{2}}{2}$.
Note that $1-(\frac{v^{(1)}}{\left\Vert v\right\Vert })^{2}=\frac{1}{1+\frac{(v^{(1)})^{2}}{\sum_{k=2}^{d}(v^{(k)})^{2}}}$,
and $1-(1-\frac{\varepsilon^{2}}{2})^{2}=\varepsilon^{2}-\frac{\varepsilon^{4}}{4}\geq\frac{\varepsilon^{2}}{2}\geq\frac{1}{1+\frac{2}{\varepsilon^{2}}}$
, so it suffices to show $\frac{(v^{(1)})^{2}}{\sum_{k=2}^{d}(v^{(k)})^{2}}\geq\frac{2}{\varepsilon^{2}}$
with probability at least $1-\delta$.

Now, by Lemma~\ref{lem:ub-k} and \ref{lem:lb-1} and a union bound,
with probability at least $1-\delta$,
\[
\frac{(v^{(1)})^{2}}{\sum_{k=2}^{d}(v^{(k)})^{2}}\geq\frac{(\frac{1}{3\pi\sqrt{d}}\min(1,\frac{1}{\sigma\sqrt{d}})-\sqrt{\frac{1}{2m}\log\frac{d}{\delta}})^{2}}{4(d-1)\frac{2}{md}\log\frac{2d}{\delta}},
\]

which is at least $\frac{2}{\varepsilon^{2}}$ for our setting of $m=\frac{(15\pi)^{2}}{\varepsilon^{2}}d\max(1,d\hat{\sigma}^{2})\log\frac{2d}{\delta}$.
\end{proof}
\begin{proof}
(of Proposition~\ref{prop:length-lb}) By Lemma~\ref{lem:lb-1},
with probability at least $1-\delta$, $v^{(1)}=\frac{1}{m}\sum Z_{i}^{(1)}\geq\frac{1}{3\pi\sqrt{d}}\min(1,\frac{1}{\sigma\sqrt{d}})-\sqrt{\frac{1}{2m}\log\frac{1}{\delta}}\geq\frac{1}{12\sqrt{d}}\min(1,\frac{1}{\hat{\sigma}\sqrt{d}})$,
which implies $\left\Vert v\right\Vert \geq v^{(1)}\geq\frac{1}{12\sqrt{d}}\min(1,\frac{1}{\hat{\sigma}\sqrt{d}})$.
\end{proof}
\begin{proof}
(of Proposition~\ref{prop:length-ub}) By Lemma~\ref{lem:ub-k}
and \ref{lem:ub-1} and a union bound, with probability at least $1-\delta$,

\begin{align*}
\left\Vert v\right\Vert ^{2} & =\sum_{k=1}^{d}(v^{(k)})^{2}\\
 & \leq4(d-1)\frac{2}{md}\log\frac{2d}{\delta}+(\frac{2}{\sqrt{2\pi}d\sigma}+\sqrt{\frac{1}{2m}\log\frac{1}{\delta}})^{2}\\
 & \leq\frac{1}{148d^{2}\hat{\sigma}^{2}}.
\end{align*}
\end{proof}


\subsection{Noisy Labels for the Continuous Case}
\label{subsec:continuous}

In the continuous, the model extraction problem becomes a regression
problem. In~\cite{HS14} the authors consider passive linear regression
with squared loss and provide an algorithm that achieves nearly
optimal convergence rate $\E [X^\top\hat{w}-Y]-\E[X^\top
  w^\star-Y]=\tilde{O} \left(\frac{C}{n}\right)$ where the constant
$C$ depends on the covariance matrix of $X$ and the error of the
optimal linear model $w^\star$. In~\cite{SM14} the authors point out
that unlike in the classification case, the $O(\frac{1}{n})$ cannot be
improved by active learning, but it provides an algorithm under a
stream-based querying model (in fact, they assume the algorithm can
draw $X$ from any distribution, which can be implemented by rejection
sampling with stream-based querying model) that achieves a learning
rate with a better constant factor $C$. Authors in~\cite{CKNS15}
consider active learning for maximum likelihood estimation (MLE) under the
assumption that the model is well-specified ($P(Y|X)$ is given by a
model in the model class) and that the Fisher information matrix does
not depend on label $y$ (this assumption holds for linear regression
and generalized linear models). It shows that a two-stage algorithm
achieves a nearly optimal convergence rate.

The main difficulties of computationally efficient active learning for
classification arise because of two factors: (1) how to efficiently
find a classifier with the minimum classification error rate; (2) how
to select examples for labeling. For (1), it has been shown that
optimizing the classification error rate (0-1 loss) with noise is hard
in general, and computational efficient solutions with theoretical
guarantees are only known under some assumptions of the hypothesis
space and noise conditions (for
example~\cite{chen2017near,key-3,key-5}). For (2), most existing
active learning algorithms maintain a candidate set of classifiers
either explicitly~\cite{H07} or
implicitly~\cite{chen2017near,DHM07,key-5}, and the noise tolerance is
achieved by repetitive querying as in Proposition~\ref{prop} or a
carefully designed sampling schedule to guarantee that the candidate
set is ``correctly shrunk'' with high
probability~\cite{DHM07,H07,key-5}. For regression, most loss
functions (for example the squared error, negative log likelihood) are
convex, and thus can be optimized efficiently. The labeling strategies
in regression are also different: instead of maintaining candidate
sets, active regression algorithms~\cite{SM14, CKNS15} often first
find a good sampling distribution that optimize some statistics of the
covariance matrix and then draw labeled samples from this
distribution. Such strategies tolerates noise naturally and de-noising
strategies like repetitive querying are not necessary.

\subsection{Additional Results}

\subsubsection{Alternate Stopping Criterion}
\label{app:alternate}

We investigated if measuring the model's stability over $N$ iterations results in acceptable extraction attacks. Here, we define model stability as the oscillation between the approximation learned at iteration $i$ and at iteration $i+1$. Formally, stability can be defined as $\mathcal{S}_i = ||w_i - w_{i+1}||_2$. Our approach checks if $\mathcal{S}_i \leq \tau$ for $i=1,\cdots,N$ and terminates execution if the condition is satisfied. We observe that this approach fails for the algorithm proposed by Chen \etal \cite{chen2017near}, as the approximation produced at each iteration differs greatly from the approximation produced in the preceding iteration. The results for the algorithm proposed by Alabdulmohsin \etal \cite{alabdulmohsin2015efficient} can be found Table \ref{alt}.

\begin{table}[H]
\small\addtolength{\tabcolsep}{-5pt}
\begin{tabular}{lccccccc}
\toprule
{} &  \underline{\bf \footnotesize N=10} & {}                 & \underline{\bf \footnotesize N=15} & {}                 & \underline{\bf \footnotesize N=20}      & {}           & {\bf  \footnotesize Baseline}\\ 
{\bf \footnotesize Dataset} 	      	          & {\bf \footnotesize Queries}        & {\ \  \footnotesize $\hat{\varepsilon}$} & {\bf  \footnotesize Queries}        & {\ \  \footnotesize $\hat{\varepsilon}$} & {\bf  \footnotesize Queries}        & {\ \  \footnotesize $\hat{\varepsilon}$} & { \footnotesize $\varepsilon=0.001$}\\ 
\midrule
{\bf \footnotesize Breast Cancer} & 241 & 0.0047 & 247 & 0.0034 & 252 & 0.0031 & 300\\ 

{\bf \footnotesize Adult Income} & 117 & 0.0019 & 122 & 0.0015 & 127 & 0.0012 & 135\\ 

{\bf \footnotesize Digits} &  493 & 0.0077 & 498 & 0.0075 & 503 & 0.0073 & 700 \\ 

{\bf \footnotesize Wine} & 120 & 0.0016 & 125 & 0.0014 & 130 & 0.0012 & 135 \\ 
\bottomrule
\end{tabular}
\compactcaption{\footnotesize Model stability results in nominal savings at the expense of a small increase in geometric error ($\hat{\varepsilon}$). The trends are the same for other values of $\varepsilon$.}
\label{alt}
\end{table}

\subsubsection{A Direction For Defense?}
\label{app:defense}

Recall from earlier discussion that QS active learning algorithms are capable of generating points de novo. It is conceivable that these points are not generated from the distribution from which the training data is sampled from. To this end, we verified if these distributions are indeed different using the Hotelling's $T^2$ test, specifically for the algorithms proposed by Alabdulmohsin \etal \cite{alabdulmohsin2015efficient} and Chen \etal \cite{chen2017near} under the null hypothesis that the distributions are the same (refer Table \ref{app:tab-def} and Table \ref{app:tab-def2}). We observe that this QS active learning algorithm indeed produces points that are not from the underlying natural distribution. While discarding points that can not be sampled from the training distribution may seem as a tempting defense strategy, it is conceivable that certain real world tasks may query MLaaS providers with outlier points. Further analysis is required to determine how this strategy may effectively be used to defend against model extraction.

\begin{figure}[H]
	\centering
	\subfigure[Version Space Approximation]{\label{}
	\includegraphics[width=0.45\linewidth]{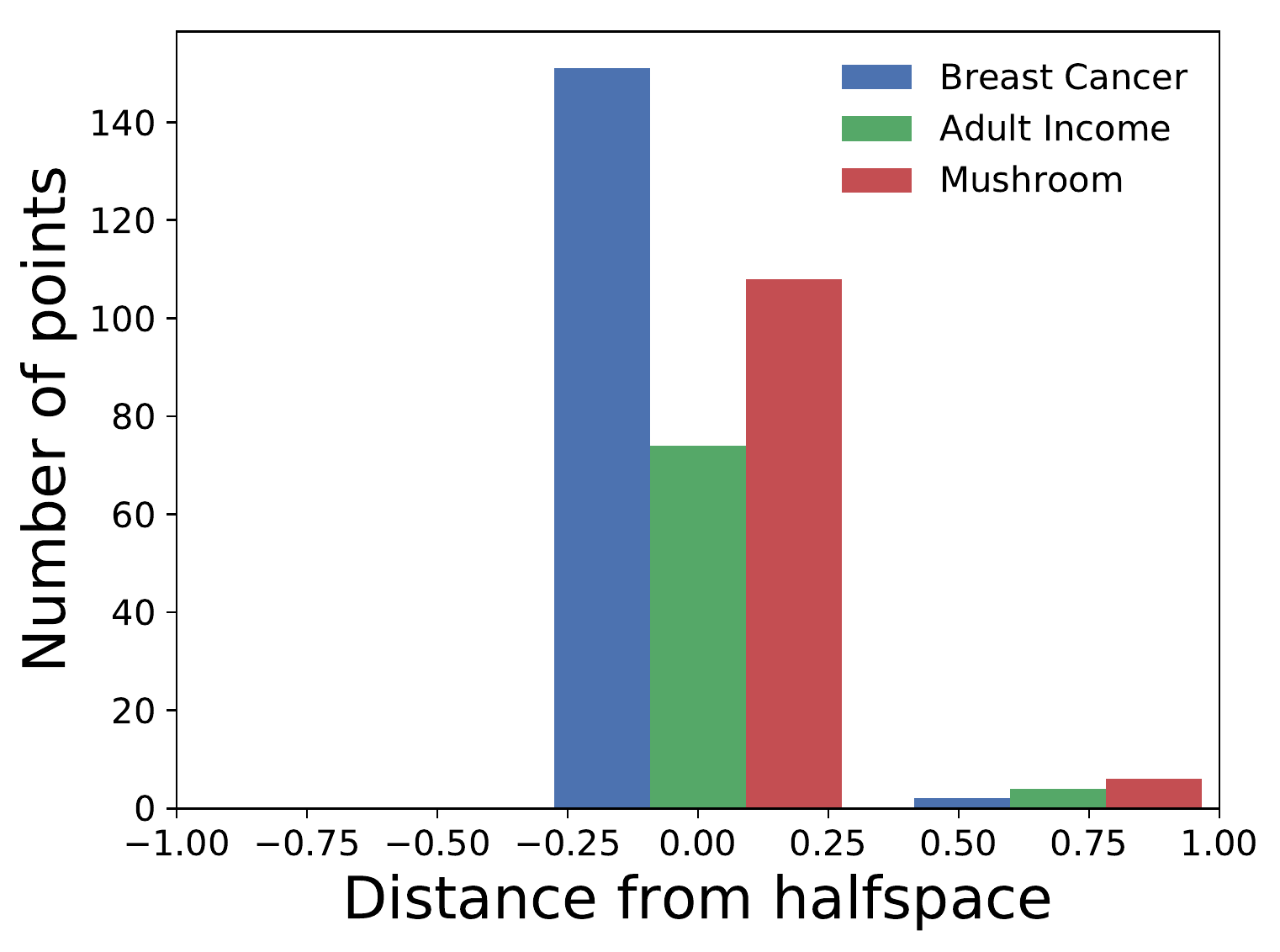}}
	\subfigure[$DC^2$]{\label{}
	\includegraphics[width=0.45\linewidth]{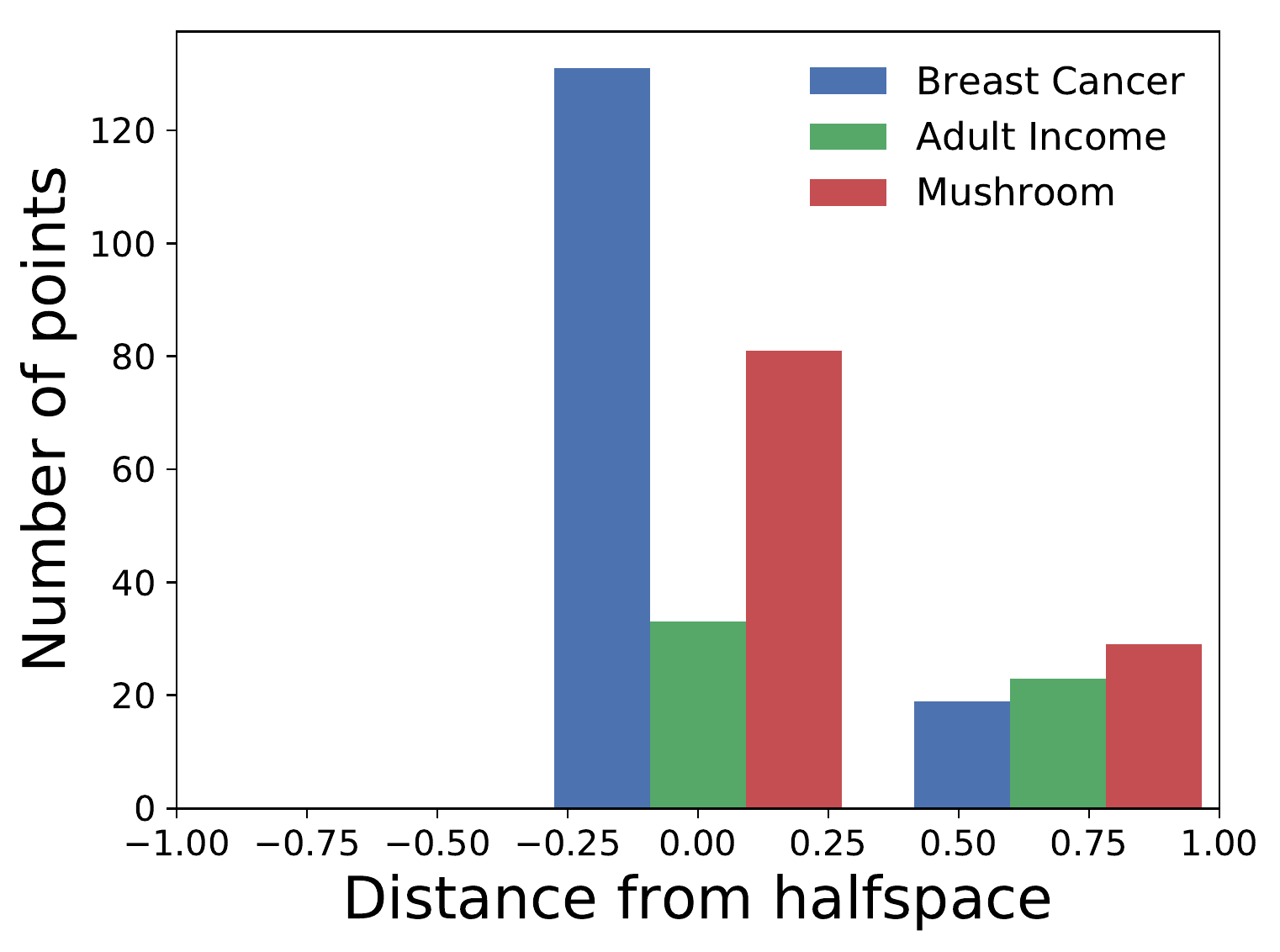}}
	\compactcaption{\footnotesize Distance of the instances synthesized by (a) version space approximation algorithm, and (b) dimension coupling algorithm from optimal halfspace.}
	\label{fig:distance}
\end{figure}

\begin{table}[H]
\small\addtolength{\tabcolsep}{-1pt}
\begin{tabular}{lcccc}
\toprule
{\bf Dataset}	  &{\bf t-value} & {\bf n-1} & {\bf p-value} & {\bf Reject Null?} \\
\midrule
		  {Breast Cancer}  & 14.24 & 198 & 3.70 $\times$ 10 $^{-32}$ & \cmark \\
		  {Adult Income} & 9.71 & 92 & 9.22 $\times$ 10 $^{-16}$ & \cmark\\
		  {Mushroom} & 22.16 & 599 & 5.92 $\times$ 10 $^{-80}$ & \cmark \\
		  \bottomrule
\end{tabular}
\compactcaption{\footnotesize Results of the Hotelling $T^2$ test for multivariate distributions, for n samples. It is observed that the data-points generated by the {\bf version space approximation} algorithm do not lie in the natural distribution underlined by samples from the training data.}
\label{app:tab-def} 
\end{table}

\begin{table}[]
\small\addtolength{\tabcolsep}{-1pt}
\begin{tabular}{lcccc}
\toprule
{\bf Dataset}	  &{\bf t-value} & {\bf n-1} & {\bf p-value} & {\bf Reject Null?} \\
\midrule
		  {Breast Cancer}  & 319.27 & 322 &  0 & \cmark \\
		  {Adult Income} & 467.43 & 133 & 9.64 $\times$ 10 $^{-216}$ & \cmark\\
		  {Mushroom} & 65.74 & 222 & 1.58 $\times$ 10 $^{-147}$ & \cmark \\
		  \bottomrule
\end{tabular}
\compactcaption{\footnotesize Results of the Hotelling $T^2$ test for multivariate distributions, for n samples. It is observed that the data-points generated by the {\bf DC$^2$ algorithm} do not lie in the natural distribution underlined by samples from the training data.}
\label{app:tab-def2} 
\end{table}

\end{document}